\def\eqref#1{equation~\ref{#1}}
\def\1{\bm{1}}
\DeclareMathAlphabet{\mathsfit}{\encodingdefault}{\sfdefault}{m}{sl}
\SetMathAlphabet{\mathsfit}{bold}{\encodingdefault}{\sfdefault}{bx}{n}
\def\gN{{\mathcal{N}}}
\def\gW{{\mathcal{W}}}
\newcommand{\E}{\mathbb{E}}
\newcommand{\R}{\mathbb{R}}
\DeclareMathOperator{\sign}{sign}
\newcommand{\N}{\mathbb{N}_+}
\newcommand{\e}{\varepsilon}
\newcommand{\oxT}{\overline{x}_T}
\newcommand{\oyT}{\overline{y}_T}
\newcommand{\cN}{\mathcal{N}}
\DeclareMathOperator{\Bin}{Bin}
\newtheorem{defn}{Definition}
\newtheorem{theorem}{Theorem}
\newtheorem{lemma}{Lemma}
\newtheorem{cor}{Corollary}
\renewcommand{\P}{\text{I\kern-0.15em P}}
\newcommand{\cO}{\mathcal{O}}
\title{Random Spiking Neural Networks are Stable and Spectrally Simple}
\author{Massimiliano Datres}
\author{Ernesto Araya \thanks{Equal contribution.} \\
Department of Mathematics, \\
Ludwig-Maximilians-Universität München\\
\texttt{araya@math.lmu.de}
\And
Massimiliano Datres \footnotemark[1]  \\
Department of Mathematics,\\
Ludwig-Maximilians-Universität München\\
\texttt{datres@math.lmu.de} 
\AND
Gitta Kutyniok \\
Department of Mathematics,\\
Ludwig-Maximilians-Universität München\\
Munich Center for Machine Learning (MCML)\\
University of Troms\o\\
DLR-German Aerospace Center \\
\texttt{kutyniok@math.lmu.de}
}
\begin{document}

\maketitle

\begin{abstract}
Spiking neural networks (SNNs) are a promising paradigm for energy-efficient computation, yet their theoretical foundations—especially regarding stability and robustness—remain limited compared to artificial neural networks. In this work, we study discrete-time leaky integrate-and-fire (LIF) SNNs through the lens of Boolean function analysis. We focus on noise sensitivity and stability in classification tasks, quantifying how input perturbations affect outputs. Our main result shows that wide LIF-SNN classifiers are stable on average, a property explained by the concentration of their Fourier spectrum on low-frequency components. Motivated by this, we introduce the notion of \emph{spectral simplicity}, which formalizes simplicity in terms of Fourier spectrum concentration and connects our analysis to the \emph{simplicity bias} observed in deep networks. Within this framework, we show that random LIF-SNNs are biased toward simple functions. Experiments on trained networks confirm that these stability properties persist in practice. Together, these results provide new insights into the stability and robustness properties of SNNs.%In the discrete-time leaky integrate-and-fire (LIF) model, spiking dynamics can be interpreted as iterative compositions of Boolean functions, enabling the use of tools from Boolean function analysis. 
%We focus on noise sensitivity and stability of binary static data classification, quantifying the effect of input perturbations on outputs.
%\ea{Is this true or they can be more instable?}
%Motivated by this, we introduce the notion of \emph{spectral simplicity}, which connects our analysis to the \emph{simplicity bias} observed in deep networks.
%More specifically, we prove that wide discrete-time SNN classifiers are stable on average. This stability is linked to their Fourier spectrum concentration, where low-frequency components dominate. We introduce the notion of \emph{spectral simplicity}, which connects our analysis to the \emph{simplicity bias} observed in deep networks, providing insights into generalization.
\end{abstract}

\section{Introduction}\label{sec:introduction}
%

% Here goes the story....\ea{Some ideas:
% \begin{itemize}
%     \item Why SNN? Potential energy benefits. Theoretical properties not yet developed. 
%     \item Why this particular model of SNN? Is this the most widespread + implemented in current neuromorphic hardware. 
%     \item Why study stability? Provides a measure how robust against noise are SNN classifiers. This gives some geometric picture of how decision boundaries looks, how 'simple'(in the spectral sense to be defined) functions are. The probability distribution of the functions generated by random deep neural networks is a good choice for the prior probability distribution in the PAC-Bayesian generalization bounds. See "Random deep neural networks are biased towards simple functions"
%     \item Why random networks? Point to a series of works that use the initialization setting as starting point to gain theoretical insight of the model's properties. Maybe one can argue that when networks are wide enough, the model parameters tend to change less during training. 
%     \item Why this notion of stability? Being this SNN model naturally expressed as time evolving network of boolean units, it makes sense to use the well-known notions for Boolean functions. Maybe add a remark that there might other notions of stability, but we focus on this one. In addition, it has broad connections with the other geometric properties we are interested on.
% \end{itemize}
% }

%
%Very general introductory paragraph
Artificial Neural Networks (ANNs) have become central to modern machine learning, but their rapid growth in scale demands increasingly unsustainable computational and energy resources \citep{DL_diminishing_returns_2021}. This challenge is especially acute for low-power devices, where efficiency is critical. Spiking Neural Networks (SNNs), inspired by biological neurons and operating through event-driven spikes, offer a promising energy-efficient alternative \citep{roy2019towards,davies2018loihi}. Their sparse communication and compatibility with neuromorphic hardware position them as strong candidates for sustainable machine learning \citep{Mehonic2024NeuromorphicRoadmap,fono2025sustainableaimathematicalfoundations}.
%

%
%Lack of theory, specially concerning Stability/Robustness
Despite this potential, the theoretical understanding of SNNs remains limited compared to that of classical ANNs. While substantial progress has been made on designing training algorithms \citep{Lessons_from_DL_eshraghian_2023} and hardware implementations \citep{davies2018loihi,indiveri2015memory}, core theoretical properties—such as stability, robustness, and generalization—are still largely underexplored. This gap limits our ability to rigorously assess the strengths and limitations of SNNs in practice.

%
%Notions of Stability+ Boolean perspective
Among these properties, stability is crucial for designing networks resilient to input perturbations and adversarial attacks, yet no single definition exists. It can refer to algorithmic stability of learning algorithms \citep{Algo_stability_1}, dynamical systems stability (naturally aligned with the temporal dynamics of spiking neurons) \citep{Robust_SNN}, or—as we consider here—to sensitivity to input changes. Intuitively, a stable network should be resilient to small perturbations in inputs or parameters, which is critical for both reliable inference and efficient learning. Many SNN models, including the discrete-time Leaky Integrate-and-Fire (LIF) neuron, can be viewed as iterative compositions of Boolean functions, since neurons emit spikes only when their membrane potential crosses a threshold (a binary event), motivating the study of stability via Boolean function analysis \citep{Odonnell}. While prior work has examined SNN stability from dynamical systems \citep{Robust_SNN} or neuroscience perspectives \citep{Robustness_SNN_neuro}, to our knowledge this is the first study to apply Boolean analysis to characterize and investigate stability in SNNs.

%
%Simplicity bias
The Boolean perspective on SNNs suggests a natural link to the notion of simplicity in neural networks: if a network is stable, small input perturbations rarely change its output, hinting at a bias toward “simple” input-output mappings. In the SNN setting, this can be formalized via the predominance of low-frequency components in the Fourier–Walsh expansion of a classifier. Motivated by this, we introduce \emph{spectral simplicity}, which quantifies the concentration of the Fourier spectrum on low frequencies. This notion connects naturally with other notions of simplicity studied in the context of \emph{simplicity bias}, proposed as a lens to understand generalization in deep networks. For example, prior work has shown that random deep ANNs tend to implement “simple” functions \citep{depalma2019random}, formalized as a large average Hamming distance to the nearest input with a different predicted class \citep{VallePerez2019}. Spectral simplicity represents a weaker notion, but one that arises intrinsically in spiking networks.

To make these questions precise, we focus on the discrete-time LIF model. This model combines theoretical simplicity with widespread adoption in practice, serving as the foundation of software frameworks such as SNN-Torch and being implemented in digital neuromorphic platforms including Loihi 2 and SpiNNaker 2 \citep{Loihi2_2021,spinnaker_async_ML_2024}. In this work, we restrict attention to networks at initialization. This choice reflects the still-developing theory of SNN training and our goal of isolating stability properties intrinsic to the model, without confounding effects from learning dynamics. Moreover, random networks have been shown to serve as useful priors in PAC-Bayes generalization bounds \citep{VallePerez2019}. Understanding the effect of training dynamics is an exciting question; here, we explore it experimentally and leave a theoretical treatment for future work.
%This choice reflects both the still-developing and less canonical theory of training in SNNs, and our aim to isolate stability properties intrinsic to the model, without the confounding effects of learning dynamics. In addition, random networks have being proven useful as priors in the context PAC-bayes generalization bounds \citep{depalma2019random}.

%
%Contributions
\paragraph{Contributions.} 
We summarize our main contributions as follows:
\begin{itemize}
    \item We derive quantitative bounds on the stability of discrete-time LIF SNN classifiers, showing that they are stable on average with respect to random parameter initialization. In particular, when input sequences lie in the binary cube $\{-1,1\}^n$, the classifier output remains unchanged under perturbations of up to $\cO(\sqrt{n})$ coordinates, with high probability for $n$ large enough. The bounds depend explicitly on the model’s hyperparameters and reveal that LIF SNNs exhibit stability properties comparable to other (time-independent) Boolean networks~\citep{jonasson2023noise}.
    
    %\item We introduce the notion of \emph{spectral simplicity}, based on the Fourier decomposition of discrete-time LIF SNN classifiers, and prove that random SNNs are biased toward (spectrally) simple functions, which means that most of the spectral weight is concentrated in low-frequencies.
    \item We introduce the notion of \emph{spectral simplicity}, defined via the Fourier decomposition of discrete-time LIF SNN classifiers for static data. We prove that random SNNs are biased toward spectrally simple functions, i.e., those whose Fourier spectrum is predominantly concentrated on low-frequency components.

    \item We complement these theoretical results with numerical experiments, investigating in particular the effect of training on stability. Our experiments reveal that both shallow and deep SNNs are noise stable, and that training tends to increase their stability on average.
    \end{itemize}
%

%
%Related Works
\subsection{Related works}
\paragraph{Stability of ANNs and SNNs.}
As previously noted, stability in SNNs can be approached from multiple perspectives. Each neuron follows a (typically non-linear) dynamical system, raising the natural question of how input perturbations affect its output. In the ANN and Neural ODE literature, Lyapunov-based analyses \citep{jimenez2022lyanet,kang2021stable,rahnama2019robust} establish robustness via bounds on how perturbations propagate through the network.
The discrete-time LIF model's robustness to input perturbations has been analyzed from a dynamical systems perspective by \cite{Robust_SNN}, who derived bounds on output spike sequence variations under perturbed inputs. Our work differs in three key ways: (i) we study stability at the classifier level, where predictions may remain unchanged even if spike trains differ; (ii) we consider the reset-by-subtraction mechanism, which introduces additional complexity compared to the reset-to-zero simplification in \citep{Robust_SNN}; and (iii) our analysis extends beyond single neurons to multi-neuron networks. Methodologically, our approach is different building on Boolean function analysis \citep{Odonnell}, aligning more closely with \cite{jonasson2023noise}, but introduces new challenges stemming from reset dynamics and temporal evolution, which create nontrivial probabilistic dependencies. 
\paragraph{Simplicity bias.}
Simplicity bias has been proposed as a mechanism underlying the generalization of deep networks, both in trained models with SGD \citep{arpit2017closer,nakkiran2019sgd,VallePerez2019} and in random ANNs \citep{depalma2019random}. The central idea is that learning favors simple functions, though the precise definition of simplicity varies, and pitfalls have been noted in \citep{pitfalls_SB2020}. We introduce \emph{spectral simplicity}, based on the Fourier–Walsh decomposition of Boolean functions, as a complementary notion. While our focus on random networks parallels \citep{depalma2019random}, the techniques differ substantially: their analysis relies on Gaussian process arguments, whose extension to SNNs is unclear, whereas we draw on Boolean function analysis, which adapts naturally to our setting. While a weaker measure of simplicity, spectral simplicity arises organically in spiking networks.%Spectral simplicity represents a weaker notion, but one that emerges intrinsically in spiking frameworks.

%
% \ea{Related works:
% \begin{itemize}
% \item SNN theory: SNN as boolean functions, early work by W.Maass. SNN generalization by Maass. Lack of results regarding stability and robustness. 
% \item Expressivity but not stability $\longrightarrow$ "On computing boolean functions by spiking neurons", " A comparison of the computational power of sigmoid and boolean threshold circuit"
% \item Boolean stability: Classic results on the stability of Threshold functions (Peres' theorem and consequences). Stability for Boolean networks (Swedish guys). Explain briefly the challenges in our setting
% \item Find another topic here maybe....
% \end{itemize}
% }

\subsection{Notation}\label{sec:notation}
Give a positive integer $a$, we denote with $[a]$ the sets $\{1, \dots, a\}$.
% Let us fix $\mathbf{x}\in \R^n$ and consider $I \subseteq [n]$, we define $x^I \in \R^n$ as: 
% %
% \[
% x^I_i = \begin{cases}
%     x_i & \text{if } i\in I\\
%     0 & \text{if } i \not\in I
% \end{cases} \, .
% \]
% %
%\ea{ADD asymptotic notation + Gaussian definition}
Given $x, y \in \{-1, 1\}^n$, we define the Hamming distance between $x$ and $y$ as $d_H(x, y) := \big|\Big\{i\in[n]: x_i \ne y_i\Big\}\big| = \frac{1}{2} \sum_{i=1}^n |x_i - y_i|$.
We use $\sign(x) \in \{-1,1\}$ for the sign function, i.e.\ $\sign(x)=1$ if $x\ge 0$ and $-1$ otherwise.
% We define the sign function as $\sign:\R\to \{-1,1\}$ as follows:
% %
% \[
% \sign(x) =
% \begin{cases}
% 1 & \text{if } x \ge 0 \\
% -1 & \text{if } x < 0 \\
% \end{cases} \, .
% \]
%
We write $\cN(u,\Sigma)$ for the Gaussian distribution with mean $u$ and covariance $\Sigma$, $\operatorname{Unif}(\{-1,1\}^n)$ for the uniform distribution on the hypercube, and $\Bin(n,p)$ for the Binomial distribution with parameters $(n,p)$. Similarly, $\operatorname{Rad}(\kappa)$ denotes a vector with i.i.d.\ Rademacher coordinates with parameter $\kappa$(dimension clear from context). For asymptotics, we use the standard asymptotic notation: $\cO(\cdot)$, $o(\cdot)$ and $\omega(\cdot)$.

\section{The Discrete-Time LIF Model}\label{sec:snn}
%
%Take $\cX\subset \{-1,1\}^{n_0}$ non-empty sets, and denote by $(X, Y)\in \cX\times\{-1,1\}^{n_{\operatorname{out}}}$ a pair of random vectors with (unknown) joint discrete probability distribution $p=p(x,y)$. Let $(X_1, Y_1), \dots, (X_N, Y_N)$ be i.i.d. copies of $(X, Y)$. A dataset $\D := \{(x_i, y_i):\, i=1,\dots, N\}$ is understood as a realization of the $N$ random pairs considered before. 
%
We aim to study the stability of SNNs in classification tasks. Specifically, we consider SNN models constructed as compositions of \textit{sign leaky integrate-and-fire} (sLIF) neurons. Each neuron acts as an information-processing unit that maps time series inputs\footnote{While our theory focuses on binary data, the definition extends to real-valued inputs.} $(x_t)_{t \in [T]} \in (\{-1,1\}^{n})^T$ to binary spike sequences $(s_t)_{t \in [T]} \in \{-1,1\}^T$, based on a neuronal dynamic. The dynamics of a single neuron are defined as follows.

\begin{defn}[sLIF neuron]\label{def:SNN}
Let $T\ge 1$ be an integer, $\beta\in[0,1]$, $\theta\in (0,\infty)$. We define the \textit{sign leaky integrate-and-fire} (sLIF) neuron, with input $(x_t)_{t\in[T]}~\in~(\{-1,1\}^{n})^T$ and output $(s_t)_{t\in[T]} \in \big(\{-1,1\}\big)^T$, as a parametric computational unit that evolves over discrete time steps $t\in[T]$ accordingly to the following recursive dynamic
\begin{align}\label{eq:singleneuron}
    \begin{cases}
        u_t = \beta u_{t-1} + w^\top x_t - \frac{\theta}{2}\Big(s_{t-1} +1\Big)  \\
        s_t = \sign \Big(u_t - \theta\Big) \\
        u_0 = 0 
    \end{cases} \, , 
\end{align}
where $(u_t)_{t\in[T]} \in \big([0, \infty)\big)^T$ is the sequence of membrane potentials and $w\in \R^{n}$ are the model's weights. Equivalently, this can be regarded as a function mapping $(x_t)_{t\in[T]}~\to~(s_t)_{t\in[T]}$. To make the dependence on inputs and parameters explicit, we occasionally use the notation $s_t\left((x_{k})_{k\in[t]},w\right)$. %indicates whether the membrane potential exceeds $\theta$ at time $t$. 
\end{defn}
Each neuron processes an input sequence $(x_t)_{t\in[T]}$ through its membrane potentials $(u_t)_{t\in[T]}$, 
which evolve according to an autoregressive decay dynamics over the time horizon $T$ 
(referred to as the \emph{latency} of the model). The \emph{leak parameter} $\beta\in[0,1]$ controls this decay, specifying the fraction of potential retained per time step. Whenever the membrane potential exceeds the activation threshold $\theta>0$ at some time $t'$, the neuron emits a spike, recorded as $s_{t'}=1$ in the spike train $(s_t)_{t\in[T]}$, and the potential is reduced by $\theta$ (\emph{reset by subtraction}). Weights are initialized as $w \sim \mathcal{N}(0,I_n/n)$, ensuring $w^\top x = O(1)$ with high probability for $x \in \{-1,1\}^n$, thereby avoiding degenerate regimes of vanishing or overly frequent firing. For further background on the LIF model, see \citep{gerstner2002spiking}. In short, we use the term \emph{leaky integrate-and-fire (LIF) neuron} for the variant with a Heaviside step function and reset rule $v_t \mapsto v_t - \theta s_{t-1}$, and \emph{integrate-and-fire (IF)} when the leakage is absent ($\beta = 1$); networks composed of such units are referred to as LIF and IF SNNs, respectively.
% Weights are initialized as $w\sim\mathcal{N}(0,I_n/n)$, ensuring $w^\top x=O(1)$ with high probability for $x\in\{-1,1\}^n$, thereby avoiding degenerate regimes of vanishing or overly frequent firing. For further background on the LIF model, see \citep{gerstner2002spiking}.
%
% 

\paragraph{Networks of spiking neurons.} In the  sLIF SNNs considered here, multiple sLIF neurons are interconnected through weighted synapses. The spike train generated by each neuron can serve as input to other neurons, and the overall network dynamics evolve over time, akin to a recurrent network. In what follows, we focus on SNNs composed of fully connected sLIF neurons, arranged in layers, as formalized in the next definition.

\begin{defn}[sLIF SNN] \label{def:LSNN} 
%\ea{CHANGE ACCORDINGLY TO DEF 1}
Fix positive integers $L,T,n_0,\dots,n_L\geq 1$, $\beta\in[0,1]$, $\theta\in[0,\infty)$, and let the input sequence be $(x_t)_{t \in [T]} \in (\{-1,1\}^{n_0})^T$.  
An \textit{$L$-layer sLIF neural network} with latency $T$ and layer widths $n_1,\dots,n_L$ is defined as the Boolean dynamical system
\begin{equation}
    \begin{cases}
        u^{(l)}_t = \beta u^{(l)}_{t-1} + W^{(l)} s^{(j-1)}_t - \tfrac{\theta}{2}\big(s^{(l)}_{t-1} + 1\big), \\[6pt]
        s^{(l)}_t = \sign\big(u^{(l)}_t - \theta\big), \\[6pt]
        u^{(l)}_0 = 0, \\[6pt]
        s^{(0)}_t = x_t,
    \end{cases}
    \label{liaf}
\end{equation}
where, for each layer $l \in [L]$, $W^{(l)} \in \R^{n_l \times n_{l-1}}$ denotes the weight matrix, $(u^{(l)}_t)_{t \in [T]}$ the membrane potential sequence, and $(s^{(l)}_t)_{t \in [T]}$ is the (output) the spike sequence.  
We collect all parameters into a single vector 
$
    W = \operatorname{vec}\big(W^{(1)}, \dots, W^{(L)}\big) \in \R^d,
$
where $d = \sum_{l=1}^L n_l n_{l-1}$.
\end{defn}

Given a $L$-layers sLIF SNN as in Definition \ref{def:LSNN}, a widely used choice of classifier in SNNs (see e.g., \citep{digit_recog_SNN}) is based on spike counts at the output layer, namely  
\begin{equation}\label{eq:SNN_classifier}
    f^{L,T}\!\left((x_t)_{t\in[T]}, W\right) \;:=\; \arg \max_{i\in[n_L]} \, \sum_{t=1}^T s^{(L)}_{t,i}\!\left((x_t)_{t\in[T]}, W\right),
\end{equation}
where the predicted class corresponds to the neuron in the final layer with the largest total spike count. Here, $s^{(L)}_{t,i}$ is the $i$-th coordinate of the spiking sequence $s^{(L)}_{t}$.
\paragraph{Assumptions.} 
Throughout this work, we assume $n_0 = \dots = n_{L-1} = n$, with $n_L$ equal to the number of classes. For simplicity, we focus on a variant of the sLIF model \eqref{def:LSNN} with $\beta = 1$. Our analysis allows dynamic input sequences $(x_t)_{t\in[T]}$, though some results only apply to static inputs, interpreted as repeated presentations of the same sample over time. Such constant input encoding is commonly used in practice for time-static datasets such as MNIST or CIFAR-10 \citep{DIET-SNN_direct_input_encoding_2023,direct_input_2_rueckauer_2017}. Finally, the model parameters are initialized as
\[
W_i \overset{i.i.d.}{\sim} \cN(0,1/ n), \quad i \in [d].
\]

% \paragraph{Assumptions.} Throughout this work, we assume $n_0 = n_1 = \dots = n_{L-1} = n$, while $n_L$ equals the number of classes in the classification task. We focus for simplicity on a special case of the sLIF model \eqref{def:LSNN} obtained by setting $\beta = 1$. We allow dynamic input $(x_t)_{t\in[T]}$ sequences, but some results are valid for static data, i.e., $x_1=\ldots=x_T$. For static data, a temporal dimension is introduced by repeatedly presenting the same sample at every time step, so that the encoding scheme is given by
% \[
%     E(x) := (x, x, \ldots, x) \in (\{-1,1\}^n)^T .
% \]
% This encoding scheme, known as \emph{constant input encoding}, is widely used in practice for time-static datasets\footnote{In most applications the data lies in $\R^{n}$, rather than the simplified case $\{-1,1\}^{n}$ considered here.}, such as MNIST or CIFAR-10 \citep{DIET-SNN_direct_input_encoding_2023,direct_input_2_rueckauer_2017}. Finally, we initialize the model parameters as
% %
% $   W_i \overset{i.i.d.}{\sim} \cN(0,1/\sqrt n) \quad \text{for all } i \in [d].$

% \[
%     W_i \overset{i.i.d.}{\sim} \cN(0,1/\sqrt n) \quad \text{for all } i \in [d].
% \]

%
%\ea{Stress that all the assumptions are reasonable but the only complicated one is the constant encoding.}

%
\section{Noise sensitivity of boolean functions}\label{sec:stability}

Within this framework, sLIF neurons can be viewed as compositions of Boolean functions, allowing stability analysis via Boolean function theory. We recall the basic definitions here and refer the reader to \citep{Odonnell} for a comprehensive treatment. Additionally, we introduce \emph{spectral simplicity}, which is central to the statement of our main result.

\paragraph{Noise sensitivity and stability.}  
The stability of a Boolean function is classically quantified by its \emph{noise sensitivity}. For $f:\{-1,1\}^n \to \{-1,1\}$ and noise rate $\nu \in [0,1]$, define
\[
\mathbf{NS}_\nu(f) := \P_{x,\xi}[f(x) \neq f(x \odot \xi)],
\]
where $x\sim \mathrm{Unif}(\{-1,1\}^n)$ and $\xi=(\xi_1,\dots,\xi_n)$ has i.i.d.\ entries $\xi_i\sim\mathrm{Rad}(1-\nu)$.  
Equivalently, the \emph{noise stability} is
\[
\mathbf{Stab}_{1-2\nu}(f) := \E_{x,\xi}[f(x)f(x\odot\xi)] = 1 - 2\,\mathbf{NS}_\nu(f),
\]
capturing the probability that $f$ preserves its value under input perturbations. Given this equivalence, we will use noise sensitivity in the sequel.

\begin{defn}[Expected noise sensitivity] \label{def:ENS}
For a parametric family $\{f_w\}_{w\in\gW}$, a probability measure $\mu$ on $\gW$, and $x,\xi$ distributed as above, define
\[
\mathbf{ENS}_\nu(\{f_w\}_{w\sim \mu}) := \P_{w\sim\mu, x,\xi}[f_w(x) \neq f_w(x \odot \xi)].
\]
\end{defn}
\paragraph{Fourier analysis and spectral concentration.}  
Every $f:\{-1,1\}^n\to\R$ admits a unique Fourier–Walsh expansion \citep[Thm.~1.1]{Odonnell}:
\[
f(x)=\sum_{S\subseteq[n]} \hat f(S)\,\chi_S(x),\quad \chi_S(x)=\prod_{i\in S} x_i.
\]
Low-degree terms ($|S|$ small) correspond to \emph{low frequencies}, and high-degree terms to \emph{high frequencies}. A constant function, for instance, has spectrum supported on $\emptyset$.
The notion of \emph{spectrum concentration} formalizes when a function is “simple”: most of its Fourier weight lies on low-degree terms, i.e., subsets of size $o(n)$. Formally \citep[Def.~3.1]{Odonnell}, $f$ is $\epsilon$-concentrated up to degree $k$ if
\[
\sum_{S:|S|>k} \hat f(S)^2 \le \epsilon.
\]

% A central tool in the analysis of Boolean functions is the Fourier expansion.  
% For any function $f:\{-1,1\}^{n}\rightarrow \R$, there exists a unique decomposition (see \citep[Thm.~1.1]{Odonnell}) of the form
% \[
% f(x)=\sum_{S\subseteq [n]} \hat{f}(S)\,\chi_S(x),
% \]
% where $\chi_S(x)=\prod_{i\in S}x_i$ (with the convention $\chi_{\emptyset}(x)=1$), and $\{\hat{f}(S)\}_{S\subseteq [n]}$ are the Fourier coefficients, also called the \emph{Fourier spectrum} of $f$. Analogous to classical Fourier analysis, subsets $S$ of small cardinality correspond to \emph{low-frequency} terms, while larger subsets correspond to \emph{high-frequency} terms. For example, a constant function can be written as $f(x)=a\,\chi_{\emptyset}(x)$ with $a\in\mathbb{R}$, having a single nonzero Fourier coefficient at $\emptyset$.

% The notion of \emph{spectrum concentration} provides a way to formalize which Boolean functions can be regarded as “simple,” namely those whose Fourier weight is mostly carried by low-frequency terms. More formally, we say (see \citep[Def.3.1]{Odonnell}) that the spectrum of $f:~\{-1,1\}^{n}\rightarrow~ \R$ is $\epsilon$-concentrated up to degree $k$ if 
% %
% \[
% \sum_{\substack{S\subset [n]\\ |S|>k}}\hat{f}(S)^2\leq \epsilon.
% \]
% %
\begin{defn}[Expected spectrum concentration] \label{def:expected_spec_con}
Given a parametric family of functions  $\{f_{\theta}:~\{-1,1\}^{n}~\rightarrow \R\}_{w \in \gW}$, and a probability measure $\mu$ in $\gW$, we say that $\{f_w\}_{w\in\gW}$ has, in expectation under $\mu$, spectrum $\epsilon$-concentrated up-to degree $k$ if 
\[
\E_{w\sim \mu}\left[\sum_{\substack{S\subseteq [n]\\|S|>k}}\hat{f}^2_{w}(S)\right]\leq \epsilon.
\]

\end{defn}
A key connection between noise stability and spectral concentration is given by \citep[Prop.~3.3]{Odonnell}, which states that for a Boolean function $f$, if we set $\epsilon = 3\,\operatorname{\bf{NS}}_\nu(f)$, then the spectrum of $f$ is $\epsilon$-concentrated up to degree $1/\nu$. This result extends naturally to parametric families of functions by linearity, replacing $\operatorname{\bf{NS}}_\nu(f)$ with $\operatorname{\bf{ENS}}_\nu(f)$ and spectral concentration with its expected counterpart (see Lemma \ref{lem:expec_degree}).
\paragraph{Other notions of simplicity.}  
Alternative notions of simplicity have been proposed in the simplicity bias literature~\citep{VallePerez2019,depalma2019random}. Closest to our setting is the definition of \citet{depalma2019random}, who study, for a classifier $f:\{-1,1\}^n \to \{-1,1\}$ and $x \in \{-1,1\}^n$, the quantity  
\begin{equation}\label{eq:N_h}
    N_h(x;f) := \bigl|\{\,y : d_H(x,y)=h,\; f(x)\neq f(y)\,\}\bigr|,
\end{equation}
which counts $h$-bit perturbations that flip the label. They show that its expected asymptotic behavior of depends on the activation function; for ReLU networks $\E_{x\sim\operatorname{Unif}(\{-1,1\}^n)}[N_h(x,f)]\to 0$, implying that the average Hamming distance to the nearest input with a different class is $\cO(\tfrac{\sqrt{n}}{\log n})$. To connect this notion with noise sensitivity, note that for any parametric family of Boolean functions $\{f_w\}_{w\in\gW}$,  
\begin{equation*}%\label{eq:ENS_N_h}
\operatorname{\mathbf{ENS}}_{\nu}\left(\{f_w\}_{w\sim\mu}\right) \;=\; \sum_{h=1}^n \E_{w\sim\mu,x \sim \mathrm{Unif}}\!\left[ N_h(x;f_w) \right] \, \nu^{h}(1-\nu)^{\,n-h}.
\end{equation*}
In principle, one could recover $\E[ N_h(x;f_w)]$ by inverting the previous relation, but this would require a precise characterization of $\operatorname{\bf{ENS}}_\nu$. In practice, only bounds on this quantity are typically attainable, as we show next.

%\ea{Rewrite the previous with emphasis to the noise stability instead of sensitivity.}

%
%
\section{Noise stability of SNN classifiers}\label{sec:main_results}

In this section, we apply the Boolean function analysis framework from Section \ref{sec:stability} to quantify the stability of discrete-time LIF-SNN classifiers. Recalling the SNN classifier definition in \eqref{eq:SNN_classifier} and our main assumptions from Section \ref{sec:snn}, we begin with the single neuron case.

\paragraph{Single neuron stability.}Under Definition \ref{def:SNN}%and the static input assumption $x_1=x_2=\ldots=x_T=x$
, each output $s_t$ in the sequence $(s_t)_{t\in[T]}$ defines a Boolean function $s_t:\{-1,1\}^n\to \{-1,1\}$. For any $t\in[T]$ and input sequences $(x_t)_{t\in[T]}$ and $(y_t)_{t\in[T]}$ with fixed Hamming distance, we bound the probability (over random initialization) that $s_t((x_k)_{k\in[t]},w) \neq s_t((y_k)_{k\in[t]},w)$.%For any $t\in [T]$ and a given input sequence pair $(x_t)_{t\in[T]},(y_t)_{t\in [T]}$ with fixed Hamming distances, we upper bound the probability over random initialization that $s_t((x_k)_{k\in[t]})\neq s_t((y_k)_{k\in[t]})$.

\begin{theorem}\label{thm:singleneuronbound}
Consider a sLIF neuron, with latency $T\in\N$, threshold $\theta\in(0,\infty)$, with random parameter vector $w {\sim}\cN(0, I_n/n)$. We consider two input sequences $x_1, \dots, x_T \in\{-1,1\}^n$ and $y_1, \dots, y_T \in\{-1,1\}^n$. Denote $\nu_t= d_H(x_t, y_t)/n$ and $\overline{\nu}_t=\frac1t\sum^t_{k=1}\nu_t$. If $\max_{t\in[T]}\nu_t=\cO(\frac1{\sqrt n})$, then for all $t\in [T]$, we have %$h_t = d_H(x_t, y_t) $ and $\overline{h}_t = \frac{1}{t}\sum_{k=1}^t h_k$. If $h_t = \cO(\sqrt{n})$, for all $t\in[T]$, then we have 
% \begin{align*}
%  & \P_w\left[s_t(x_1, \dots, x_t) \ne s_t(y_1, \dots, y_t)\right] \le C(1+\theta)t^2\sqrt{\frac{\overline{h}_t\log^2 n}{n}}\, ,
% \end{align*}
\begin{align*}
 & \P_w\left[s_t(x_1, \dots, x_t) \ne s_t(y_1, \dots, y_t)\right] \le C(1+\theta)t^2\sqrt{\overline{\nu}_t}\log n\, ,
\end{align*}
where $C>0$ is an absolute constant independent of the $\theta,T$, $n$, $t$, $x_1, \dots,x_T$ and $y_1, \dots,y_T$. Moreover, for static inputs the same bound applies without the logarithmic factor.
\end{theorem}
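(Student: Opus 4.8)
The plan is to decompose the disagreement event according to the \emph{first time the two spike trains differ}. Writing $s^x_k,s^y_k$ for the two runs, let $\tau\in[t]$ be the first index with $s^x_\tau\neq s^y_\tau$, so that $\P_w[s_t^x\neq s_t^y]\le\sum_{\tau=1}^t\P_w[E_\tau]$, where $E_\tau$ is the event that the two runs spike identically on $\{0,\dots,\tau-1\}$ but differ at $\tau$. The point of this decomposition is that \emph{before} $\tau$ the reset terms $\tfrac{\theta}{2}(s^x_{k-1}+1)$ and $\tfrac{\theta}{2}(s^y_{k-1}+1)$ agree and cancel in the difference of membrane potentials. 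Hence on $E_\tau$ one gets the clean identity $u^x_\tau-u^y_\tau=\Delta_\tau:=w^\top\sum_{k=1}^\tau(x_k-y_k)$, carrying \emph{no} dependence on $\theta$, which is precisely the simplification that the first-divergence bookkeeping buys.

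\textbf{Reduction to anti-concentration.} Since $w\sim\cN(0,I_n/n)$, $\Delta_\tau$ is centered Gaussian, and by the triangle inequality together with Cauchy--Schwarz, $\Var(\Delta_\tau)=\tfrac1n\big\|\sum_{k\le\tau}(x_k-y_k)\big\|^2\le 4\tau^2\overline{\nu}_\tau\le 4t^2\overline{\nu}_t$, where the last step uses $\tau^2\overline{\nu}_\tau=\tau\sum_{k\le\tau}\nu_k\le t\sum_{k\le t}\nu_k=t^2\overline{\nu}_t$. Because $s_\tau=\sign(u_\tau-\theta)$ and $u^x_\tau=u^y_\tau+\Delta_\tau$ on $E_\tau$, a disagreement at $\tau$ forces $u^y_\tau$ within $|\Delta_\tau|$ of the threshold, i.e. $E_\tau\subseteq\{|u^y_\tau-\theta|\le|\Delta_\tau|\}$. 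I would bound this by the density--tail split $\P[|u^y_\tau-\theta|\le M]+\P[|\Delta_\tau|>M]$ for a free $M$, which invokes only the \emph{marginal} law of $u^y_\tau$ and the Gaussian tail of $\Delta_\tau$, thereby sidestepping their mutual dependence. Using $\sum_{\tau\le t}\tau\sqrt{\overline{\nu}_\tau}\le t^2\sqrt{\overline{\nu}_t}$, summing the per-$\tau$ estimates reproduces the $t^2\sqrt{\overline{\nu}_t}$ scaling.

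\textbf{Main obstacle.} The crux is the anti-concentration estimate for $u^y_\tau$ near $\theta$. Unlike a plain linear form, $u^y_\tau=w^\top\!\sum_{k\le\tau}y_k-\theta\,M^y_{\tau-1}$ is a \emph{piecewise-linear} function of $w$, since the spike count $M^y_{\tau-1}$ itself depends on $w$ through the reset dynamics; this is exactly the nontrivial probabilistic dependence introduced by reset-by-subtraction. I would control its density by conditioning on (equivalently, union-bounding over) the finitely many admissible spike patterns up to $\tau-1$, on each of which $u^y_\tau$ is an affine image of the Gaussian $w$; the dependence on the threshold is then absorbed while tracking the relevant reset levels, which is where the $(1+\theta)$ prefactor originates. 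The hypothesis $\max_t\nu_t=\cO(1/\sqrt n)$ keeps $\Delta_\tau$ in the small-perturbation regime where this linearized picture is valid. I expect the $\log n$ factor to come from optimizing $M\asymp\sigma_\tau\sqrt{\log n}$ so that the tail $\P[|\Delta_\tau|>M]$ is negligible, combined with a maximal inequality controlling the increments uniformly over the coordinates (another $\sqrt{\log n}$), so that the density bound holds on a high-probability event; the two contributions combine into $\log n$. This anti-concentration step under reset-induced dependence is the main difficulty.

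\textbf{Static inputs.} When $x_t\equiv x$ and $y_t\equiv y$, the entire trajectory $(u^y_k,s^y_k)_k$ is a \emph{deterministic function of the single scalar} $g:=w^\top y$, and the $x$-run depends on $w$ only through $h:=w^\top x$; the pair $(g,h)$ is bivariate Gaussian with unit variances and correlation $1-2\nu$. Consequently $u^y_\tau$ is a one-dimensional sawtooth function of $g$: each of its at most $\tau$ monotone branches has slope $\tau$ and thus contributes density at most $\varphi(0)/\tau$, giving an absolute-constant bound on the density near $\theta$ \emph{without} any union bound over patterns or coordinates. This removes the maximal-inequality step and hence the $\log n$ factor, while the first-divergence summation still yields the $t^2\sqrt{\nu}$ scaling, giving the stated bound $C(1+\theta)\,t^2\sqrt{\nu}$.
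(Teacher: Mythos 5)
Your skeleton is sound and, modulo packaging, close to the paper's own proof: your first-disagreement decomposition with cancellation of the reset terms is exactly what the paper achieves by conditioning on agreement of the cumulative spike counts (via its unrolled representation, Lemma~\ref{lem:sIFasg}, the effective threshold $\theta(1+m)$ depends on the past only through the spike count $m$), and your variance and summation bookkeeping ($\Var(\Delta_\tau)\le 4\tau^2\overline{\nu}_\tau$, $\sum_{\tau\le t}\tau\sqrt{\overline{\nu}_\tau}\le t^2\sqrt{\overline{\nu}_t}$) is correct. The analytic core differs — the paper estimates the joint bivariate-Gaussian disagreement probability directly (Lemma~\ref{lem:new_tech_lemm}), whereas you use a density--tail split — and it is precisely there that your dynamic-case argument has a genuine gap. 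Your anti-concentration step needs the density of $w^\top\sum_{k\le\tau}y_k$ near the shifted thresholds $\theta(1+m)$ to be bounded, i.e., its standard deviation $\|\sum_{k\le\tau}y_k\|/\sqrt n$ to be bounded below. The hypothesis $\max_t\nu_t=\cO(1/\sqrt n)$ controls $\Delta_\tau$ but gives no control on this norm: for dynamic inputs the partial sums can vanish entirely (e.g.\ $y_2=-y_1$ gives $\sum_{k\le 2}y_k=0$), and then the per-branch density bound degenerates. One must split into a non-degenerate regime (where the density argument works) and a degenerate regime (where instead one uses that the thresholds $\theta(1+m)\ge\theta$ lie far in the tail of the nearly-degenerate Gaussian); this dichotomy is exactly the content of the paper's Lemma~\ref{lem:orders}, and it — not the ``maximal inequality over the coordinates'' you invoke — is the true source of the $\log n$ factor, which enters as $\sqrt n/\|\sum_{k\le\tau} y_k\|\lesssim\log n$ in the non-degenerate case. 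Your sketch never addresses this regime, and without it the dynamic case does not close.

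The static case has a second, quantitative gap: your claim that removing the maximal-inequality step removes the $\log n$ is inconsistent with the fixed-$M$ split you propose. To make $\P[|\Delta_\tau|>M]$ negligible relative to the target $\tau\sqrt{\nu}$ (which can be as small as $\tau/\sqrt n$), you are forced to take $M\asymp\tau\sqrt{\nu\log n}$, so a $\sqrt{\log n}$ survives even with your (correct) sawtooth density bound; as written, your static bound is $C(1+\theta)t^2\sqrt{\nu}\sqrt{\log n}$, not $C(1+\theta)t^2\sqrt{\nu}$. The fix is to abandon the fixed-$M$ split and integrate the anti-concentration against the law of $\Delta_\tau$: condition on $\Delta_\tau$ (its correlation with $g=w^\top y$ is $-\sqrt{\nu}$, so the conditional density of $g$ stays bounded), apply the sawtooth bound with random width $|\Delta_\tau|$, and conclude $\P[E_\tau]\le\E\big[\min\big(1,C|\Delta_\tau|\big)\big]\le C\,\E|\Delta_\tau|\le C'\tau\sqrt{\nu}$. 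This is morally what the paper's Lemma~\ref{lem:new_tech_lemm} accomplishes in one stroke, since it bounds the joint probability rather than splitting at a fixed level. With these two repairs — the norm dichotomy for dynamic inputs, and the integrated rather than split estimate for static ones — your route does deliver the theorem.
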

Below we present a proof sketch of Theorem \ref{thm:singleneuronbound} focusing on the primary technical challenges; the full proof is deferred to Appendix \ref{app:proof_singleneuronbound}.
\begin{proof}[Proof sketch of Theorem \ref{thm:singleneuronbound}]
We argue by induction. Here, we illustrate the argument for $t=1$. In this case, the problem reduces to bounding the probability that a random linear threshold function produces different outputs for two inputs $x_1,y_1$ at Hamming distance $\lfloor \nu_1 n\rfloor$:  
\[
\P\!\left[\;\sign(w^\top x_1 - \theta)\;\neq\;\sign(w^\top y_1 - \theta)\;\right].
\]

\emph{Step 1: Decomposition.}  
Define $X=w^\top x_1$ and $Y=w^\top y_1$, which are standard Gaussians. A classic Gaussian decomposition property allow us to express $Y=\rho X+\sqrt{1-\rho^2}Z$, where $\rho=1-\nu_1$.
% Let $I=\{i\in[n]:x_i=y_i\}$. Decompose $x=x^I+x^{I^c}$ and $y=y^I+y^{I^c}$, so that $x^I=y^I$ and $x^{I^c}=-y^{I^c}$. Define  
% \[
% X := w^\top x^{I}\sim \mathcal{N}(0,1-h/n), \quad  
% X' := w^\top x^{I^c}\sim \mathcal{N}(0,h/n),
% \]
% which are independent. Then $w^\top x = X+X'$ and $w^\top y = X-X'$.

\emph{Step 2: Event characterization.}  
The disagreement event is equivalent to  
$$
\{X>\theta,\;Y\leq \theta\}\;\cup\;\{X\leq \theta,\;Y>\theta\}.
$$

\emph{Step 3: Probability estimate.}  
The first event has probability
\[
\P[X>\theta,\;Y\leq \theta]
= \Phi_2\!\bigl(-\theta,\theta;\,2\nu_1-1\bigr),
\]
where $\Phi_2$ denotes the bivariate Gaussian CDF with unit variances and correlation $2\nu_1-1$.  
The second event yields a symmetric expression $\Phi_2(\theta,-\theta;2\nu_1-1)$.  
Combining both, and using tail bounds, one obtains 
\[
\P[\sign(w^\top x - \theta)\neq \sign(w^\top y - \theta)]
\;\leq\; C_\theta\,\sqrt{\nu_1}\,,%\sqrt{\tfrac{h}{n}},
\]
for a constant $C_\theta$ depending only on $\theta$.

For $t \geq 2$, the argument extends but with added complexity: temporal dependencies require union bounds, introducing the $T$ factor in the constant. In the dynamic case, the sLIF neuron processes input sums that lie outside the hypercube, creating technical challenges. In contrast, the static case avoids these issues and the proof is simpler, with better rates.  

\end{proof}
\paragraph{Remarks.}  
For $T=1$, our result recovers (up to logarithmic factors) the known bounds on the noise sensitivity of fixed linear threshold functions, i.e., classifiers of the form $\sign(w^\top x - \theta)$.  
For larger $T$, our bounds deteriorate. This may partly reflect proof artifacts—since handling dependencies introduced by shared weights across time can loosen bounds—but it is also consistent with the general behavior of Boolean function compositions, where sensitivity typically increases with depth.  
A key obstacle to sharper time dependencies is the reset mechanism: thresholds adapt dynamically as the process evolves, complicating tighter analysis.  

By inspecting the proof, we note that the assumption $\beta=1$ can be relaxed without significant changes to the analysis, but we maintain it here for simplicity. The proof also applies, with minor modifications, to the classical LIF model with Heaviside activations, but we adopt the signed variant to enable a cleaner Fourier analysis. In contrast, the requirement of large network width is essential, as our concentration-based arguments rely on it. The influence of architectural parameters on stability is explored empirically in Section~\ref{sec:experiments}.

% Some remarks are in place. In the case $T=1$ we recover up-to logarithmic factors result for the noise sensitivity of single fixed linear threshold functions, i.e., of the form $\sign(w^\top x-\theta)$. For larger $T$, our bounds get worse. While this could be, in principle, an artifact of the proof (managing dependencies due to share weights across times could incur in loss bounds), typically bounds for composition of Boolean get worse the more compositions are done. Here we highlight that the reset mechanism is, in principle, one of the main obstacles for obtaining a better time dependencies, since the threshold are adaptively adapted as the dynamic progresses.

\paragraph{Multiple neurons.} We now analyze the noise sensitivity of classifier outputs from $L$-layer sLIF neural networks. Specifically, we study classifiers defined as in \eqref{eq:SNN_classifier} and built from $L$-layer sLIF networks (Definition \ref{def:LSNN}).

\begin{theorem}\label{thm:multneustab}
    Let $f^{L,T}(\cdot\,,W)$ be a $n_L$-classes classifier, defined by a $L$-layer sLIF, according to \eqref{eq:SNN_classifier}, with latency $T\in\N$, $\theta\in (0,\infty)$, widths $n_1=n_2=\ldots=n_{L-1}=n$ and weights $W\sim\cN(0,I_d/n)$. Let $x_1, \dots, x_T \in\{-1, 1\}^n$ and $y_1, \dots, y_T\in \{-1,1\}^n$ such that $d_H(x_t, y_t) = \lfloor \nu_t n\rfloor$ with $\nu_t \in [0,1]$. Let us define $\nu := \max_{t\in [T]} \ \nu_t$ and assume $\nu = \cO(\frac{1}{\sqrt{n}})$. Then, for $n$ large enough, it holds that
    \begin{align*}
        & \P_W\Big(f^{L,T}((x_t)_{t\in[T]},W) \ne f^{L,T}((y_t)_{t\in[T]},W)\Big) \\
        &\quad\le n_L T^4 C(1+\theta) \nu^{\frac1{2^{2L+1}}}\log^{3/2}{n} +(L-1) e^{-c\nu^{\frac{1}{2^{2L -1}}}n},\,
    \end{align*}
    for some absolute constants $c,C>0$ independent of $\theta,\,T$, $n$, $t$, $x_1, \dots,x_T$ and $y_1, \dots,y_T$. 
\end{theorem}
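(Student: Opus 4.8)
The plan is to reduce the classifier-level disagreement to a statement about individual output spikes, and then to control how the perturbation propagates through the layers by an induction on $l$, feeding the single-neuron estimate of Theorem~\ref{thm:singleneuronbound} into a layer-by-layer recursion for the per-time-step noise rate. Concretely, since the output spike counts coincide whenever all output spikes coincide, I would first bound (writing $s^{(l)}_t(x)$ for the layer-$l$ spike vector at time $t$ on input $(x_t)_t$)
\[
\P_W\!\Big(f^{L,T}((x_t)_t,W)\neq f^{L,T}((y_t)_t,W)\Big)\;\le\;\sum_{i\in[n_L]}\sum_{t\in[T]}\P_W\!\Big(s^{(L)}_{t,i}(x)\neq s^{(L)}_{t,i}(y)\Big),
\]
which already accounts for the leading $n_L$ and one factor of $T$. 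The remaining task is to estimate a single output-spike disagreement probability, for which I would track the random per-time, per-layer noise rate $\nu^{(l)}_t:=d_H\!\big(s^{(l)}_t(x),s^{(l)}_t(y)\big)/n$, with $\nu^{(0)}_t=\nu_t$.

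The core is a single-layer propagation step, carried out by conditioning on the weights $W^{(1)},\dots,W^{(l-1)}$ of the earlier layers. Given these, the layer-$(l-1)$ spike trains feeding layer $l$ become fixed inputs in the hypercube $\{-1,1\}^n$, while the fresh matrix $W^{(l)}\sim\cN(0,I/n)$ has independent rows, each governing exactly one neuron of layer $l$. Hence, conditionally, Theorem~\ref{thm:singleneuronbound} applies to each neuron separately and yields a per-neuron flip probability of order $(1+\theta)t^2\sqrt{\overline{\nu}^{(l-1)}_t}\,\log n$; moreover the $n$ flip events at a fixed time $t$ are conditionally independent. This independence is what converts the mean estimate into a high-probability bound: a Chernoff bound on $\sum_{j}\mathbf{1}\{s^{(l)}_{t,j}(x)\neq s^{(l)}_{t,j}(y)\}$ shows that, outside an event of probability $e^{-c\,n\,\delta_l}$, the noise rate $\nu^{(l)}_t$ stays below a threshold $\delta_l$ chosen as a fixed power of $\delta_{l-1}$. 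A union bound over the $t\in[T]$ time steps and over the $L-1$ hidden layers then produces the additive exponential correction; its slowest-decaying contribution, at the last hidden layer where $\delta_l$ is largest, is what appears as $(L-1)\,e^{-c\,n\,\nu^{1/2^{2L-1}}}$.

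Unrolling the recursion $\delta_{l-1}\mapsto\delta_l$ gives the polynomial term. Each layer contributes one square root from the single-neuron estimate and a second from converting the conditional mean into a usable high-probability threshold, so the admissible noise rate degrades by roughly a fourth root per layer; iterating from $\delta_0=\nu$ across the $L$ layers, together with a final single-neuron estimate at the output, produces the exponent $1/2^{2L+1}$, while the accumulated time- and $\log n$-factors combine into $T^4$ and $\log^{3/2}n$. Substituting into the output-spike bound above and adding the exponential contribution yields the stated inequality, valid for $n$ large enough that each conditional application of Theorem~\ref{thm:singleneuronbound} is legitimate.

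The main obstacle is precisely this legitimacy. Theorem~\ref{thm:singleneuronbound} requires the input noise rate to be $\cO(1/\sqrt n)$, yet the square-root behaviour of a single layer makes the propagated rate grow from one layer to the next (the exponent of $n^{-1}$ shrinks), so the hypothesis is not preserved verbatim with depth. Keeping $\nu^{(l)}_t$ inside the regime where the single-neuron estimate is valid — in high probability and simultaneously over all times and all hidden layers — is exactly what forces both the doubly-degraded exponent $1/2^{2L+1}$ (rather than the naive $1/2^{L}$) and the additive exponential terms. A secondary difficulty is that the membrane potentials at intermediate layers aggregate spikes over time and leave the hypercube, so the conditional application of the single-neuron bound must be performed on an event where these aggregates are controlled; this is the source of the extra $\log n$ and $T$ factors relative to the single-neuron case.
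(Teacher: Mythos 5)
Your proposal follows essentially the same route as the paper's proof: a layer-wise induction in which, conditionally on the earlier layers' weights, the per-neuron flip events are independent and controlled by Theorem~\ref{thm:singleneuronbound}, a Chernoff bound converts the resulting mean into a high-probability threshold on the layer's disagreement count that degrades by a fourth root per layer, union bounds over times and layers yield the additive $(L-1)e^{-c\nu^{1/2^{2L-1}}n}$ term, and a final application of the single-neuron bound at the output layer (union-bounded over $n_L$ neurons and $T$ time steps) gives the leading $n_L T^4 C(1+\theta)\nu^{1/2^{2L+1}}\log^{3/2}n$ term. The only difference is bookkeeping: the paper tracks a single per-layer quantity $D^{(l)}_T$, the number of neurons disagreeing at \emph{any} time, and formalizes the Chernoff step via stochastic domination by a binomial (Lemma~\ref{lem:stochdom}), whereas you track per-time rates $\nu^{(l)}_t$ and union-bound over $t$ — these are equivalent, and you even flag explicitly the same delicate point (the validity of the $\cO(1/\sqrt n)$ hypothesis of Theorem~\ref{thm:singleneuronbound} at interior layers) that drives the doubly-degraded exponent.
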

\begin{proof}[Proof sketch of Theorem \ref{thm:multneustab}]
We proceed by induction on $l\in[L]$. To illustrate, consider $t=1$. Following \citep{jonasson2023noise}, we analyze the Markov chain  
\[
D^{(l)}_1(x_1,y_1) := \tfrac14 \|s^{(l)}_1(x_1)-s^{(l)}_1(y)\|^2,\qquad l \in [L],
\]  
which has $n+1$ states and absorbing state $0$.  
Conditioned on $D^{(l-1)}_1=\lfloor \nu_1 n\rfloor$, we have  
\[
D^{(l)}_1(x,y) \sim \operatorname{Bin}(n,p_{\nu_1}), \quad p_{\nu_1} \leq C_\theta \sqrt{\nu_1},
\]  
where the bound on $p_{\nu_1}$ follows from Theorem~\ref{thm:singleneuronbound}. Hence $D^{(l)}_1(x,y)$ is stochastically dominated by $\operatorname{Bin}(n,C_\theta\sqrt{\nu_1})$, which leads to the desired bound via Chernoff bounds (see Theorem \ref{thm:Chernoff}) and standard manipulations. For $t \geq 2$, we repeat the previous argument which involves repeteadly applying Chernoff bounds ; see Appendix~\ref{app:proof_multneustab}.
\end{proof}
The following corollary links the probability that input perturbations change the output to a bound on the expected noise sensitivity of binary sLIF-SNN classifiers.  
This, in turn, characterizes their spectrum and quantifies the degree of spectral simplicity introduced in Section~\ref{sec:stability}.  
The proof is deferred to Appendix~\ref{app:proof_ens_bound}.

%In the next corollary, we connect the probability that input perturbations alter the output to a bound on the expected noise sensitivity of binary sLIF-SNN classifiers. This characterizes its spectrum and the level of spectral simplicity as introduced in Section \ref{sec:stability}. The proof is deferred to Appendix \ref{app:proof_ens_bound}.
%
\begin{cor}\label{cor:ens_bound}
Let $f^{L,T}(\cdot,W)$ be a binary classifier ($n_L=2$) as in Theorem~\ref{thm:multneustab}, and assume static inputs. Then, for any $\nu' \le \tfrac{1}{\sqrt{n\log n}}$, we have, for $n$ large enough,
%Let $f^{L,T}(\cdot,W)$ a binary classifier (i.e.,$n_L=2$) as in Theorem \ref{thm:multneustab}, and assume that the data is static. Then, for $\nu'\leq \frac{1}{\sqrt{n\log n}}$, it holds
%
\begin{equation*}
    \operatorname{\bf{ENS}}_{\nu'}\left(\{f^{L,T}(\cdot,W)\}_{W\sim \gN(0,I_d)}\right)\leq C_{T,\theta}{\nu'}^{\frac1{2^{2L+1}}}\log^{3/2}{n} + (L-1) e^{-c{\nu'}^{\frac{1}{2^{2L -1}}}n}\,+ e^{-\frac14\sqrt{n}},
\end{equation*}
where $C_{T,\theta}>0$ is a constant, as in Theorem \ref{thm:multneustab}. Moreover, for sufficiently large $n$, the family of $L$-layer sLIF SNN binary classifiers has, in expectation under $\cN(0,I_d/n)$, spectrum $\epsilon$-concentrated (Definition~\ref{def:expected_spec_con}) up to degree $1/\nu'$, with  
\[
\epsilon \;=\; C_{T,\theta}\,{\nu'}^{\tfrac{1}{2^{2L+1}}}\log^{3/2} n .
\]
\end{cor}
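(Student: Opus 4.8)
The plan is to reduce the expected noise sensitivity to the fixed-distance estimate of Theorem~\ref{thm:multneustab} by conditioning on the random number of flipped coordinates, and then convert the resulting bound into spectral concentration via Lemma~\ref{lem:expec_degree}. Writing $f=f^{L,T}(\cdot,W)$ and using static inputs, I would first note that in Definition~\ref{def:ENS} the perturbed point $x\odot\xi$ differs from $x$ in exactly $h:=d_H(x,x\odot\xi)$ coordinates, where $h\sim\Bin(n,\nu')$ since each $\xi_i=-1$ independently with probability $\nu'$. Because the two static sequences then satisfy $\nu_t=h/n$ for every $t$, one has $\max_t\nu_t=h/n$, and Theorem~\ref{thm:multneustab} (with $n_L=2$) applies uniformly over the choice of $x$ and of the flipped coordinate set. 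Conditioning on $h$ and averaging over $(x,\xi)$ gives, for every $1\le h\le H$ with $H=\cO(\sqrt n)$,
\[
\P_{W,x,\xi}\!\left[f(x)\neq f(x\odot\xi)\,\middle|\,h\right]\le C_{T,\theta}(h/n)^{\frac1{2^{2L+1}}}\log^{3/2}n+(L-1)e^{-c(h/n)^{\frac1{2^{2L-1}}}n},
\]
while the $h=0$ term contributes nothing, since there the inputs coincide and the probability is $0$ (this separate treatment matters, as the bound above is vacuous at $h=0$).

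Next I would take the expectation over $h$, splitting on the good event $\{h\le H\}$ with the \emph{absolute} threshold $H=\lfloor\sqrt n\rfloor$. This choice is essential: $H=\cO(\sqrt n)$ keeps us in the regime where Theorem~\ref{thm:multneustab} is valid, yet $H$ is large relative to the mean $n\nu'\le\sqrt{n/\log n}$, so a Chernoff bound (Theorem~\ref{thm:Chernoff}) gives $\P[h>H]\le e^{-\frac14\sqrt n}$ \emph{uniformly} over all $\nu'\le 1/\sqrt{n\log n}$ (a multiplicative deviation would fail here because $n\nu'$ may be $o(1)$). For the polynomial term I would use concavity of $u\mapsto u^{1/2^{2L+1}}$ with Jensen's inequality to obtain $\E_h[(h/n)^{1/2^{2L+1}}]\le(\E_h[h/n])^{1/2^{2L+1}}=(\nu')^{1/2^{2L+1}}$, recovering the sharp first term $C_{T,\theta}(\nu')^{1/2^{2L+1}}\log^{3/2}n$; this is what prevents the worst-case distance $H/n=1/\sqrt n$ on the good event from destroying the dependence on $\nu'$. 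For the exponential term, every $h\ge1$ yields exponent $c(h/n)^{1/2^{2L-1}}n\ge cn^{1-1/2^{2L-1}}\ge c\sqrt n$ (valid for $L\ge1$), so its total contribution, together with the tail $\P[h>H]$, is $\cO(e^{-\frac14\sqrt n})$ for $n$ large. Absorbing these into the single $e^{-\frac14\sqrt n}$ term and noting that the stated middle term $(L-1)e^{-c(\nu')^{1/2^{2L-1}}n}$ is nonnegative then yields the claimed bound on $\mathbf{ENS}_{\nu'}$.

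For the spectral statement I would invoke Lemma~\ref{lem:expec_degree}, the parametric extension of \citep[Prop.~3.3]{Odonnell}: taking noise rate $\nu'$ and $\epsilon=3\,\mathbf{ENS}_{\nu'}$ shows the family is, in expectation under $\cN(0,I_d/n)$, $\epsilon$-concentrated (Definition~\ref{def:expected_spec_con}) up to degree $1/\nu'$. Substituting the ENS bound and retaining the dominant polynomial contribution (the two exponentially small terms are lower order for the relevant, not exponentially small, range of $\nu'$, and are absorbed into the constant) gives $\epsilon=C_{T,\theta}(\nu')^{1/2^{2L+1}}\log^{3/2}n$.

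The main obstacle I expect is precisely the passage from the fixed-distance guarantee of Theorem~\ref{thm:multneustab} to the random-distance setting of $\mathbf{ENS}$: the two summands of the fixed-distance bound behave oppositely in $\nu=h/n$ (the polynomial term increases, the exponential term decreases), so neither a single worst-case value of $h$ nor a naive multiplicative concentration of $\Bin(n,\nu')$ suffices. Reconciling these requires the specific combination of an absolute cutoff $H=\sqrt n$ for the tail, Jensen's inequality for the polynomial term, and the separate (trivial) handling of $h=0$; extracting uniformity over the whole range $\nu'\le 1/\sqrt{n\log n}$ from the Chernoff step is the most delicate point.
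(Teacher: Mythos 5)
Your proof is correct and follows essentially the same route as the paper's: split on whether the number of flipped coordinates exceeds the absolute cutoff $\lfloor\sqrt{n}\rfloor$, bound the tail by $e^{-\frac14\sqrt{n}}$ via the Chernoff bound (Theorem~\ref{thm:Chernoff}) applied to $\Bin(n,\nu')$, invoke Theorem~\ref{thm:multneustab} on the good event, and pass to spectral concentration via Lemma~\ref{lem:expec_degree}. If anything, your handling of the good event is more careful than the paper's, which simply asserts that Theorem~\ref{thm:multneustab} ``applies with $\nu=\nu'$'': your Jensen step for the polynomial term, together with the separate treatment of $h=0$ and of the exponential term for $h\ge 1$, supplies exactly the justification that assertion needs, since the fixed-distance bound is not monotone in $h$.
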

\paragraph{Remarks.}To illustrate, take $\nu'=\tfrac{1}{\sqrt{n\log n}}$.  
In this case, an $L$-layer binary SNN classifier is $\cO\!\big(n^{1/{2^{2(L+1)}}}\big)$-concentrated up to degree $\cO(\sqrt{n\log n})$.  
Thus, only a vanishing fraction of degrees contribute meaningfully to the spectrum, making these classifiers spectrally simple.  
Interestingly, the bound on the maximal degree of concentration is independent of architectural parameters, while the concentration level deteriorates with larger $L$, $T$, and $\theta$ (the $\log^{3/2} n$ seems an artifact of the proof).  
The increase with $L$ is not surprising, since compositions of Boolean networks typically behave similarly, and analogous results are known for threshold ANNs \citep{jonasson2023noise}.  
The $\theta$-dependence appears to be a proof artifact.  
Whether the $T$- and $L$-dependencies are intrinsic remains an open question, which we investigate experimentally in the next section.

\section{Numerical experiments}\label{sec:experiments}
We empirically evaluate the noise sensitivity, $\operatorname{\bf{ENS}}_{\nu}$, of various spiking neural networks to investigate how our proposed simplicity measure relates to the model dimension $n$ and the tightness of the bounds established in Theorems~\ref{thm:singleneuronbound} and~\ref{thm:multneustab} in the case of static inputs. In addition, we study how training (signed) SNNs influences their sensitivity to random perturbations of the input.

% \paragraph{Additional Nomenclature.} We define the \emph{leaky integrate-and-fire (LIF) neuron} as the unit obtained by replacing the $\sign$ function in Definition~\ref{def:SNN} with a Heaviside step function and modifying the reset mechanism to subtract $\theta s_{t-1}$. When the leakage term is absent ($\beta = 1$), we refer to the neuron as \emph{integrate-and-fire (IF)}. Networks composed of these units, which are commonly used in practice, are referred to as LIF and IF spiking neural networks, respectively.

\paragraph{Single LIF experiments.} We approximate the noise sensitivity $\operatorname{\bf{ENS}}_{1/\sqrt{n}}$ for different IF and sIF spiking neurons with input dimensions $n = 100, 1000, 10000$, threshold $\theta = 0.5$ and $T=10$. Specifically, we compute a Monte Carlo approximation by uniformly sampling $10$ model's weights, $100$ data points, and $100$ random perturbations. The results are shown in Figure \ref{fig:sif1rn} and \ref{fig:if1rn}. We observe that, for all considered $t$, the neurons exhibit low sensitivity for both sIF and IF models. In conclusion, we stress that even though for our result we need the signed neuron versions, the sensitivity seems to be similar for the IF neuron. Additional experiments for different values of $\nu$, $\beta = 0.5$, and $\theta=0$ can be found in Appendix \ref{app:add_experiments}, see Figure \ref{fig:single-neuron-sensitivity}, Figure \ref{fig:isakjdoikljfciowds}, and Figure \ref{fig:ENS_app_3} respectively.

\begin{figure}[htbp]

    \centering

    \begin{subfigure}[b]{0.45\textwidth}
        \centering
        \includegraphics[width=\textwidth]{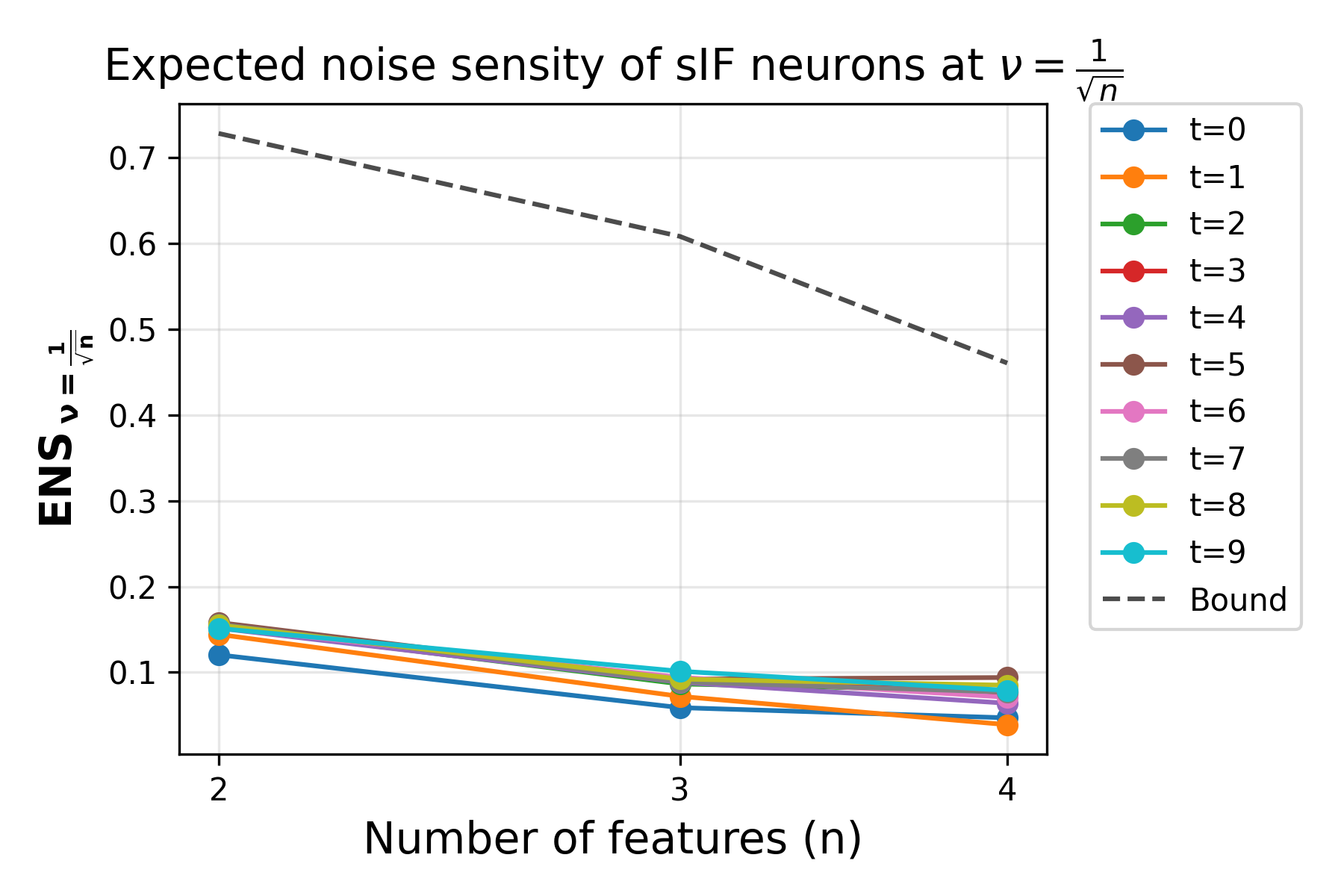}
        \caption{}
        \label{fig:sif1rn}
    \end{subfigure}
    \hfill
    \begin{subfigure}[b]{0.45\textwidth}
        \centering
        \includegraphics[width=\textwidth]{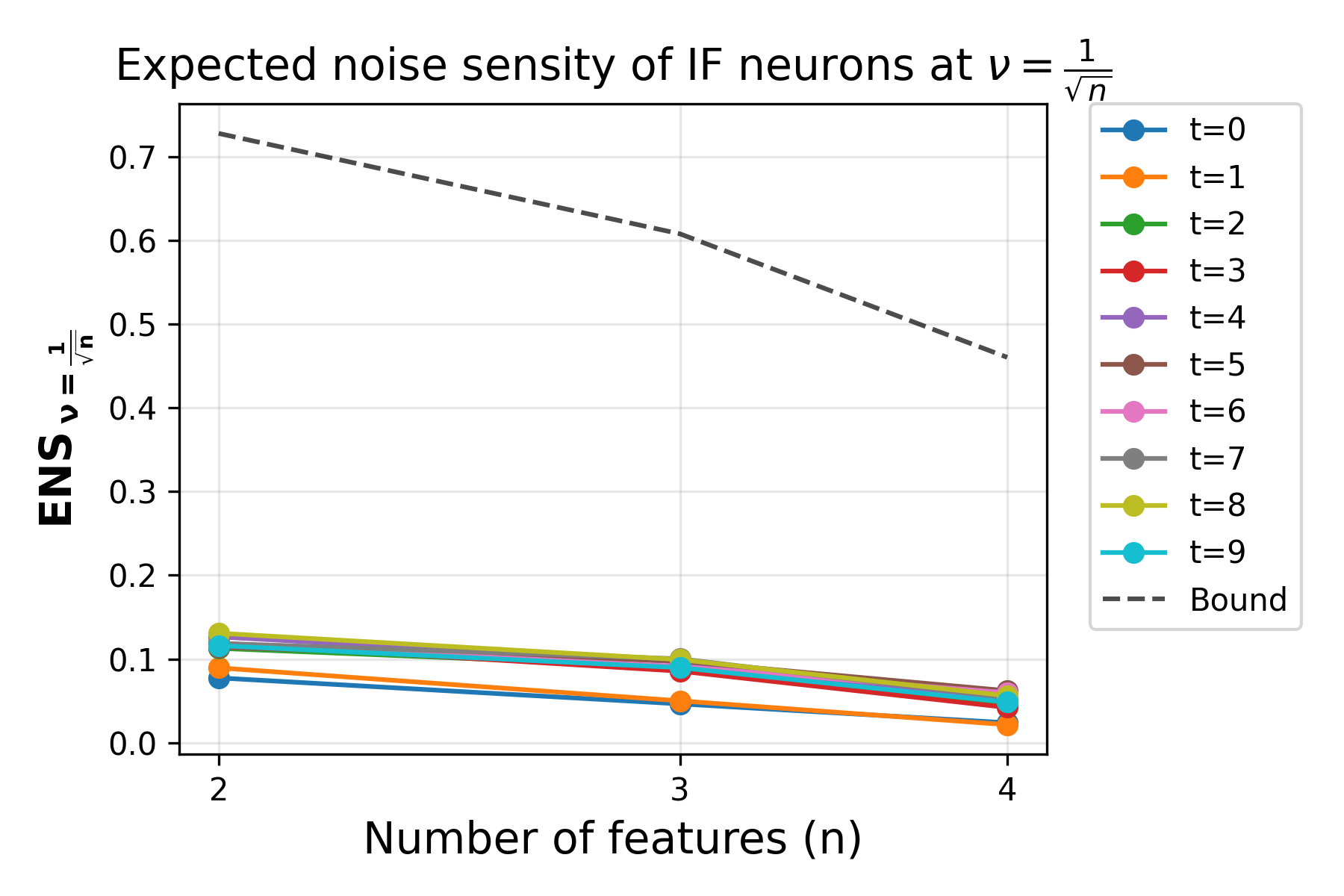}
        \caption{}
        \label{fig:if1rn}
    \end{subfigure}

    \caption{Noise sensitivity $\operatorname{\bf{ENS}}_{1/\sqrt{n}}$ for different input dimensions $n$ for sIF and IF neurons with $\theta = 0.5$ and $T=10$. \textbf{(a)} sIF neuron (log-scale x-axis); dashed line: scaled bound from Theorem~\ref{thm:singleneuronbound}. \textbf{(b)} IF neuron (log-scale x-axis); dashed line: scaled bound from Theorem~\ref{thm:singleneuronbound}.}
    \label{fig:mainifslifsingledeep}
\end{figure}
\paragraph{IF SNN with 5 layers.} We extend the experiments to the deep setting by considering IF and sIF spiking neural networks with five layers, using the same Monte Carlo approximation procedure as in the shallow case. We evaluate $\operatorname{\bf{ENS}}_{1/\sqrt{n}}$ for input dimensions $n = 100, 1000, 10000$, with each layer having width equal to the input dimension, $\theta= 0.5$ and $T=10$. The results are shown in Figure \ref{fig:sensbounddeepsIF} and \ref{fig:sensbounddeepIF}. Although depth appears to have a stronger impact on sensitivity than latency, the bound presented in Theorem \ref{thm:multneustab} tends to overestimate the effect. Additional experiments for different values of $\nu$ and $\beta = 0.5$ can be found in Appendix \ref{app:add_experiments}.
\begin{figure}[htbp]
    \centering
    \label{fig:ESNsingle_neuron}
        \begin{subfigure}[b]{0.45\textwidth}
        \centering
        \includegraphics[width=\textwidth]{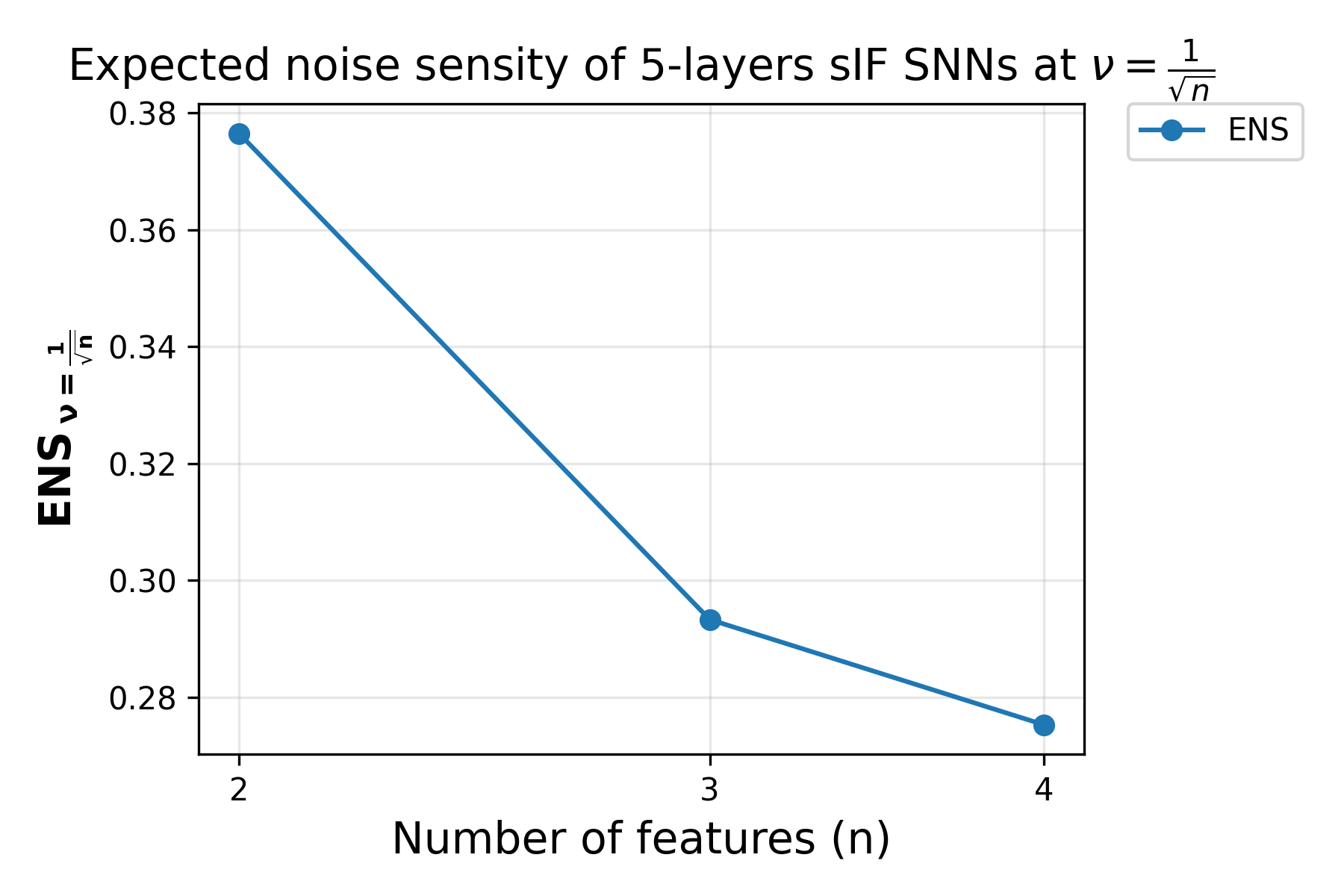}
        \caption{}
        \label{fig:sensbounddeepsIF}
    \end{subfigure}
    \hfill
    \begin{subfigure}[b]{0.45\textwidth}
        \centering
        \includegraphics[width=\textwidth]{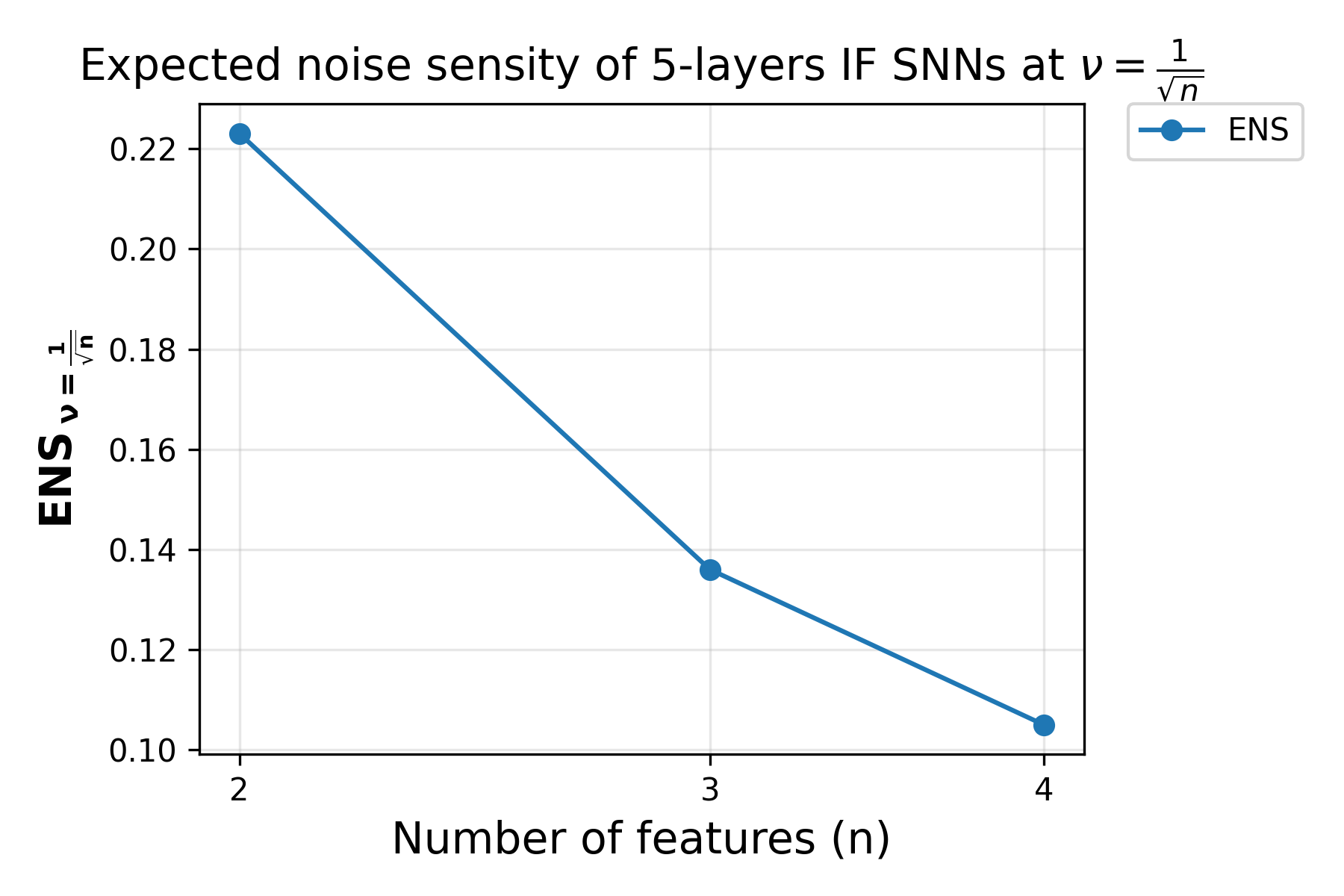}
        \caption{}
        \label{fig:sensbounddeepIF}
    \end{subfigure}
    \caption{Noise sensitivity $\operatorname{\bf{ENS}}_{1/\sqrt{n}}$ for different input dimensions $n$ for 5-layers sIF and IF neural networks with $\theta = 0.5$ and $T=10$. \textbf{(a)} sIF neuron (log-scale x-axis); \textbf{(b)} IF neuron (log-scale x-axis).}
    \label{fig:sensbounddepp}
\end{figure}

\paragraph{Noise sensitivity after training.} We evaluate the noise sensitivity of trained sIF and IF SNNs in static data. In particular, we train three-layer sLIF and IF SNN on MNIST (i.e., $n=784$) using the ADAM optimizer with surrogate gradients \citep{Lessons_from_DL_eshraghian_2023} until reaching $98\%$ training accuracy. After training, we estimate the output sensitivity by perturbing each test sample $100$ times, flipping each component with probability $\nu \in\left[\tfrac{1}{n},\tfrac{2}{n} \dots, \tfrac{1}{\sqrt{n}}\right]$ and measuring the fraction of perturbations that change the network’s output. For comparison with a random network, the noise sensitivity is estimated using the same procedure, averaged over $10$ independent random weight initializations. As expected, training significantly reduces the sensitivity of the model whenever the final test accuracy is sufficiently high. Figure \ref{fig:MNISTtrain} shows show the results for MNIST. Figure \ref{fig:CIFAR10train} shows the same experiment on CIFAR-10 ($n=3072$). Notice that training reduces the model’s sensitivity, but less strongly compared to MNIST. This aligns with the fact that both the training and test errors are larger for CIFAR-10 (which achieves $84.38\%$ training accuracy).

\begin{figure}[htbp]

    \centering

    \begin{subfigure}[b]{0.45\textwidth}
        \centering
        \includegraphics[width=\textwidth]{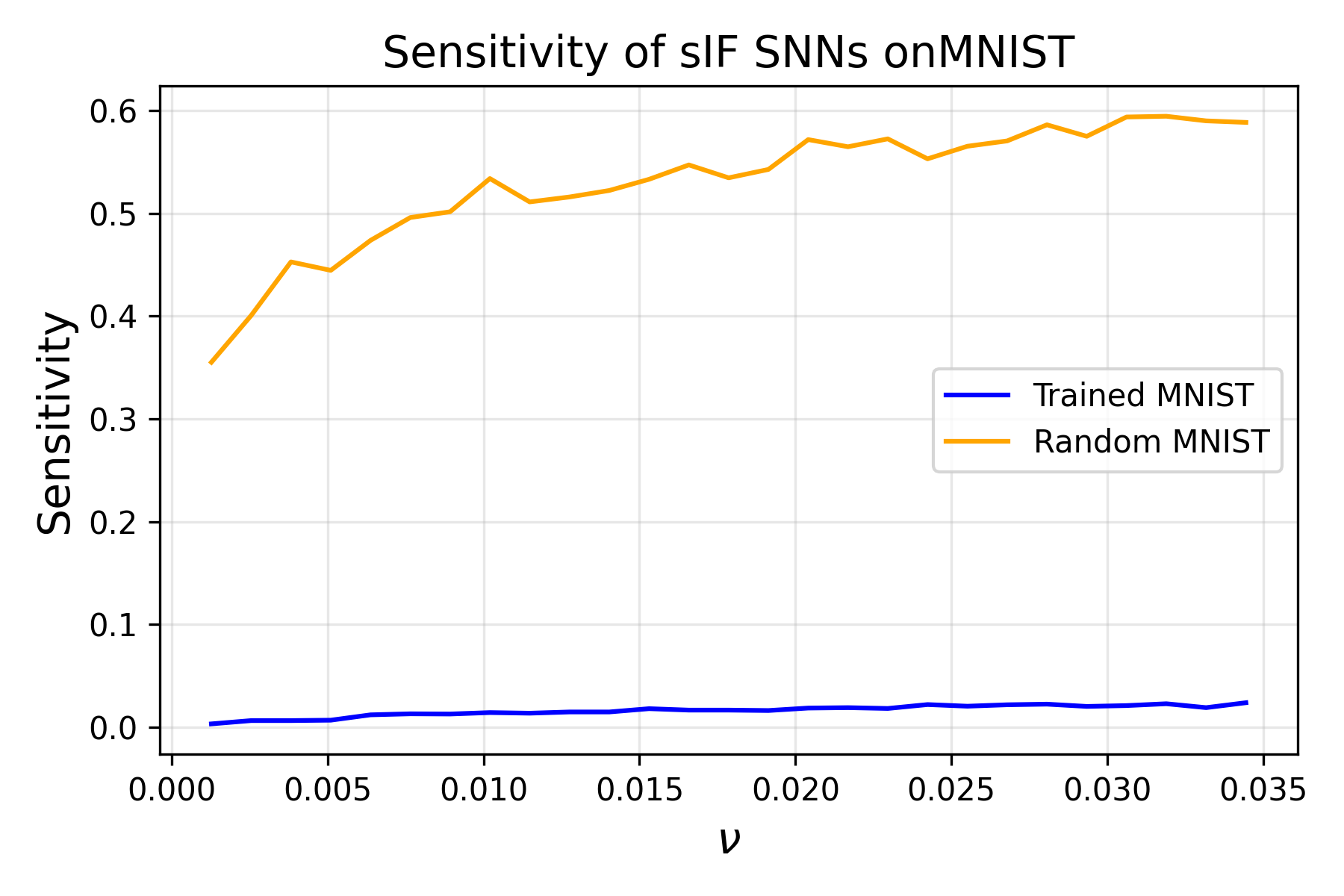}
        \caption{}
        \label{fig:MNISTtrain}
    \end{subfigure}
    \hfill
    \begin{subfigure}[b]{0.45\textwidth}
        \centering
        \includegraphics[width=\textwidth]{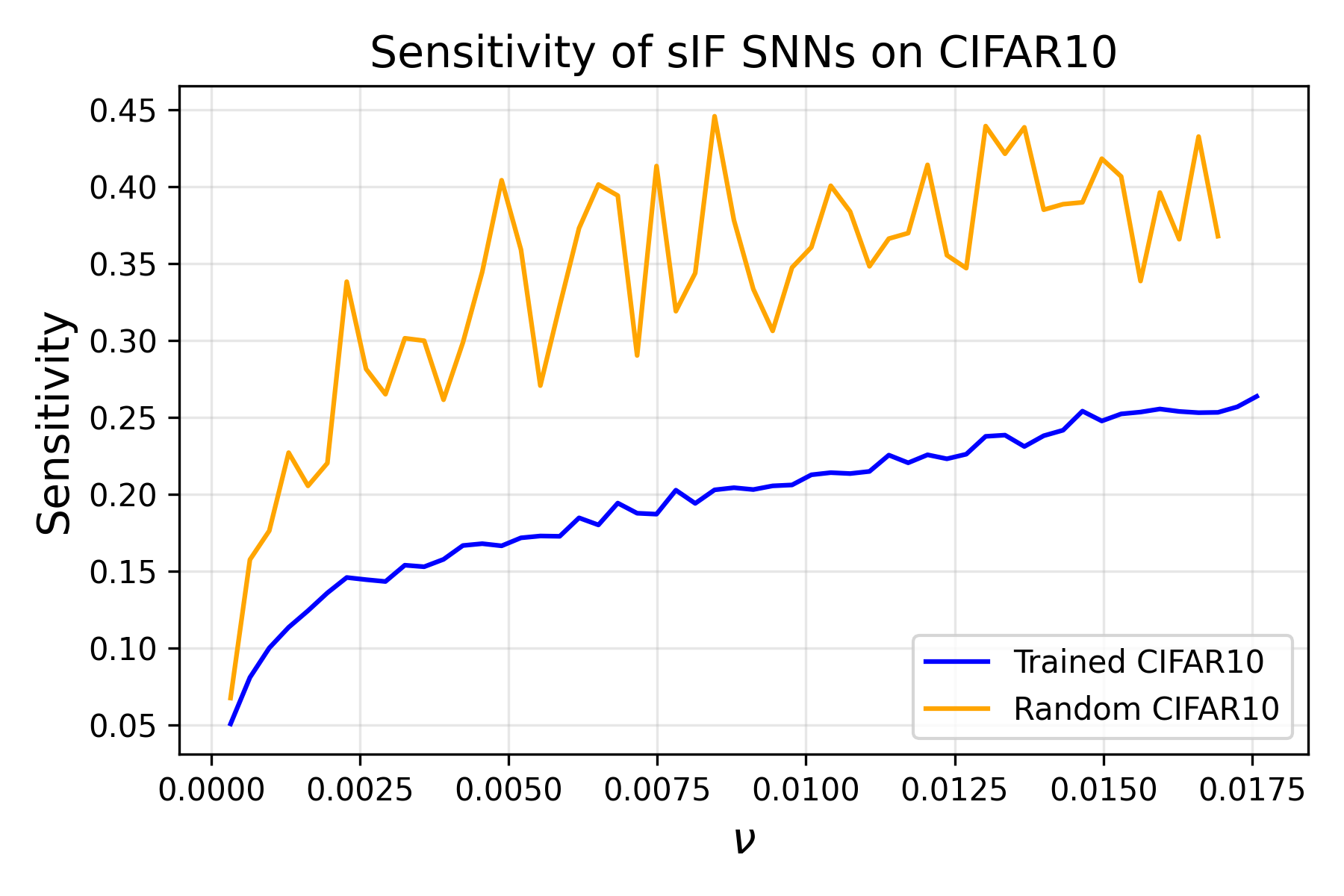}
        \caption{}
        \label{fig:fig2}
    \end{subfigure}

    \caption{Sensitivity to input perturbations in sLIF-SNNs ($T=100$, $\theta=0.5$, $\beta=1$, $L=3$), shown at initialization and after training on \textbf{(a)} MNIST and \textbf{(b)} CIFAR-10.}
    \label{fig:CIFAR10train}
\end{figure}

\section{Conclusion}\label{sec:conclusion}
In this paper, we study the stability of wide SNN classifiers through the lens of Boolean function analysis. We provide quantitative bounds on their expected noise sensitivity and show how these stability guarantees connect to simplicity bias, motivating a new notion of simplicity.  

The classifiers we analyze are widely used in practice, and most of our assumptions can be relaxed. The main restriction is the requirement of sufficiently large widths, needed to apply concentration of measure; whether this condition can be weakened remains an open question. On the other hand, the uniform input distribution assumed in Definition~\ref{def:ENS} does not affect Corollary~\ref{cor:ens_bound}, making extensions to other input distributions straightforward.  

% %
% In this paper, we investigate the stability of wide SNN classifiers through the lens of Boolean function analysis. We establish quantitative bounds on their expected noise sensitivity and link these stability guarantees to simplicity bias, motivating a new notion of simplicity.
% %
% The classifiers considered here are widely used in practice, and most of our assumptions can be relaxed. A key assumption is that widths are sufficiently large, which we require to invoke concentration of measure; determining whether this can be weakened remains an open question. The uniform distribution of inputs in Definition~\ref{def:ENS} does not affect Corollary~\ref{cor:ens_bound}, so extensions to other distributions are straightforward.  

\paragraph{Future directions.} Several avenues merit further investigation:  
\begin{itemize}
    \item Extending our results to feedforward RNNs, where the dynamics are simpler, as well as to more general non-feedforward SNNs.  
    \item Studying stability under alternative perturbation distributions, a step toward understanding adversarial robustness.  
    \item Investigating the average distance to the nearest input with a different label (see \eqref{eq:N_h}), as in \citep{depalma2019random}. Unlike their setting, a simple union bound fails here, since our $\mathcal{O}(1/\sqrt{n})$ bound is insufficient given the exponentially many inputs at distance $\mathcal{O}(\sqrt{n})$.  
    \item Understanding how initialization impacts stability in SNNs, and whether classical ANN initialization schemes are optimal in this context.  
\end{itemize}

\newpage
\section*{Ethics Statement}
This work focuses on the theoretical analysis of robustness in machine learning and does not involve experiments on human subjects, sensitive personal data, or applications with direct societal risks. The datasets referenced are publicly available, and no private or restricted data was used. Potential ethical concerns related to misuse are minimal, as the contributions are mainly theoretical.
\paragraph{Acknowledgment of LLM Use.}  We explicitly acknowledge that large language models (LLMs) were used solely for polishing code, improving sentence clarity, and refining grammar.
They were not used for generating research ideas, proofs, or results.
\section*{Reproducibility Statement}
We have taken multiple steps to ensure reproducibility of our results. All theoretical claims are accompanied by rigorous proofs, presented in detail in the appendix. Assumptions underlying the theorems are explicitly stated, and definitions are given in full to allow independent verification. 
\bibliographystyle{mystyle}
\bibliography{refs.bib}
\newpage

\appendix

\section{Known Results in Statistics}
In this appendix, we recall some standard results from probability and statistics that are used throughout our proofs
\begin{lemma}[Linear Combination of Gaussians]\label{lem:lincombgauisgau}
Let $X_1, \dots, X_n$ be independent Gaussian random variables with $X_i \sim \mathcal{N}(\mu_i, \sigma_i^2)$, and let $a_1, \dots, a_n \in \mathbb{R}$. Then the linear combination
\[
Y = \sum_{i=1}^n a_i X_i
\]
is also a Gaussian random variable with mean and variance given by:
\[
\mathbb{E}[Y] = \sum_{i=1}^n a_i \mu_i, \quad \mathrm{Var}(Y) = \sum_{i=1}^n a_i^2 \sigma_i^2.
\]
\end{lemma}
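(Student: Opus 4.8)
The plan is to prove Gaussianity via characteristic functions, since the defining feature of a Gaussian is precisely the exponential-quadratic form of its characteristic function, and independence converts the characteristic function of a sum into a product. The mean and variance formulas are the comparatively trivial part: $\E[Y]=\sum_i a_i\mu_i$ is immediate from linearity of expectation, and $\Var(Y)=\sum_i a_i^2\sigma_i^2$ follows because independence forces all cross-covariances $\Cov(X_i,X_j)$ with $i\neq j$ to vanish, leaving only the diagonal terms $a_i^2\Var(X_i)$. The substantive claim is that $Y$ is \emph{itself} Gaussian, and that is where I would concentrate the argument.

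First I would recall that for $X_i\sim\cN(\mu_i,\sigma_i^2)$ the characteristic function is $\varphi_{X_i}(s)=\exp\!\big(\mathrm{i}s\mu_i-\tfrac12 s^2\sigma_i^2\big)$ for $s\in\R$. Next, using that $X_1,\dots,X_n$ are independent — hence so are the scaled variables $a_iX_i$ — the characteristic function of $Y=\sum_i a_iX_i$ factorizes, and collecting the linear and quadratic exponents gives
\[
\varphi_Y(s)=\prod_{i=1}^n \varphi_{X_i}(a_i s)=\prod_{i=1}^n\exp\!\Big(\mathrm{i}s\,a_i\mu_i-\tfrac12 s^2 a_i^2\sigma_i^2\Big)=\exp\!\Big(\mathrm{i}s\sum_{i=1}^n a_i\mu_i-\tfrac12 s^2\sum_{i=1}^n a_i^2\sigma_i^2\Big).
\]
This is exactly the characteristic function of a $\cN\big(\sum_i a_i\mu_i,\ \sum_i a_i^2\sigma_i^2\big)$ law, so the parameters claimed in the statement are read off directly from the exponent.

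The one step that genuinely requires care — and the main (if mild) obstacle — is invoking the uniqueness theorem for characteristic functions to conclude that matching characteristic functions forces equality of distributions; I would simply cite this as a standard fact. A secondary point worth flagging is the degenerate case $\sum_i a_i^2\sigma_i^2=0$ (e.g.\ when every $a_i=0$), in which $Y$ is almost surely constant: this is still covered if one permits the degenerate Gaussian $\cN(m,0)=\delta_m$, whose characteristic function $\exp(\mathrm{i}sm)$ agrees with the formula above. As an alternative I could avoid characteristic functions entirely, proving the $n=2$ case by directly convolving the two densities (completing the square inside the resulting integral) and then inducting on $n$; but this route is more computational and handles the degenerate case less cleanly, so I would prefer the characteristic-function argument.
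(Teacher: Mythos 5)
Your proof is correct. The paper itself offers no proof of this lemma: it appears in the appendix ``Known Results in Statistics'' as a standard fact recalled without argument, so there is no competing approach to compare against. Your characteristic-function derivation --- factorizing $\varphi_Y$ by independence, reading off the parameters, invoking uniqueness of characteristic functions, and noting the degenerate case $\sum_i a_i^2\sigma_i^2 = 0$ --- is exactly the canonical justification the paper implicitly relies on, and it is complete as written.
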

\begin{theorem}[Chernoof Bound]\label{thm:Chernoff} Let $\{X_i\}_{i=1}^n$ a sequence of independent random variables such that $X_i = 1$ with probability $p_i$ and $X_i = 0$ with probability $1-p_i$. Let us consider $X = \sum_{i=1}^n X_i$. Then $\mu := \E[X] = \sum_{i=1}^n p_i$ and for all $\varepsilon>0$:\[
\P(X\ge (1+\varepsilon)\mu) \le e^{-\frac{\varepsilon^2}{2+\varepsilon} \mu}
\] 
\end{theorem}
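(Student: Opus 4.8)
The plan is to apply the standard exponential-moment (Chernoff) method: control the moment generating function of the sum of independent Bernoulli variables, optimize the free exponential parameter, and then simplify the resulting exponent to the clean form stated in the theorem.

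First I would fix an arbitrary $s>0$ and invoke the exponential Markov inequality: since $x\mapsto e^{sx}$ is increasing and positive,
\[
\P\big(X\ge(1+\varepsilon)\mu\big)=\P\big(e^{sX}\ge e^{s(1+\varepsilon)\mu}\big)\le e^{-s(1+\varepsilon)\mu}\,\E\big[e^{sX}\big].
\]
By independence of the $X_i$ the moment generating function factorizes, and since each $X_i$ is Bernoulli,
\[
\E\big[e^{sX}\big]=\prod_{i=1}^{n}\E\big[e^{sX_i}\big]=\prod_{i=1}^{n}\big(1+p_i(e^{s}-1)\big).
\]
Applying the elementary bound $1+u\le e^{u}$ with $u=p_i(e^{s}-1)$ and summing exponents (recall $\mu=\sum_i p_i$) gives $\E[e^{sX}]\le\exp\big((e^{s}-1)\mu\big)$, hence
\[
\P\big(X\ge(1+\varepsilon)\mu\big)\le\exp\Big(\mu\big[(e^{s}-1)-s(1+\varepsilon)\big]\Big).
\]

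Next I would optimize the exponent over $s>0$. Differentiating $s\mapsto(e^{s}-1)-s(1+\varepsilon)$ and setting the derivative to zero gives $e^{s}=1+\varepsilon$, i.e.\ the admissible minimizer $s=\log(1+\varepsilon)>0$. Substituting back produces the classical bound
\[
\P\big(X\ge(1+\varepsilon)\mu\big)\le\exp\Big(\mu\big[\varepsilon-(1+\varepsilon)\log(1+\varepsilon)\big]\Big).
\]

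The only remaining, and main, step is to upgrade this exact exponent to the advertised form $e^{-\varepsilon^{2}\mu/(2+\varepsilon)}$, which reduces to the scalar inequality $(1+\varepsilon)\log(1+\varepsilon)\ge\varepsilon+\varepsilon^{2}/(2+\varepsilon)$ for $\varepsilon\ge0$. I would obtain this from the cleaner estimate $\log(1+\varepsilon)\ge 2\varepsilon/(2+\varepsilon)$: multiplying by $(1+\varepsilon)>0$ and combining the right-hand side over the common denominator $2+\varepsilon$ yields exactly the required inequality, since $\varepsilon+\varepsilon^{2}/(2+\varepsilon)=2\varepsilon(1+\varepsilon)/(2+\varepsilon)$. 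To prove the estimate, set $h(\varepsilon)=\log(1+\varepsilon)-2\varepsilon/(2+\varepsilon)$; then $h(0)=0$ and $h'(\varepsilon)=\tfrac{1}{1+\varepsilon}-\tfrac{4}{(2+\varepsilon)^{2}}$, whose sign matches that of $(2+\varepsilon)^{2}-4(1+\varepsilon)=\varepsilon^{2}\ge0$, so $h$ is nondecreasing and therefore nonnegative on $[0,\infty)$. This is the crux of the argument: everything before it is routine Chernoff machinery, whereas pinning down the tight quadratic exponent rests on this elementary but slightly delicate monotonicity check.
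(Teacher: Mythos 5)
Your proof is correct. Note that the paper itself does not prove this statement: it appears in the appendix of recalled standard results and is stated without proof, so there is no paper-side argument to compare against. Your derivation is the classical Chernoff method — exponential Markov inequality, factorization of the moment generating function by independence, the bound $1+u\le e^{u}$ giving $\E[e^{sX}]\le e^{(e^{s}-1)\mu}$, the choice $s=\log(1+\varepsilon)$, and the reduction of the exponent to the scalar inequality $(1+\varepsilon)\log(1+\varepsilon)\ge \varepsilon+\varepsilon^{2}/(2+\varepsilon)$ — and every step checks out, including the identity $\varepsilon+\varepsilon^{2}/(2+\varepsilon)=2\varepsilon(1+\varepsilon)/(2+\varepsilon)$ and the monotonicity argument for $h(\varepsilon)=\log(1+\varepsilon)-2\varepsilon/(2+\varepsilon)$, whose derivative has the sign of $(2+\varepsilon)^{2}-4(1+\varepsilon)=\varepsilon^{2}\ge 0$. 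This fills a gap the paper leaves implicit, which is a perfectly acceptable (indeed the standard) way to establish the stated bound.
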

\begin{defn}\label{def:stochdom}
    Let $X$ and $Y$ be two random variables such that

$$
P\{X>x\} \leq P\{Y>x\} \quad \text { for all } x \in(-\infty, \infty)
$$
then $X$ is said to be smaller than $Y$ in the usual stochastic order (denoted by $X<_{\mathrm{st}} Y$ ). 
\end{defn}

We have the following result for the stochastic domination in Binomial variables. We include its proof for completeness.

\begin{lemma}\label{lem:stochdom}
    If $X\sim Bin(n, p)$ and $Y\sim Bin(n, q)$ for some $0<p<q<1$. Then, for all $k \in [0, n]$, we have that $X<_{\mathrm{st}} Y$.
\end{lemma}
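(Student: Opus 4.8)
The plan is to establish the stochastic ordering directly via a \emph{monotone coupling}, which is the standard and most transparent route for comparing two binomial laws that share the same number of trials. The guiding observation is that both $X$ and $Y$ count successes in $n$ independent trials, and these trials can be realized on a common probability space so that every success contributing to $X$ also contributes to $Y$. (The quantifier ``for all $k\in[0,n]$'' in the statement is vestigial, since the conclusion $X<_{\mathrm{st}}Y$ is a single assertion about the tail functions; I would simply prove that assertion.)

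Concretely, I would let $U_1,\dots,U_n$ be i.i.d.\ $\mathrm{Unif}(0,1)$ random variables and define the indicators
\[
X_i := \mathbf{1}\{U_i \le p\}, \qquad Y_i := \mathbf{1}\{U_i \le q\}, \qquad i \in [n].
\]
Since each $U_i$ is uniform, $\P(X_i = 1) = p$ and $\P(Y_i = 1) = q$, and by independence of the $U_i$ the sums $\widetilde X := \sum_{i=1}^n X_i$ and $\widetilde Y := \sum_{i=1}^n Y_i$ have exactly the laws $\mathrm{Bin}(n,p)$ and $\mathrm{Bin}(n,q)$, respectively. Thus $\widetilde X \stackrel{d}{=} X$ and $\widetilde Y \stackrel{d}{=} Y$, so it suffices to prove the ordering for the coupled pair $(\widetilde X, \widetilde Y)$.

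The crucial point is the pointwise inequality: because $p < q$, the event $\{U_i \le p\}$ is contained in $\{U_i \le q\}$, hence $X_i \le Y_i$ for every $i$ and every realization of $U_i$. Summing over $i$ yields $\widetilde X \le \widetilde Y$ almost surely. Consequently, for every $x \in \mathbb{R}$ the inclusion $\{\widetilde X > x\} \subseteq \{\widetilde Y > x\}$ holds, giving $\P(\widetilde X > x) \le \P(\widetilde Y > x)$. Translating back through the equalities in distribution, $\P(X > x) \le \P(Y > x)$ for all $x$, i.e.\ $X <_{\mathrm{st}} Y$ in the sense of Definition~\ref{def:stochdom}.

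There is essentially no hard step here: the whole content of the argument is the construction of the coupling, and the only items to verify are that the constructed indicators have the correct marginals (immediate from uniformity of $U_i$) and that independence of the $U_i$ transfers to the two families, so that the Binomial laws are recovered. If one wished to avoid coupling, an alternative would be to show that $r \mapsto \P(\mathrm{Bin}(n,r) > x)$ is nondecreasing for each fixed integer threshold $x$, by differentiating the tail probability in $r$ and checking that the derivative is nonnegative (a telescoping computation); but this calculus route is more cumbersome and obscures the underlying mechanism, so I would adopt the coupling argument, which makes the monotonicity manifest.
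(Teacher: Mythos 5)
Your proof is correct, and it takes a genuinely different route from the paper. You use the standard monotone coupling: realizing both binomials from the same i.i.d.\ uniforms $U_1,\dots,U_n$ via the indicators $\mathbf{1}\{U_i\le p\}$ and $\mathbf{1}\{U_i\le q\}$, so that domination holds almost surely coordinate-wise and the tail inequality follows immediately. The paper instead argues analytically through the probability mass functions: it forms the likelihood ratio $\Phi(k)=f_Y(k)/f_X(k)=(q/p)^k\bigl((1-q)/(1-p)\bigr)^{n-k}$, observes it is increasing in $k$ and crosses $1$ exactly once, deduces that the signed difference $h=f_X-f_Y$ is positive before the crossing and negative after, and then uses $\sum_k h(k)=0$ to conclude that all partial sums $\sum_{j\le k}h(j)$ are nonnegative, which is precisely the tail ordering. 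In effect, the paper proves the (stronger) monotone-likelihood-ratio property and derives stochastic dominance from it, while you prove stochastic dominance directly. Your coupling is shorter, more transparent, and sidesteps a couple of informalities in the paper's write-up (e.g., invoking ``continuity'' of $\Phi$ on a discrete domain to locate the crossing point); the paper's computation, on the other hand, yields the single-crossing structure of the pmfs as a byproduct, which is of independent interest. Your side remark that the quantifier ``for all $k\in[0,n]$'' in the statement is vestigial is also correct.
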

\begin{proof}
   Let us recall that the probability mass functions: \[
        f_X(k)=\binom{n}{k} p^k(1-p)^{n-k}, \quad f_Y(k)=\binom{n}{k} q^k(1-q)^{n-k} \, ,
   \]
   and, hence, we have
   \[
   \frac{f_Y(k)}{f_X(k)}=\left(\frac{q}{p}\right)^k\left(\frac{1-q}{1-p}\right)^{n-k} =: \Phi(k)\, .
   \]
Let $h(k):=f_X(k)-f_Y(k)$. Since $f_X$ and $f_Y$ are both probability mass functions, it holds $\sum_k h(k)=\sum_k\left(f_X(k)-f_Y(k)\right)=0$ and, hence, $h$ is a signed measure with total mass zero. Since the $\Phi$ is increasing in $k$, $\Phi(0)< 1$ and $\Phi(n)>1$, by continuity of $\Phi$, there exists a (unique) $k^{*}\in (0, n)$ such that:
\begin{enumerate}
    \item $0< \Phi(k)< 1$ for all $k\in(0,k^{*})$;
    \item $\Phi(k)>1$ for all $k\in(k^{*}, n]$.
\end{enumerate}
Therefore, we have that:
\begin{align}\label{eq:signofh}
    h(k)>0 \text{ for }k\in(0,k^{*}) \text{ and } h(k)<0 \text{ for }k\in(k^{*}, n]
\end{align}
Combining \eqref{eq:signofh} and $\sum_{k=0}^n h(k) = 0$, we notice that 
$\sum_{j=0}^k h(j) \ge 0$ for all $k\in [n]$. Therefore, for all $k\in[0, n]$, we obtain that
\begin{align*}
    \P(X> k) & = 1 - \sum_{j=0}^k f_X(j) = 1 - \sum_{j=0}^k f_Y(j) - \sum_{j=0}^k \big(f_X(j) - f_Y(j)\big)  = 1- \sum_{j=0}^k f_Y(j) - \sum_{j=0}^k h(j) \\& \le 1- \sum_{j=0}^k f_Y(j) = \P(Y> k) \, ,
\end{align*}
which concludes the proof.
\end{proof}
The following result may already be established; however, since we were unable to find a reference, we provide a proof here.
\begin{lemma}\label{lem:new_tech_lemm}
    Let $\rho\in(0, 1], a, b>\R$ and $\overline{X}, \overline{Z} \overset{i.i.d.}{\sim} \cN(0,1)$, it holds 
    \begin{align*}
        & \P[\overline{X}\le a,\, \rho\overline{X}+\sqrt{1-\rho^2}Z> b] \le \sqrt{1-\rho^2} + |a| \frac{1 - \rho}{\rho} + \left|\frac{(b-a)}{\rho}\right|\\
        & \P[\overline{X}>a,\, \rho\overline{X}+\sqrt{1-\rho^2}Z\leq b]\le \sqrt{1-\rho^2} + |a| \frac{1 - \rho}{\rho} + \left|\frac{(b-a)}{\rho}\right|\, .
    \end{align*}
\end{lemma}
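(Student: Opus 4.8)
The plan is to exploit that the pair $(\overline{X}, Y)$, with $Y := \rho\overline{X} + \sqrt{1-\rho^2}Z$, is centered jointly Gaussian with unit marginals and correlation $\rho$, and to control the ``disagreement'' probability $\P[\overline{X}\le a,\ Y>b]$ by interpolating between the two threshold comparisons through the auxiliary quantity $\rho\overline{X}$. Concretely, I would set $E_1 := \mathbf{1}[\overline{X} > a] = \mathbf{1}[\rho\overline{X} > \rho a]$ (using $\rho>0$), $E_2 := \mathbf{1}[\rho\overline{X} > b]$, and $E_3 := \mathbf{1}[Y > b]$. On the event of interest $E_1 = 0$ while $E_3 = 1$, so the value cannot be preserved along the chain $E_1 \to E_2 \to E_3$; hence $\{\overline{X}\le a,\ Y>b\} \subseteq \{E_1 \ne E_2\}\cup\{E_2 \ne E_3\}$, and a union bound reduces the task to estimating the two consecutive-disagreement probabilities, which will supply the three terms of the claimed bound.

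For the first term I would use Gaussian anti-concentration: $E_1 \ne E_2$ holds exactly when $\rho\overline{X}$ lies strictly between the thresholds $\rho a$ and $b$, an interval of length $|b-\rho a|$. Since $\rho\overline{X}\sim\cN(0,\rho^2)$ has density bounded by $(\rho\sqrt{2\pi})^{-1}$, this gives $\P[E_1\ne E_2]\le |b-\rho a|/(\rho\sqrt{2\pi})$. The triangle inequality $|b-\rho a|\le |b-a| + |a|(1-\rho)$, together with $1/\sqrt{2\pi}<1$, then yields precisely the two terms $|a|\tfrac{1-\rho}{\rho} + \left|\tfrac{b-a}{\rho}\right|$.

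The delicate term, and the step I expect to be the main obstacle, is $\P[E_2\ne E_3]$: it measures how often the independent noise $\sqrt{1-\rho^2}Z$ pushes $\rho\overline{X}$ across $b$. Conditioning on $\overline{X}$ and using the symmetry of the noise, this probability equals $\E_{\overline{X}}\!\left[\overline{\Phi}\!\left(|\rho\overline{X} - b|/\sqrt{1-\rho^2}\right)\right]$, where $\overline{\Phi}$ is the standard Gaussian tail. The difficulty is to bound this by exactly $\sqrt{1-\rho^2}$: a naive density-plus-Markov estimate introduces a spurious factor $1/\rho$, which has the wrong order as $\rho\to 0$. I would instead invoke the sub-Gaussian tail $\overline{\Phi}(t)\le \tfrac12 e^{-t^2/2}$ and evaluate the resulting Gaussian integral in closed form; completing the square collapses the combined exponent and produces $\tfrac12 e^{-b^2/2}\sqrt{1-\rho^2}\le \sqrt{1-\rho^2}$, the remaining term. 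Summing the three contributions establishes the first inequality.

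Finally, for the second inequality I would apply the sign-flip substitution $(\overline{X}, Z)\mapsto(-\overline{X}, -Z)$, which preserves the joint law (and the correlation $\rho$) while sending $(a,b)\mapsto(-a,-b)$ and turning $\{\overline{X}>a,\ Y\le b\}$ into an event of the first type. Since the right-hand side depends only on $|a|$ and $|b-a|$, both invariant under this map, the bound transfers verbatim.
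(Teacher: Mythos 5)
Your proof is correct and is, at its core, the same argument as the paper's: both split according to whether the intermediate quantity $\rho\overline{X}$ crosses the threshold $b$, bound the resulting ``gap'' contribution by an interval-length-times-density estimate (yielding $|a|\tfrac{1-\rho}{\rho} + \left|\tfrac{b-a}{\rho}\right|$ via $|b-\rho a|\le|b-a|+|a|(1-\rho)$), bound the noise-crossing contribution by a sub-Gaussian tail plus completing the square (yielding $\tfrac12 e^{-b^2/2}\sqrt{1-\rho^2}\le\sqrt{1-\rho^2}$), and obtain the remaining inequality via the sign flip $(\overline{X},Z)\mapsto(-\overline{X},-Z)$. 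The only difference is presentational: you organize the split as a union bound over the chain $E_1\to E_2\to E_3$, whereas the paper performs a change of variables in $\int_a^\infty \Phi\bigl(\tfrac{b-\rho x}{\sqrt{1-\rho^2}}\bigr)\varphi(x)\,dx$ and splits the integration domain at the same point $\rho x = b$.
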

\begin{proof} 
Let $\varphi(x):= \frac{1}{\sqrt{2\pi}} e^{-x^2/2}$ be the density of a normal random variable and $\Phi(x)$ the corresponding CDF, then 
\begin{align*}
  & \P[\overline{X}>a,\, \rho\overline{X}+\sqrt{1-\rho^2}Z\leq b]  = \int_a^\infty \Phi\left(\frac{b- \rho x}{\sqrt{1-\rho^2}}\right) \varphi(x) dx \\
  & \quad= \int_{-\infty}^{\frac{b - \rho a}{\sqrt{1-\rho^2}}}\Phi(\eta) \varphi\left(\frac{\sqrt{1-\rho^2}\eta-b}{\rho}\right) \frac{\sqrt{1- \rho^2}}{\rho}\ d\eta \\
  & \quad\le \int_{-\infty}^0 \Phi(\eta) \varphi\left(\frac{\sqrt{1-\rho^2}\eta-b}{\rho}\right) \frac{\sqrt{1- \rho^2}}{\rho}\ d\eta + \left|\int_{0}^\frac{b - \rho a}{\sqrt{1-\rho^2}} \Phi(\eta) \varphi\left(\frac{\sqrt{1-\rho^2}\eta-b}{\rho}\right) \frac{\sqrt{1- \rho^2}}{\rho}\ d\eta\right|
\end{align*}
The first term can be bounded by 
\begin{align*}
& \int_{-\infty}^0 \Phi(\eta) \varphi\left(\frac{\sqrt{1-\rho^2}\eta-b}{\rho}\right) \frac{\sqrt{1- \rho^2}}{\rho}\ d\eta \le \int_{-\infty}^0 \frac{1}{\sqrt{2\pi}}e^{-\frac{\eta^2}{2} - \frac{(\sqrt{1-\rho^2}\eta-b)^2}{2\rho^2}}\frac{\sqrt{1- \rho^2}}{\rho} \ d\eta \\
& \quad = \frac{\sqrt{1- \rho^2}}{\sqrt{2\pi}\rho}\int_{-\infty}^0 e^{-\frac{\eta^2\rho^2 + (1-\rho^2)\eta^2 -2\sqrt{1-\rho^2}\eta b + b^2}{2\rho^2} }\ d\eta\\
& \quad = \frac{\sqrt{1- \rho^2}}{\sqrt{2\pi}\rho}\int_{-\infty}^0 e^{-\frac{\eta^2 -2\sqrt{1-\rho^2}\eta b + b^2}{2\rho^2} }\ d\eta \\
& \quad = \frac{\sqrt{1- \rho^2}}{\sqrt{2\pi}\rho}e^{-\rho^2b^2}\int_{-\infty}^0 e^{-\frac{(\eta - \sqrt{1- \rho^2}b)^2}{2\rho^2} }\ d\eta \\
& \quad \le \sqrt{1-\rho^2}
\end{align*}
We control now the second term, obtaining that
\begin{align*}
    & \left|\int_{0}^\frac{b - \rho a}{\sqrt{1-\rho^2}} \Phi(\eta) \varphi\left(\frac{\sqrt{1-\rho^2}\eta-b}{\rho}\right) \frac{\sqrt{1- \rho^2}}{\rho}\ d\eta\right| \\
    & \quad\le \left|\frac{b - \rho a}{\sqrt{1-\rho^2}}\right|\frac{\sqrt{1- \rho^2}}{\rho} \\
    & \quad= \left|\sqrt{\frac{1 - \rho}{1+\rho}} a + \frac{(b-a)}{\sqrt{1-\rho^2}}\right|\frac{\sqrt{1- \rho^2}}{\rho}\\
    & \quad\le |a| \left|\frac{\sqrt{(1 - \rho)(1-\rho^2)}}{\sqrt{(1+\rho)}\ \rho} \right| + \left|\frac{(b-a)}{\sqrt{1-\rho^2}}\right|\frac{\sqrt{1- \rho^2}}{\rho} \\
    & \quad\le |a| \frac{1 - \rho}{\rho} + \left|\frac{(b-a)}{\rho}\right|
\end{align*}
Combining the previous equations, we conclude the proof by noticing that 
\[
\P[\overline{X}>a,\, \rho\overline{X}+\sqrt{1-\rho^2}Z\leq b] = \P[-\overline{X}\le - a,\, -\rho\overline{X}-\sqrt{1-\rho^2}Z>-b]\, 
\]
and $-\overline{X}$ and $-\rho\overline{X}-\sqrt{1-\rho^2}Z$ are $\rho$-correlated.
\end{proof}

\section{Technical Lemmas}
We now provide a more explicit expression for the output of a sLIF neuron with threshold $\theta$ at time $t$. We give the result with $\beta=1$, but it clearly generalizes to $\beta\in[0,1]$ by considering a weighted sum of the input sequence, instead of the sum itself.
\begin{lemma}\label{lem:sIFasg}
    Let $\theta>0, T\in\N,$ and $w\in\R^{n}$. Let us consider a (discrete) sLIF neuron, according to \eqref{def:SNN}, with parameters $w,\theta,T$ and $\beta=1$, and input signal $(x_t)_{t\in[T]}$. Then, for all $t\in [T]$, the output of the sLIF neuron at time $t$ can be computed recursively as described below
    \[
    \begin{cases}
    s_t = \sign\left(w^\top(\sum^t_{k=1}x_k) -\theta\left(1+\frac{1}{2}\sum_{k=0}^{t-1}(s_k +1 )\right) \right)   \\
    s_0 = -1
    \end{cases}.
    \]
\end{lemma}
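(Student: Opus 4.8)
The plan is to unroll the membrane-potential recursion directly, exploiting the fact that with $\beta=1$ the update for $u_t$ telescopes. Starting from the dynamics in \eqref{eq:singleneuron} specialized to $\beta=1$, namely $u_t = u_{t-1} + w^\top x_t - \tfrac{\theta}{2}(s_{t-1}+1)$, I would rewrite it as $u_t - u_{t-1} = w^\top x_t - \tfrac{\theta}{2}(s_{t-1}+1)$ and sum this identity over the first $t$ time steps. The left-hand side collapses to $u_t - u_0$, and the initial condition $u_0 = 0$ then yields a closed form for $u_t$ as the accumulated input minus the accumulated reset contributions.

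Concretely, the telescoping gives
\[
u_t = w^\top\Big(\sum_{k=1}^t x_k\Big) - \frac{\theta}{2}\sum_{k=1}^t (s_{k-1}+1),
\]
which could equally be established by a short induction on $t$ (base case $t=1$ uses $u_0=0$, and the inductive step simply appends one more term of the recursion). A reindexing of the reset sum via $j=k-1$ rewrites the second term as $\tfrac{\theta}{2}\sum_{j=0}^{t-1}(s_j+1)$. Here the boundary convention $s_0=-1$ is introduced precisely so that the $t=1$ reset term vanishes, matching the fact that no spike has been emitted before the first time step; note that this also recovers $u_1 = w^\top x_1$, consistent with the raw recursion.

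Finally, I would substitute this expression into the firing rule $s_t = \sign(u_t - \theta)$ and factor $\theta$ out of the threshold and reset contributions, obtaining
\[
s_t = \sign\!\left(w^\top\Big(\sum_{k=1}^t x_k\Big) - \theta\Big(1 + \tfrac{1}{2}\sum_{k=0}^{t-1}(s_k+1)\Big)\right),
\]
which is exactly the claimed identity. There is no substantial obstacle: the argument is a routine unrolling of a linear recurrence, and the only points requiring care are the bookkeeping of indices in the reset sum and the boundary convention $s_0=-1$. The remark accompanying the statement, concerning general $\beta\in[0,1]$, follows from the same computation with the plain sum $\sum_{k=1}^t x_k$ replaced by the geometrically weighted sum $\sum_{k=1}^t \beta^{t-k} x_k$ and the reset terms weighted by the corresponding powers of $\beta$.
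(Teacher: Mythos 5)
Your proof is correct and takes essentially the same approach as the paper's: both rewrite the update as the difference $u_k - u_{k-1} = w^\top x_k - \tfrac{\theta}{2}(s_{k-1}+1)$, telescope over $k \in [t]$ using $u_0 = 0$, and substitute the resulting closed form for $u_t$ into the firing rule $s_t = \sign(u_t - \theta)$. Your explicit handling of the reindexing and of the boundary convention $s_0 = -1$ only makes explicit what the paper's (very terse) proof leaves implicit.
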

\begin{proof}
For $k \in [T]$, we notice that 
\[
u_k - u_{k-1} = w^\top x_k - \frac{\theta}{2}\big(s_{k-1} + 1\big)\, ,
\]
then, summing over $k\in[t]$, we get
\[
u_t = w^\top \left(\sum^t_{k=1}x_k\right) - \frac{\theta}{2}\sum_{i=1}^t \big(s_{i-1} + 1\big) \, ,
\]
from which, together with \eqref{def:SNN}, the result follows.
\end{proof}

In the next lemma, we address a key challenge for dynamic inputs: although each element of an input sequence $x_1,\ldots,x_T$ lies in the hypercube $\{-1,1\}^n$, their partial sums—processed by the neuron at each time step (cf.\ Lemma~\ref{lem:sIFasg})—do not. This technical issue is absent in the static case, where the lemma is unnecessary.

\begin{lemma}\label{lem:orders}
    Let $T\ge 1$, $x_1, \dots, x_T \in \{-1,1\}^n$, $y_1, \dots, y_T \in \{-1,1\}^n$ and define $h_t = d_H(x_t, y_t)$, $h=\frac{1}{T} \sum_{t=1}^Th_t$, $\oxT :=\frac{1}{T}\sum_{t=1}^T x_t$ and $\oyT := \frac{1}{T}\sum_{t=1}^T y_t$. Let us assume that $h=\cO(\sqrt{n})$, then, it holds that either $\|\oyT\| = \|\oxT\| = \omega\left(\frac{\sqrt{n}}{\log n}\right)$ or $\|\oyT\| = \|\oxT\| = \cO(\frac{\sqrt{n}}{\log n})$. 
\end{lemma}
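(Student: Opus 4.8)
The plan is to reduce the stated dichotomy to a single quantitative fact: the time-averaged inputs $\oxT$ and $\oyT$ are so close in Euclidean norm that they cannot straddle the threshold $\sqrt n/\log n$. The whole argument is deterministic, and the only nontrivial ingredient is a bound on $\|\oxT-\oyT\|$; everything else is the reverse triangle inequality together with a comparison of growth rates.

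First I would bound the gap. Writing $\oxT-\oyT=\tfrac1T\sum_{t=1}^T(x_t-y_t)$ and applying the triangle inequality gives $\|\oxT-\oyT\|\le \tfrac1T\sum_{t=1}^T\|x_t-y_t\|$. Since each coordinate of $x_t-y_t$ equals $0$ where $x_t$ and $y_t$ agree and $\pm2$ where they differ, one has $\|x_t-y_t\|^2=4h_t$, so $\|x_t-y_t\|=2\sqrt{h_t}$. Concavity of the square root (Jensen's inequality) then yields $\tfrac1T\sum_t\sqrt{h_t}\le\sqrt{\tfrac1T\sum_t h_t}=\sqrt h$, and hence $\|\oxT-\oyT\|\le 2\sqrt h$. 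Feeding in the hypothesis $h=\cO(\sqrt n)$ gives $\|\oxT-\oyT\|=\cO(n^{1/4})$, and since $n^{1/4}\log n/\sqrt n\to0$ this gap is $o(\sqrt n/\log n)$, i.e.\ negligible against the threshold.

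Finally I would close the argument with the reverse triangle inequality $\bigl|\,\|\oxT\|-\|\oyT\|\,\bigr|\le\|\oxT-\oyT\|=o(\sqrt n/\log n)$. If $\|\oxT\|=\omega(\sqrt n/\log n)$, then $\|\oyT\|\ge\|\oxT\|-o(\sqrt n/\log n)$ is also $\omega(\sqrt n/\log n)$; and if $\|\oxT\|=\cO(\sqrt n/\log n)$ then $\|\oyT\|\le\|\oxT\|+o(\sqrt n/\log n)$ is also $\cO(\sqrt n/\log n)$ (and symmetrically with the roles of $x$ and $y$ swapped). Thus the two norms always fall in the same regime. The only real subtlety—the \emph{main obstacle}—is purely careful bookkeeping of the asymptotic scales: one must verify that the additive slack $\cO(n^{1/4})$ genuinely sits below the threshold $\sqrt n/\log n$, so that the two norms are forced into the same regime, and state the coupling so that it matches the intended reading of the conclusion (that $\|\oxT\|$ and $\|\oyT\|$ are comparable, not literally equal).
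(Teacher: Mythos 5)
Your proof is correct, and it takes a genuinely different route from the paper's. The paper never bounds $\|\oxT-\oyT\|$; instead it decomposes coordinates into the disagreement set $I=\bigcup_t\{i : x_{t,i}\neq y_{t,i}\}$ (of size $|I|\le hT=\cO(\sqrt n)$) and its complement, observes that $\oxT$ and $\oyT$ agree exactly on $I^c$, and rules out the possibility that a large norm comes from the $I$-part via $\|\oyT^{I}\|^2\le |I|$. Your argument replaces this support decomposition with a perturbation bound: $\|\oxT-\oyT\|\le\frac{2}{T}\sum_t\sqrt{h_t}\le 2\sqrt h=\cO(n^{1/4})=o(\sqrt n/\log n)$ by Jensen, followed by the reverse triangle inequality. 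The two proofs rest on the same counting fact (the total disagreement mass is $\cO(\sqrt n)$, hence contributes only $\cO(n^{1/4})$ in Euclidean norm), but your packaging buys something extra: a quantitative statement that the two norms differ by at most $\cO(n^{1/4})$, which is stronger than mere regime coupling, and you treat both branches of the dichotomy explicitly rather than arguing one case and appealing to symmetry. It also sidesteps a small infelicity in the paper's write-up, which splits $\|\oxT\|$ additively over $I$ and $I^c$ (an identity that holds for squared norms, not norms). One caveat applies equally to both proofs: neither establishes that the dichotomy is exhaustive (a sequence need not be either $\omega(\sqrt n/\log n)$ or $\cO(\sqrt n/\log n)$); what both actually prove is that $\|\oxT\|$ and $\|\oyT\|$ always lie in the same regime, which is how the lemma is used downstream.
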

\begin{proof}
    Let $I_t:=\{i\in [n]:\ x_{t, i} \ne y_{t, i}\}$ and $I= \bigcup_{t=1}^T I_t$. We notice that $h_t = |I_t|$ since $x_t, y_t \in \{-1,1\}^n$, and therefore
    \begin{equation}\label{eq:boundoncardI}
        |I| \le \sum_{t=1}^T |I_t| =  \sum_{t=1}^T h_t = h T \, .
    \end{equation}
    We observe that 
    \begin{itemize}
        \item[(i)] $\oxT^{I^c}= \oyT^{I^c}$ by definition of $I$;
        \item[(ii)] $\|\oxT\| = \|\oxT^{I}\| + \|\oxT^{I^c}\|$ and $\|\oyT\| = \|\oyT^{I}\| + \|\oyT^{I^c}\| $ since $I\cap I^c = \emptyset$.
    \end{itemize}
   Let us assume that $\|\oyT\| = \omega\left(\frac{\sqrt{n}}{\log n}\right)$. Then, we have two possibilities:
   \begin{enumerate}
       \item If $\|\oyT^{I^c}\|= \omega\left(\frac{\sqrt{n}}{\log n}\right)$, then $\|\oxT^{I^c}\|\overset{(i)}{=} \|\oyT^{I^c}\|=\omega\left(\frac{\sqrt{n}}{\log n}\right)$ and, using (ii), we conclude that $\|\oxT\|=\omega\left(\frac{\sqrt{n}}{\log n}\right)$;
       \item If $\|\oyT^{I}\|= \omega\left(\frac{\sqrt{n}}{\log n}\right)$, combining \eqref{eq:boundoncardI} and the fact $|\overline{y}_{T, i}^{I}| \le 1$ for $i\in I$ and $\overline{y}^I_{T, i}=0$ for $i\in I^C$, we obtain that 
       \begin{align*}
           hT > |I| \ge \|\oyT^{I}\|^2 = \omega\left(\frac{n}{\log^2 n}\right) \, .
       \end{align*}
       We note that, since $h = \cO(\sqrt{n})$, this scenario cannot occur for $n$ large enough.
   \end{enumerate}
    Finally, we conclude that either $\|\oyT\| = \|\oxT\| = \omega\left(\frac{\sqrt{n}}{\log n}\right)$ or $\|\oyT\| = \|\oxT\| = \cO(\frac{\sqrt{n}}{\log n})$
\end{proof}

We now give a straightforward generalization of \citep[Prop.3.3.]{Odonnell} to the case of expected spectrum concentration (Definition \ref{def:expected_spec_con}).
\begin{lemma}\label{lem:expec_degree}
    For a parametric family $\{f_w\}_{w\in\gW}$, a probability measure $\mu$ on $\gW$, and $x\sim \operatorname{Unif}(\{-1,1\}^n)$, $\xi\sim \operatorname{Rad}(1-\nu)$. Then, for $\nu\in(0,1/2]$,
    \[
    \E_{w\sim \mu}\left[\sum_{\substack{S\subseteq [n]\\|S|>1/\nu}}\hat{f}^2_{w}(S)\right]\leq 4\operatorname{\bf{ENS}}_{\nu}\left(\{f_w\}_{w\sim\mu}\right)\,.
    \]
\end{lemma}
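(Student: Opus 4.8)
The plan is to reduce the expected statement to the familiar single-function bound (essentially \citep[Prop.~3.3]{Odonnell}) and then pass to the average by linearity. First I would note that, since all quantities involved are nonnegative, Tonelli's theorem lets me condition on $w$ and write
\[
\operatorname{\bf{ENS}}_{\nu}\left(\{f_w\}_{w\sim\mu}\right)
= \E_{w\sim\mu}\Big[\,\P_{x,\xi}\big[f_w(x)\neq f_w(x\odot\xi)\big]\Big]
= \E_{w\sim\mu}\big[\mathbf{NS}_\nu(f_w)\big].
\]
Because the left-hand side of the lemma is already an expectation over $w$, it suffices to establish the pointwise (in $w$) inequality $\sum_{|S|>1/\nu}\hat f_w^2(S)\le 4\,\mathbf{NS}_\nu(f_w)$ for each fixed $w$ and then take $\E_{w\sim\mu}$ of both sides; the interchange of the expectation with the (finite) Fourier sum is immediate.

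For a fixed Boolean function $f=f_w:\{-1,1\}^n\to\{-1,1\}$, I would invoke the standard Fourier identity for noise stability, $\mathbf{Stab}_{1-2\nu}(f)=\sum_{S\subseteq[n]}(1-2\nu)^{|S|}\hat f(S)^2$, together with Parseval's identity $\sum_S \hat f(S)^2=\E[f(x)^2]=1$. Combining these with the equivalence $\mathbf{NS}_\nu(f)=\tfrac12\big(1-\mathbf{Stab}_{1-2\nu}(f)\big)$ recalled in the text yields
\[
\mathbf{NS}_\nu(f)=\tfrac12\sum_{S\subseteq[n]}\big(1-(1-2\nu)^{|S|}\big)\,\hat f(S)^2 .
\]
The next step is the elementary tail estimate on the coefficients $1-(1-2\nu)^{|S|}$. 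Here the hypothesis $\nu\in(0,1/2]$ is used so that $1-2\nu\in[0,1)$ and the bound $1-2\nu\le e^{-2\nu}$ applies; then for every $S$ with $|S|>1/\nu$ one has $(1-2\nu)^{|S|}\le e^{-2\nu|S|}<e^{-2}<\tfrac12$, hence $1-(1-2\nu)^{|S|}>\tfrac12$.

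Restricting the nonnegative sum above to the high-degree terms and applying this bound gives
\[
\mathbf{NS}_\nu(f)\ \ge\ \tfrac12\!\!\sum_{\substack{S\subseteq[n]\\|S|>1/\nu}}\!\!\big(1-(1-2\nu)^{|S|}\big)\hat f(S)^2\ \ge\ \tfrac14\!\!\sum_{\substack{S\subseteq[n]\\|S|>1/\nu}}\!\!\hat f(S)^2 ,
\]
i.e.\ $\sum_{|S|>1/\nu}\hat f(S)^2\le 4\,\mathbf{NS}_\nu(f)$. Taking the expectation over $w\sim\mu$ and using the identity from the first paragraph then delivers $\E_{w\sim\mu}\big[\sum_{|S|>1/\nu}\hat f_w^2(S)\big]\le 4\,\operatorname{\bf{ENS}}_{\nu}(\{f_w\}_{w\sim\mu})$. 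This argument is essentially routine; the only points requiring care are the monotonicity/positivity of the base $(1-2\nu)$, which is exactly why $\nu\le 1/2$ is assumed, and the (harmless) justification of moving $\E_{w\sim\mu}$ through the finite Fourier sum. I do not expect any genuine obstacle—the content is entirely the linearization of O'Donnell's single-function statement.
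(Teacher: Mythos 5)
Your proof is correct and follows essentially the same route as the paper's: both reduce to the pointwise bound $\sum_{|S|>1/\nu}\hat f_w^2(S)\le 4\,\mathbf{NS}_\nu(f_w)$ via the identity $2\,\mathbf{NS}_\nu(f_w)=\sum_S\bigl(1-(1-2\nu)^{|S|}\bigr)\hat f_w^2(S)$, restrict to the high-degree tail where the coefficient exceeds $1/2$, and then average over $w\sim\mu$. The only difference is cosmetic: the paper phrases the identity via the spectral sample and O'Donnell's Thm.~2.49 and bounds the constant by monotonicity of $1-(1-2\nu)^k$, whereas you expand the stability formula directly and use $1-2\nu\le e^{-2\nu}$ — the same computation in different notation.
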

\begin{proof}
Following \citep{Odonnell}, for a fixed $f_w$,
\begin{align*}
    \sum_{\substack{S\subseteq [n]\\|S|>1/\nu}}\hat{f}^2_{w}(S)=\P_{S\sim \mathbf{S}_{f_w}}\left[|S|>1/\nu\right],
\end{align*}
where the $\mathbf{S}_{f_w}$ denotes the probability distribution over the subsets of $[n]$, which assigns probability $ \hat{f}_w^2(S)$ to the subset $S$ (recall that the Fourier coefficients sum to $1$). Then, using \citep[Thm.2.49]{Odonnell}
\begin{align*}
2\operatorname{\bf{NS}}_\nu(f_w)&=\E_{S\sim\mathbf{S}_{f_w}}\left[1-(1-2\nu)^{|S|}\right]\\
&\geq \left(1-(1-2\nu)^{1/\nu}\right)\P_{S\sim\mathbf{S}_{f_w}}\left[|S|>1/\nu\right]\\
&\geq \frac12\sum_{\substack{S\subseteq [n]\\|S|>1/\nu}}\hat{f}^2_{w}(S).
\end{align*}
In the first inequality, the fact that $1-(1-2\nu)^{k}$ is non-decreasing in $k$ is used.
Then the result follows by taking expectation with respect to $w\sim \mu$.
\end{proof}

\section{Proofs of the main results}
\subsection{Proof of Theorem \ref{thm:singleneuronbound}}\label{app:proof_singleneuronbound}
Consider $x_1,\ldots,x_T \in\{-1,1\}^n$ and $y_1,\ldots,y_T\in\{-1,1\}^n$ with $d_H(x_t,y_t)=h_t$ for $t\in [T]$ and define $\overline{h}_t = \frac{1}{t}\sum_{k=1}^{t}h_t$. Notice that $h_t=\lfloor \nu_t n\rfloor$ and $\overline{h}_t=\lfloor \overline{\nu}_t n\rfloor$. 
% If we have now, a IF neuron processing dynamic data, we want to show that 
% %
% \[
% \P[s_T(x_1,\ldots,x_T)\neq s_T(x_1,\ldots,x_T)]\leq...
% \]
% %
We proceed by induction over $t\in[T]$. 

The base case $t=1$ requires to control $\P \big[s_1(x_1,w)\neq s_1(y_1,w)\big]$. We note that $\overline{h}_1 = h_1$. Let us define $X =\sqrt{n} w^\top \frac{x_1}{\|x_1\|}$ and $Y= \sqrt{n}w^\top \frac{y_1}{\|y_1\|}$, then $X$ and $Y$ are $\rho$-correlated with 
 \begin{align}\label{eq:corr}
     \notag\rho & = \left\langle \sqrt{n} w^\top \frac{x_1}{\|x_1\|},\sqrt{n} w^\top \frac{y_1}{\|y_1\|} \right\rangle = \frac{n}{\|x_1\|\|y_1\|}\sum_{j=1}^n w_j^2 x_iy_i \\
     & \ge \frac{1}{2}\left(\frac{\|x_1\|}{\|y_1\|}+\frac{\|y_1\|}{\|x_1\|} - \frac{h_1}{\|x_1\|\|y_1\|}\right) \\
     & \notag= \frac{1}{2}\left(\frac{\|x_1\|^2 + \|y_1\|^2 - h_1}{\|x_1\|\|y_1\|}\right)\\
     & \notag \ge \left(1- \frac{h_1}{2\|x_1\|\|y_1\|}\right)\, .
 \end{align}
 Hence, it holds that
\begin{align}\label{eq:controlsingneu}
    & \notag\P \big[s_1(x_1,w)\neq s_1(y_1,w)\big] \\
    & \notag\quad = \P \big[\sign(w^\top x_1 - \theta)\neq \sign(w^\top y_1 - \theta)\big] \\
    & \quad = \P \left[X>\theta \frac{\sqrt{n}}{\|x_1\|},\;Y\leq \theta\frac{\sqrt{n}}{\|y_1\|}\right] + \P \left[X\leq \theta\frac{\sqrt{n}}{\|x_1\|},\;Y>\theta\frac{\sqrt{n}}{\|y_1\|}\right] \, .
\end{align}
Using Lemma \ref{lem:orders}, we can distinguish two cases
\begin{enumerate}
    \item In the case $\|x_1\| = \|y_1\| = \cO(\frac{\sqrt{n}}{\log n})$, we get 
    \begin{align}\label{label:fund1case}
        & \P \left[X>\theta \frac{\sqrt{n}}{\|x_1\|},\;Y\leq \theta\frac{\sqrt{n}}{\|y_1\|}\right] \le  \P \left[X>\theta \frac{\sqrt{n}}{\|x_1\|}\right] \overset{(i)}{\le} e^{- \theta^2 \frac{n}{\|x_1\|^2}} = \cO(e^{- \theta^2 \log^2 n})\\
        &\notag \P \left[X\leq \theta\frac{\sqrt{n}}{\|x_1\|},\;Y>\theta\frac{\sqrt{n}}{\|y_1\|}\right]\le\P \left[Y>\theta\frac{\sqrt{n}}{\|y_1\|}\right] \overset{(ii)}{\le} e^{- \theta^2 \frac{n}{\|y_1\|^2}} = \cO(e^{- \theta^2 \log^2 n}) \, ,
    \end{align}
    where in (i) and (ii) we used Gaussian tail bounds.
    Combining \eqref{label:fund1case} and \eqref{eq:controlsingneu}, it holds
    \[
    \P \big[s_1(x_1,w)\neq s_1(y_1,w)\big] = \cO(e^{- \theta^2 \log^2 n}) \, .
    \]
    \item In the case $\|x_1\| = \|y_1\| = \omega(\frac{\sqrt{n}}{\log n})$, combining \eqref{eq:corr} and \eqref{eq:controlsingneu}, it holds
    \begin{align}\label{eq:secondbasiccase}
    & \notag\P \big[s_1(x_1,w)\neq s_1(y_1,w)\big] \\
    & \notag\quad = \P \left[X>\theta \frac{\sqrt{n}}{\|x_1\|},\;Y\leq \theta\frac{\sqrt{n}}{\|y_1\|}\right] + \P \left[X\leq \theta\frac{\sqrt{n}}{\|x_1\|},\;Y>\theta\frac{\sqrt{n}}{\|y_1\|}\right] \\
    &\notag \quad \overset{(iii)}{\le}  2\sqrt{1-\rho^2} + 2\theta \frac{\sqrt{n}}{\|x_1\|} \frac{1 - \rho}{\rho} + 2\theta\sqrt{n}\frac{|\|y_1\| - \|x_1\||}{\|x_1\|\|y_1\|\rho}\\
    &  \quad= 2\sqrt{1+\rho}\sqrt{1-\rho} + 2\theta \frac{\sqrt{n}}{\|x_1\|} \frac{1 - \rho}{\rho} + 2\theta\sqrt{n}\frac{|\|y_1\| - \|x_1\||}{\|x_1\|\|y_1\|\rho}\\
    & \notag\quad \overset{(iv)}{\le}  2\sqrt{\frac{h_1}{\|x_1\|\|y_1\|}} +  2\theta \frac{\sqrt{n}}{\|x_1\|} \frac{h_1}{ 2\|x_1\|\|y_1\|-h_1} + 4\theta\frac{\sqrt{n}\sqrt{h_1}}{2\|x_1\|\|y_1\|-h_1} \\
    & \notag \quad\le  C \sqrt{\frac{h_1\log^2n }{n}} + C \theta \frac{h_1 \log^2n }{ n} + 4\theta C\sqrt{\frac{h_1 \log^2 n}{n}}\\
    & \notag \quad \le C(1+5\theta)\sqrt{\frac{h_1\log^2 n }{n}}\, , 
\end{align}
where $C>0$ denotes a positive absolute constant independent by the model but dependent on the data, in (iii) we have used Lemma \ref{lem:new_tech_lemm} and in (iv) we applied the inverse triangular inequality and the fact that $0\le \rho\le 1$ since $h_1= \cO(\frac{n}{\log^2 n})$. 
\end{enumerate}
Combining \eqref{label:fund1case} and \eqref{eq:secondbasiccase}, we conclude the proof of the base case, that is 
\begin{align*}
    \P \big[s_1(x_1,w)\neq s_1(y_1,w)\big] \le C(1+5\theta)\sqrt{\frac{h_1\log^2 n }{n}}\, ,
\end{align*}
for $n$ large enough such that
\begin{align*}
   e^{- \theta^2 \log^2 n} <  \sqrt{\frac{h_1\log^2 n }{n}}\, .
\end{align*}
Let us now inductively assume that, for all $t\leq T-1$, the following probability estimate holds
\begin{align}\label{eq:indass}
    \P[s_t(x)\neq s_t(y)]&\leq C(1+5\theta)\ t \sqrt{\frac{\overline{h}_t\log^2 n }{n}}\ \, .
\end{align}
Then, we have
\begin{align}\label{eq:conclusion}
    &\notag\P[s_T(x)\neq s_T(y)]\\
    & \quad =\P\left[s_T(x)\neq s_T(y), \, \sum^{T-1}_{t=1}s_t(x)\neq \sum^{T-1}_{t=1}s_t(y)\right]+\P\left[s_T(x)\neq s_T(y), \, \sum^{T-1}_{t=1}s_t(x)= \sum^{T-1}_{t=1}s_t(y)\right]\nonumber\\
    &\quad\leq \P\left[\sum^{T-1}_{t=1}s_t(x)\neq \sum^{T-1}_{t=1}s_t(y)\right] +\P\left[s_T(x)\neq s_T(y), \, \sum^{T-1}_{t=1}s_t(x)= \sum^{T-1}_{t=1}s_t(y)\right]\nonumber\\
    &\quad\leq \sum^{T-1}_{t=1}\P\left[s_t(x)\neq s_t(y)\right]+\P\left[s_T(x)\neq s_T(y), \, \sum^{T-1}_{t=1}s_t(x)= \sum^{T-1}_{t=1}s_t(y)\right]\\
    &\quad\notag\overset{(v)}{\leq} \sum^{T-1}_{t=1}C(1+5\theta)\ t\sqrt{\frac{\overline{h}_t}{n}} +\P\left[s_T(x)\neq s_T(y), \, \sum^{T-1}_{t=1}s_t(x)= \sum^{T-1}_{t=1}s_t(y)\right]\, ,
\end{align}
where in (v) we have used  \eqref{eq:indass}.
% , which shows that 
% %
% \[
% \P[s_1(x_1,\ldots,x_T)\neq s_1(x_1,\ldots,x_T)]\leq C_\theta\sqrt{\frac {h_1}{n}}.
% \]
% %
% Repeating the argument as before, it is clear that until \eqref{eq:induction1}, it is essentially analogous. 
It remains now to bound 
\begin{align*}
  & \P[s_T(x_1,\ldots,x_T)\neq s_T(y_1,\ldots,y_T),\,\overline{s}_{T-1}(x_1,\ldots,x_{T-1})=\overline{s}_{T-1}(y_1,\ldots,y_{T-1})] \\
  & \quad = \sum_{t=1}^T\P\left[s_T(x_1,\ldots,x_T)\neq s_T(y_1,\ldots,y_T),\,\overline{s}_{T-1}(x_1,\ldots,x_{T-1})=\overline{s}_{T-1}(y_1,\ldots,y_{T-1})=\frac{t}{T}\right]\, .
\end{align*}

%
%For that, we will use a slightly different strategy. 
First, let us define $\overline{x}_T=\frac 1 T\sum^T_{t=1}x_t$ and $\overline{y}_T=\frac 1 T\sum^T_{t=1}y_t$.
Notice that $\|\overline{x}_T-\overline{y}_T\|^2\leq \overline{h}_T$.
Now, we have
\begin{align}\label{eq:sTnesT}
    & \notag\Big\{s_T(x_1, \dots, x_T)\neq s_T(y_1, \dots, y_T),\,\overline{s}_{T-1}(x_1,\dots, x_{T-1})=\overline{s}_{T-1}(y_1,\dots, y_{T-1}))=\frac{t}{T}\Big\} \\
    &\notag\quad=\left\{\sign\left(w^\top \overline{x}_T-\theta \frac tT\right)\neq \sign\left(w^\top \overline{y}_T-\theta \frac tT\right)\right\}\\
    & \notag\quad =\left\{w^\top \overline{x}_T>\theta \frac tT,\,w^\top \overline{y}_T\leq\theta \frac tT\right\}\cup\left\{w^\top \overline{x}_T\leq\theta \frac tT,\,w^\top \overline{y}_T>\theta \frac tT\right\}\\
    &\quad=\left\{\sqrt n w^\top \frac{\overline{x}_T}{\|\overline{x}_T\|}>\sqrt n\theta \frac t{T\|\overline{x}_T\|},\,\sqrt n w^\top \frac{\overline{y}_T}{\|\overline{y}_T\|}\leq\sqrt n\theta \frac t{T\|\overline{y}_T\|}\right\}\\
    &\notag\qquad\cup\left\{\sqrt n w^\top \frac{\overline{x}_T}{\|\overline{x}_T\|}\leq \sqrt n\theta \frac t{T\|\overline{x}_T\|},\,\sqrt n w^\top \frac{\overline{y}_T}{\|\overline{y}_T\|}>\sqrt n\theta \frac t{T\|\overline{y}_T\|}\right\}.
\end{align}
Define $\overline{X}=\sqrt n w^\top \frac{\overline{x}_T}{\|\overline{x}_T\|}$ and $\overline{Y}=\sqrt n w^\top \frac{\overline{y}_T}{\|\overline{y}_T\|}$ and note that both are standard Gaussians. Their correlation is 
\begin{align}\label{eq:rhogbound}
\rho&=\left\langle \frac{\overline{x}_T}{\|\overline{x}_T\|},\frac{\overline{y}_T}{\|\overline{y}_T\|}\right\rangle\nonumber\\
&=\frac12\left( \frac{\|\overline{x}_T\|}{\|\overline{y}_T\|}+\frac{\|\overline{y}_T\|}{\|\overline{x}_T\|} - \frac{\|\overline{x}_T-\overline{y}_T\|^2}{\|\overline{x}_T\|\|\overline{y}_T\|}\right)
\\ 
&\notag\geq \frac12\left( \frac{\|\overline{x}_T\|}{\|\overline{y}_T\|}+\frac{\|\overline{y}_T\|}{\|\overline{x}_T\|} - \frac{\overline{h}_T}{\|\overline{x}_T\|\|\overline{y}_T\|}\right) \\
& \notag \geq \left(1- \frac{\overline{h}_T}{2\|\oxT\|\|\oyT\|}\right)\, .
\end{align}
Using \eqref{eq:sTnesT}, we get that 
\begin{align}\label{eq:remainstocontrol}
& \P\left[s_T(x_1,\ldots,x_T)\neq s_T(y_1,\ldots,y_T),\,\overline{s}_{T-1}(x_1,\ldots,x_{T-1})=\overline{s}_{T-1}(y_1,\ldots,y_{T-1})=\frac{t}{T}\right]
\\
& \notag \qquad = \P\left[\overline{X}>\sqrt n\theta \frac t{T\|\overline{x}_T\|},\,\overline{Y}\leq\sqrt n\theta \frac t{T\|\overline{y}_T\|}\right] + \P\left[\overline{X}\le\sqrt n\theta \frac t{T\|\overline{x}_T\|},\,\overline{Y}>\sqrt n\theta \frac t{T\|\overline{y}_T\|}\right]\, .
\end{align}
 Using Lemma \ref{lem:orders}, we can distinguish two cases
\begin{enumerate}
    \item In the case $\|\oyT\| = \|\oxT\| = \cO(\frac{\sqrt{n}}{\log n})$, we get 
    \begin{align}\label{eq:fund1gtcase}
        & \P \left[\overline{X}>\sqrt n\theta \frac t{T\|\overline{x}_T\|},\;\overline{Y}\leq \sqrt n\theta \frac t{T\|\overline{y}_T\|}\right] \le  \P \left[\overline{X}>\sqrt n\theta \frac t{T\|\overline{x}_T\|}\right] \\
        & \notag \qquad\overset{(i)}{\le} e^{- \theta^2t^2 \frac{n}{T^2\|\oxT\|^2}} = \cO\left(e^{- \frac{\theta^2t^2}{T^2} \log^2 n}\right)\\
        &\notag \P \left[\overline{X}\leq \sqrt n\theta \frac t{T\|\overline{x}_T\|},\;\overline{Y}>\sqrt n\theta \frac t{T\|\overline{y}_T\|}\right]\le\P \left[\overline{Y}>\sqrt n\theta \frac t{T\|\overline{y}_T\|}\right] \\
        & \notag \qquad\overset{(ii)}{\le} e^{- \theta^2t^2 \frac{n}{T^2\|\oyT\|^2}} = \cO\left(e^{- \frac{\theta^2 t^2}{T^2} \log^2 n}\right) \, ,
    \end{align}
    where in (i) and (ii) we used Gaussian tail bounds.
    Combining \eqref{eq:remainstocontrol} and \eqref{eq:fund1gtcase}, it holds
    \begin{align*}
        &\P\left[s_T(x_1,\ldots,x_T)\neq s_T(y_1,\ldots,y_T),\,\overline{s}_{T-1}(x_1,\ldots,x_{T-1})=\overline{s}_{T-1}(y_1,\ldots,y_{T-1})=\frac{t}{T}\right] \\
        & \quad= \cO\left(e^{- \frac{\theta^2 t^2}{T^2} \log^2 n}\right) \, .
    \end{align*}

    \item In the case $\|\oyT\| = \|\oxT\| = \omega(\frac{\sqrt{n}}{\log n})$, combining \eqref{eq:rhogbound} and \eqref{eq:remainstocontrol}, it holds
    \begin{align}\label{eq:lastfubc}
    & \notag\P \left[s_T(x_1,\ldots,x_T)\neq s_T(y_1,\ldots,y_T),\,\overline{s}_{T-1}(x_1,\ldots,x_{T-1})=\overline{s}_{T-1}(y_1,\ldots,y_{T-1})=\frac{t}{T}\right] \\
    &\notag \quad = \P\left[\overline{X}>\sqrt n\theta \frac t{T\|\overline{x}_T\|},\,\overline{Y}\leq\sqrt n\theta \frac t{T\|\overline{y}_T\|}\right] + \P\left[\overline{X}\le\sqrt n\theta \frac t{T\|\overline{x}_T\|},\,\overline{Y}>\sqrt n\theta \frac t{T\|\overline{y}_T\|}\right] \\
    & \notag\quad \overset{(iii)}{\le}  2\sqrt{1-\rho^2} + 2\frac{\theta t}{T} \frac{\sqrt{n}}{\|\oxT\|} \frac{1 - \rho}{\rho} + 2\frac{\theta \, t}{T}\sqrt{n}\frac{|\|\oyT\| - \|\oxT\||}{\|\oxT\|\|\oyT\|\rho}\\
    & \quad \overset{(iv)}{\le}  2\sqrt{1+\rho}\sqrt{1-\rho} + 2\frac{\theta t}{T} \frac{\sqrt{n}}{\|\oxT\|} \frac{1 - \rho}{\rho} + 4\frac{\theta \, t}{T}\sqrt{n}\frac{\sqrt{\overline{h}_T}}{\|\oxT\|\|\oyT\|\rho}\\
    & \notag\quad \le 2\sqrt{\frac{\overline{h}_T}{\|\oxT\|\|\oyT\|}} + 2\theta \frac{\sqrt{n}}{\|\oxT\|} \frac{\overline{h}_T}{ 2\|\oxT\|\|\oyT\|-\overline{h}_T}  + 4\frac{\theta \, t}{T} \frac{\sqrt{n}}{\|\oxT\|} \frac{\sqrt{n}\sqrt{\overline{h}_T}}{2\|\oxT\|\|\oyT\|-\overline{h}_T} \\
    & \notag \le C \sqrt{\frac{\overline{h}_T \log^2 n}{n}} + \frac{\theta t}{T} C\frac{\overline{h}_T \log^2 n}{n} + 4\frac{\theta \, t}{T}C \sqrt{\frac{\overline{h}_T \log^2 n}{n}} \\
    & \notag\le C\left(1+5\frac{\theta \, t}{T}\right)\sqrt{\frac{\overline{h}_T \log^2 n}{n}}
\end{align}
where in (iii) we have used Lemma \ref{lem:new_tech_lemm} and in (iv) we applied the inverse triangular inequality. 
\end{enumerate}   
Combining \eqref{eq:remainstocontrol}, \eqref{eq:fund1gtcase} and \eqref{eq:lastfubc}, we conclude that 
\begin{align*}
    & \P[s_T(x_1,\ldots,x_T)\neq s_T(y_1,\ldots,y_T),\,\overline{s}_{T-1}(x_1,\ldots,x_{T-1})=\overline{s}_{T-1}(y_1,\ldots,y_{T-1})] \\
  & \quad = \sum_{t=1}^T\P\left[s_T(x_1,\ldots,x_T)\neq s_T(y_1,\ldots,y_T),\,\overline{s}_{T-1}(x_1,\ldots,x_{T-1})=\overline{s}_{T-1}(y_1,\ldots,y_{T-1})=\frac{t}{T}\right]\\
  & \quad \le \sum_{t=0}^T C\left(1+5\frac{\theta \, t}{T}\right)\sqrt{\frac{\overline{h}_T \log^2 n}{n}} \\
  &\quad \le C T(1+5\theta)\sqrt{\frac{\overline{h}_T \log^2 n}{n}}
\end{align*}
Finally, using \eqref{eq:conclusion}, we conclude that 
\begin{align*}
    \P[s_T(x)\neq s_T(y)] & \le \sum^{T-1}_{t=1}C(1+5\theta)\ t\sqrt{\frac{\overline{h}_T}{n}} +\P\left[s_T(x)\neq s_T(y), \, \sum^{T-1}_{t=1}s_t(x)= \sum^{T-1}_{t=1}s_t(y)\right] \\
    & \le C(1+\theta) \left(T + \sum^{T-1}_{t=1} t  \right)\ \sqrt{\frac{\overline{h}_T\log^2 n}{n}}\\
    & \le C(1+\theta)T^2\sqrt{\frac{\overline{h}_T\log^2 n}{n}}.
\end{align*}
The proof concludes by noticing that $\overline{h}_T=\lfloor\overline{\nu}_T\rfloor$.
For static inputs, the same argument applies, but notice that there the use of Lemma \ref{lem:orders} is no longer necessary, since $x_1=x_2=\ldots=x_t=x$, which implies that $\overline{x}_t=x$ and, since $x\in\{-1,1\}^n$, we have $\|x\|=\sqrt{n}$ (the same is true for $y\in\{-1,1\}^n$).

\subsection{Proof of Theorem \ref{thm:multneustab}}\label{app:proof_multneustab}
Let us denote $\oxT = \frac{1}{T} \sum_{k=1}^T x_k$, $\oyT = \frac{1}{T} \sum_{k=1}^T y_k$  and $d_H(x_t, y_t) = \lfloor \nu_t n\rfloor$ with $\nu_t \in [0,1]$. Let us define $\nu := \max_{t\in [T]} \ \nu_t$ and assume $\nu = \cO(\frac{1}{\sqrt{n}})$. We follow a similar strategy to that of \cite{jonasson2023noise}. Notice that, for all $l\in[L]$, the probability that $s^{(l)}_{T, i}(x_1, \dots, x_T, W)$ and $s^{(l)}_{T, i}(y_1, \dots, y_T, W)$ differs depends only on the number of neurons that have at least one disagreement at any time at layer $l-1$ and not on where they disagree. We define $D^{(l)}_T$ as the number of neurons at layer $l$ that have at least one disagreement at any time, that is $D^{(l)}_T: = \frac{1}{2}\sum_{i=1}^{n} \max_{k\in[T]}
|s^{(l)}_{k, i}(x_1, \dots, x_k, W) - s^{(l)}_{k, i}(y_1, \dots, y_k, W)|$. With this, $D^{(l)}_T$ is a Markov chain with $n+1$ states, where $D^{(l)}_T = 0$ is an absorbing state. More precisely, let us denote with
$M_i := \frac{1}{2}\max_{k\in[T]}|s^{(l)}_{k, i}(x_1, \dots, x_k, W) - s^{(l)}_{k, i}(y_1, \dots, y_k, W)|$. Notice that $M_1, \dots, M_n$ are independent random variables with value in $\{0,1\}$. In particular, exploiting Theorem \ref{thm:singleneuronbound}, it holds that  
\begin{align*}
    \P[M_i = 1] & = \P\left[\max_{k\in[T]}|s^{(l)}_{k, i}(x_1, \dots, x_k, W) - s^{(l)}_{k, i}(y_1, \dots, y_k, W)|=2\right]\\
    &\le T^3 C(1+\theta)\sqrt{\frac{\overline{h}^{(l-1)}_T\log^2 n}{n}} \\
    & \le T^3 C(1+\theta)\sqrt{\frac{D_T^{(l-1)}\log^2 n}{n}} 
\end{align*}
for all $i\in [n]$. Therefore, $D^{(l)}_T| D^{(l-1)}_T = k$ is a Binomial random variable upper bounded in the stochastic sense by 
\begin{align}\label{eq:distr}
    \tilde{D}^{(l)}_T| D^{(l-1)}_T = k \sim \Bin\left(n, T^3 C(1+\theta)\sqrt{\frac{k\log^2 n}{n}}\right)\, ,
\end{align}
thanks to Lemma \ref{lem:stochdom}. We invite the reader to see Definition \ref{def:stochdom} for a rigorous definition of \textit{smaller in the usual stochastic order}. We write now $h_t = \lfloor\nu_t n\rfloor$ for some $\nu_t\in[0,1]$ and define $\nu = \max_{k\in[T]} \nu_t $ the maximum number of input disagreements. 

We proceed by induction over the depth dimension $l \in [L]$. 

\begin{itemize}
    \item Let us start from the base case $l=1$. We want to estimate
     \begin{align*}
        &\P_W\left(D^{(1)}_T \ge \nu^{1/4} n\log n \right)  = \sum_{k=1}^n \P_W\left(D^{(1)}_T \ge  \nu^{1/4} n\log n \ \Big|\ D^{(0)}_T = k\right) \P_W\left(D^{(0)}_T = k\right) \\
        & \quad\le \sum_{k=1}^{\lfloor\nu n\rfloor} \P_W\left(D^{(1)}_T \ge \nu^{1/4} n\log n \ \Big|\ D^{(0)}_T = k\right)\P_W\left(D^{(0)}_T = k\right) + \P_W\left(D^{(0)}_T > \lfloor\nu n\rfloor\right)\\
        & \quad\overset{(i)}{\le} \P_W\left(\tilde{D}^{(1)}_T \ge \nu^{1/4} n\log n \ \Big|\ D^{(0)}_T = \lfloor\nu n\rfloor\right) \, ,
    \end{align*}
    where in (i) we used the stochastic dominance and the monotonicity of the probability appearing in \eqref{eq:distr}.
    Now, combining Theorem \ref{thm:Chernoff} with $\e =1 $, which is admissible given that $  \frac{\nu^{-1/4}}{CT^3(1+\theta)} >2$ and $\nu\leq \frac1{\sqrt n}$ for $n$  large enough, we conclude
    \begin{align*}
        & \P_W\left(\tilde{D}^{(1)}_T(x,y) \ge   \nu^{1/4} n \log n\ \Big|\ D^{(0)}_T(x,y)  = \lfloor\nu n\rfloor\right) \le e^{-\frac13\sqrt{\nu}n} \, ,
    \end{align*}
    %for some constant $c$ independent of $n$ and $\nu$.
    \item Let us now assume by induction that
    \begin{equation}\label{eq:inductHP}
        \P_W\left(D^{(l)}_T \ge \nu^{\frac{1}{2^{2l}}} n \log n\right) \le l e^{-\frac13\nu^{\frac{1}{2^{2l-1}}}n} \, .
\end{equation}
    Then, we have
    \begin{align}\label{eq:upboundtild1}
        &\notag\P_W\Big(D^{(l+1)}_T \ge \nu^{1/2^{2(l+1)}} n
        \log n\Big)  \\
        & \notag \quad= \sum_{k=1}^n \P_W\left(D^{(l+1)}_T \ge \nu^{1/2^{2(l+1)}}  n\log n\ |\ D^{(l)}_T = k\right) \P_W\left(D^{(l)}_T = k\right) \\
        & \notag \quad\le \sum_{k=1}^{\left\lfloor  \nu^{\frac{1}{2^{2l}}} n\log n\right\rfloor} \P_W\left(D^{(l+1)}_T \ge \nu^{1/2^{2(l+1)}}  n\log n \ |\ D^{(l)}_T = k\right)\P_W\left(D^{(l)}_T = k\right) +\\ & \qquad + \P_W\left(D^{(l)}_T >\left\lfloor  \nu^{\frac{1}{2^{2l}}} n\log n\right\rfloor\right)\\
         &\notag \quad\le \sum_{k=1}^{\left\lfloor  \nu^{\frac{1}{2^{2l}}} n\log n\right\rfloor} \P_W\left(\tilde{D}^{(l+1)}_T \ge  \nu^{\frac{1}{2^{2(l+1)}}} n\log n  \ | \ D^{(l)}_T = k\right)\P_W\left(D^{(l)}_T = k\right) + l e^{-\frac13\nu^{\frac{1}{2^{2l-1}}}n}\\
        & \notag\quad\le \P_W\left(\tilde{D}^{(l+1)}_T \ge   \nu^{\frac{1}{2^{2(l+1)}}} n \log n\ | \ D^{(l)}_T =  \left\lfloor  \nu^{\frac{1}{2^{2l}}} n \log n\right\rfloor\right) + l e^{-\frac13\nu^{\frac{1}{2^{2l-1}}}n} \, .
    \end{align}
    Now, using Theorem \ref{thm:Chernoff} with $\e =1$, which is admissible because $  \frac{\nu^{-\frac{1}{2^{2(l+1)}}}}{\sqrt{\log n}T^3C(1+\theta)} >2$, and $\nu\leq \frac1{\sqrt n}$, for $n$ large enough, we conclude that 
    \begin{align}\label{eq:condupbound}
        & \P_W\left(\tilde{D}^{(l+1)}_T(x,y) \ge   \nu^{\frac{1}{2^{2(l+1)}}} n \log n \ | \ D^{(l-1)}_T(x,y) =  \left\lfloor  \nu^{\frac{1}{2^{2l}}} n \log n\right\rfloor\right) \le l e^{-c\nu^{\frac{1}{2^{2l +1}}} n  }\, .
    \end{align}
 Hence, combining \eqref{eq:upboundtild1} and \eqref{eq:condupbound}, we conclude that
     \[
     \P_W\left(\tilde{D}^{(l)}_T(x,y) \ge   \nu^{\frac{1}{2^{2(l+1)}}} n \log n \right) \le l e^{-c\nu^{\frac{1}{2^{2l +1}}}n } \, .
     \]
\end{itemize}
Now, if $L=1$, we can apply directly Theorem \ref{thm:singleneuronbound} obtaining that 
\begin{align*}
& \P_W\Big(f^{1,T}(x,W) \ne f^{1,T}(y,W)\Big) \\
& \quad =  \P_W \left(\arg\max \, \sum_{t=1}^T s^{(1)}_{t,i}\!\left((x_t)_{t\in[T]}, W\right) \ne \arg\max \, \sum_{t=1}^T s^{(1)}_{t,i}\!\left((y_t)_{t\in[T]}, W\right)\right) \\
& \quad \le \P_W \left(\max_{k\in[T], i\in[n_L]} |s^{(1)}_{k,i} \left((x_t)_{t\in[T]}, W\right) -  s^{(1)}_{k,i}\!\left((y_t)_{t\in[T]}, W\right)| >0\right)\\
& \quad \le n_L T^3 C(1+\theta)\log n\sqrt{\nu}.%\frac{\log n}{n^{1/4}}  \, . 
\end{align*}
If $L\ge 2$, using Theorem \ref{thm:singleneuronbound}, we conclude that
\begin{align*}
& \P_W\Big(f^{L,T}(x,W) \ne f^{L,T}(y,W)\Big) \\
& \quad \le \P_W \left(\max_{k\in[T], i\in[n_L]} |s^{(L)}_{k,i} \left((x_t)_{t\in[T]}, W\right) -  s^{(L)}_{k,i}\!\left((y_t)_{t\in[T]}, W\right)| >0\right)\\
& \quad\le \P_W \left(\max_{k\in[T], i\in[n_L]} |s^{(L)}_{k,i} \left((x_t)_{t\in[T]}, W\right) -  s^{(L)}_{k,i}\!\left((y_t)_{t\in[T]}, W\right)| \ \Big|\ D^{(L-1)} \le \lfloor \nu^{\frac{1}{2^{2L}}} n \log n \rfloor\right) \\
& \qquad + (L-1) e^{-c\nu^{\frac{1}{2^{2L -1}}}n} \\
& \quad \le n_L T^4 C(1+\theta) \nu^{\frac1{2^{2L+1}}}\log^{3/2}{n} +(L-1) e^{-c\nu^{\frac{1}{2^{2L -1}}}n},%\frac{\log^{3/2} n}{n^{\frac{1}{2(L+1)}}} + (L-1) e^{-c\nu^{\frac{1}{2^{2L -1}}}n}\, ,
\end{align*}
which concludes the proof.

\subsection{Proof of Corollary \ref{cor:ens_bound}}\label{app:proof_ens_bound}
Let $x,y\in\{-1,1\}^n$ be static inputs. Define $N(\xi)=\frac12\sum^n_{i=1}(\xi_i+1)$. Then, we have for $x\sim\operatorname{Unif}(\{-1,1\}^n)$ and $\xi\sim \operatorname{Rad
}(1-\nu)$
\begin{align*}
    &\operatorname{\bf{ENS}}_{\nu}\left(\{f^{L,T}(\cdot,W)\}_{W\sim\cN(0,I_d)}\right)\\
    &\qquad =\E_{x,\xi}\left[\P_{W}\left(f^{L,T}(x,W)\neq f^{L,T}(x\odot\xi,W)\right)\right]\\
    &\qquad=\E_{x}\left[\P_{W}\left(f^{L,T}(x,W)\neq f^{L,T}(x\odot\xi,W)\Big|N(\xi)\leq\sqrt n\right)\P_\xi(N(\xi)\leq \sqrt{n})\right]\\
    &\qquad\quad + \E_{x}\left[\P_{W}\left(f^{L,T}(x,W)\neq f^{L,T}(x\odot\xi,W)\Big|N(\xi)> \sqrt n\right)\P_\xi(N(\xi)>\sqrt{n})\right]\\
    &\qquad \leq C_{T,\theta}{\nu'}^{\tfrac{1}{2^{2L+1}}}\log^{3/2}{n} + (L-1) e^{-c{\nu'}^{\frac{1}{2^{2L -1}}}n}\, + e^{-\frac14\sqrt{n}}.
\end{align*}
In the last line we used Theorem \ref{thm:multneustab} with $\nu=\nu'$, which applies because $N(\xi)\leq \lfloor\sqrt{n}\rfloor$ is equivalent with $d_H(x,x\odot\xi)=\lfloor\sqrt{n}\rfloor$, and $\nu'\leq \frac1{\sqrt n}$ (by assumption). We also used that 
\[
\P_\xi(N(\xi)>\lfloor\sqrt{n}\rfloor)\leq e^{-\frac14\sqrt{n}},
\]
which follows from applying the Chernoff bound (in Theorem \ref{thm:Chernoff}) to the random variable $N(\xi)\sim \operatorname{Bin}(n,\nu')$. The statement about the expected degree concentration follows applying Lemma \ref{lem:expec_degree}, using the bound on the expected noise sensitivity above.

\section{Additional Experiments}\label{app:add_experiments}

In this appendix, we report additional experiments that complement the results presented in the main text. These include further evaluations of noise sensitivity and analyses of (s)IF spiking neural networks under different settings.

\paragraph{Single (s)LIF Neuron.} We present additional experiments on (s)LIF spiking neuron.  These results complement the main text by providing additional empirical support for the claims made in the experiment section. In Figure \ref{fig:ENS_app_3}, we report results for the neuron with threshold $\theta = 0$. In this case, we observe that the neuron either fires at all times or does not fire at all. This is consistent with the experiments, as the sensitivity remains constant over time.

\begin{figure}[htbp!]
    \centering

    \begin{subfigure}[b]{0.45\textwidth}
        \centering
        \includegraphics[width=\textwidth]{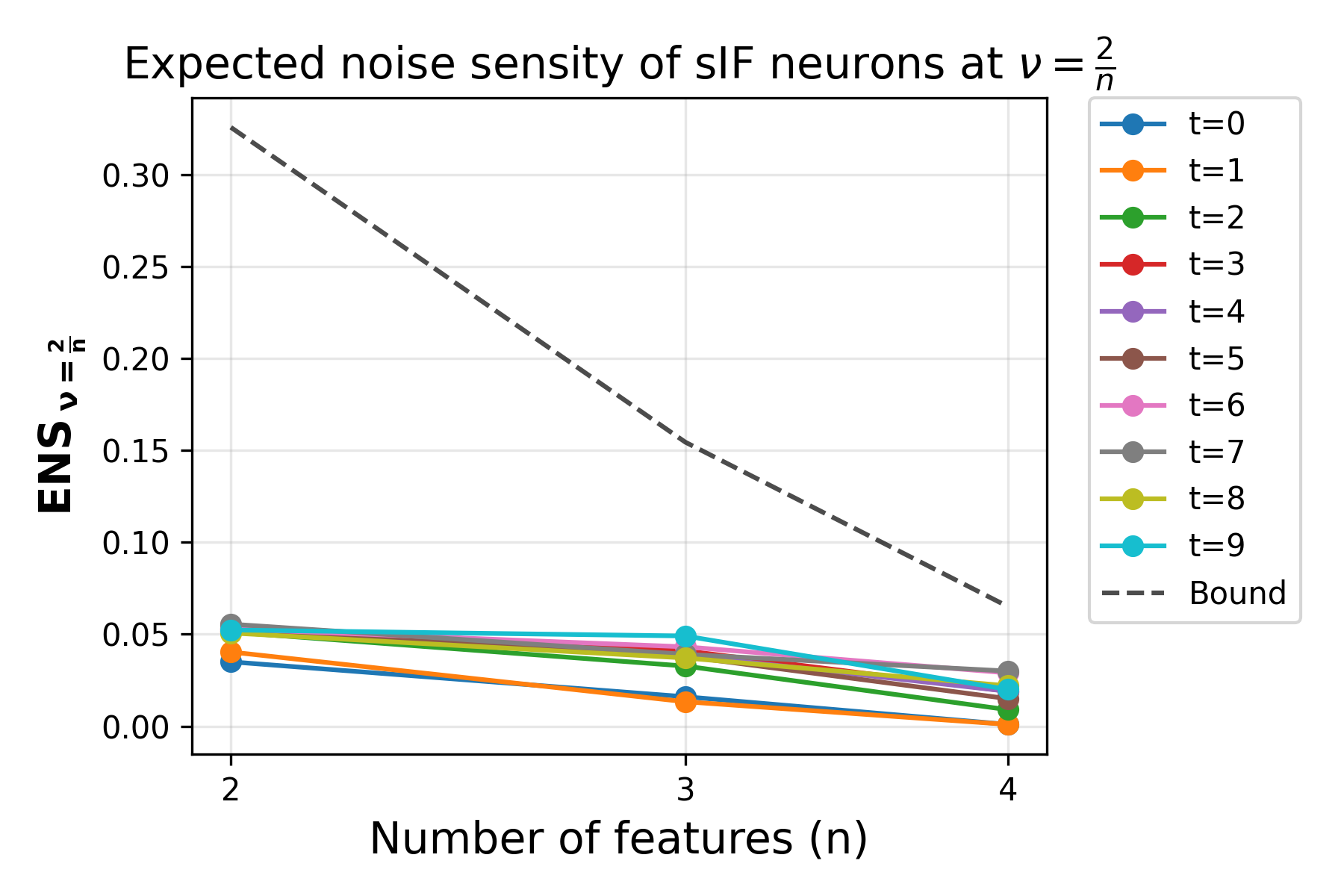}
        \caption{sIF single neuron (fixed2)}
        \label{fig:sIFneu2n}
    \end{subfigure}
    \hfill
    \begin{subfigure}[b]{0.45\textwidth}
        \centering
        \includegraphics[width=\textwidth]{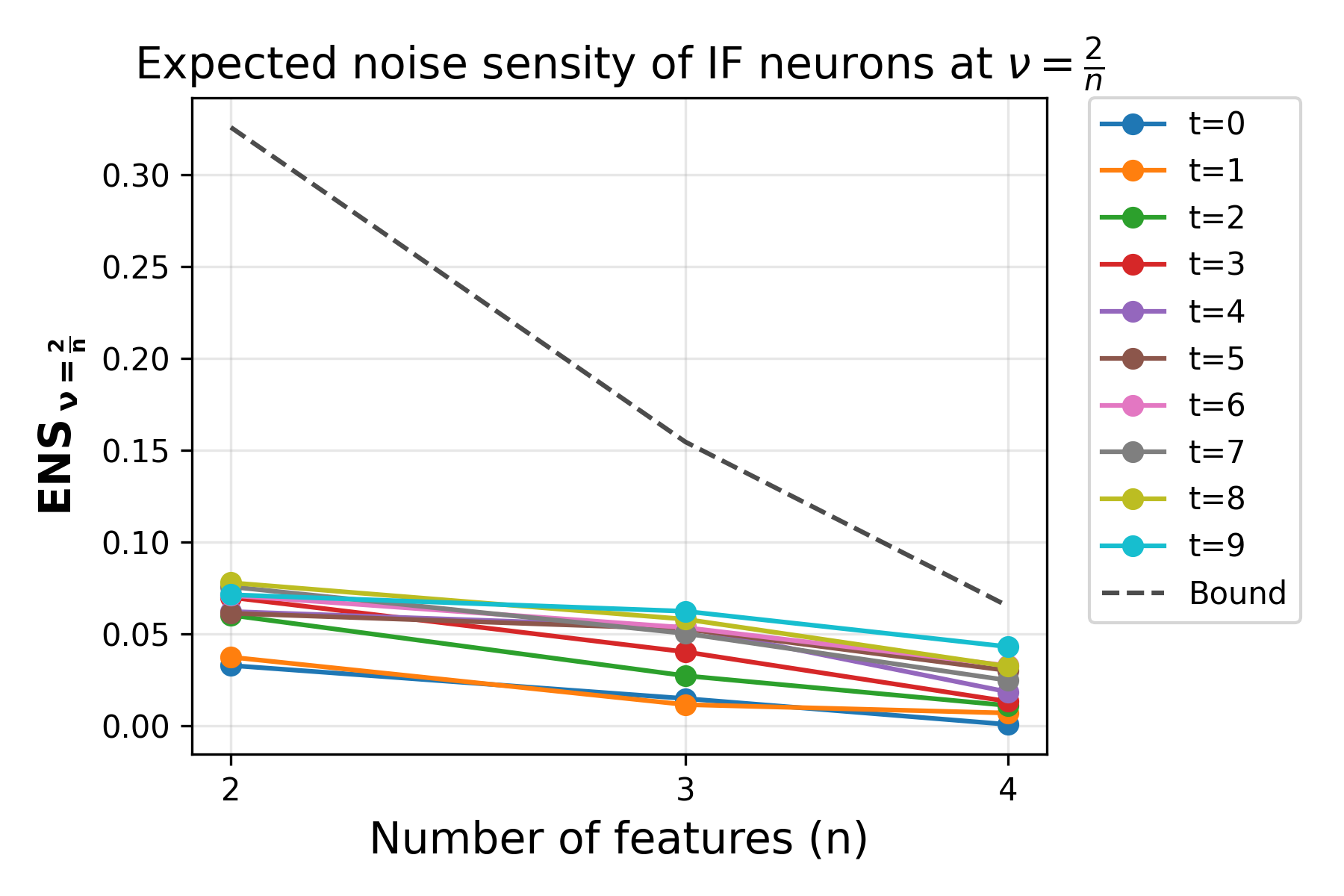}
        \caption{IF single neuron (fixed2)}
        \label{fig:IFneu2n}
    \end{subfigure}

    \vspace{0.5em} % Optional vertical space between rows

    \begin{subfigure}[b]{0.45\textwidth}
        \centering
        \includegraphics[width=\textwidth]{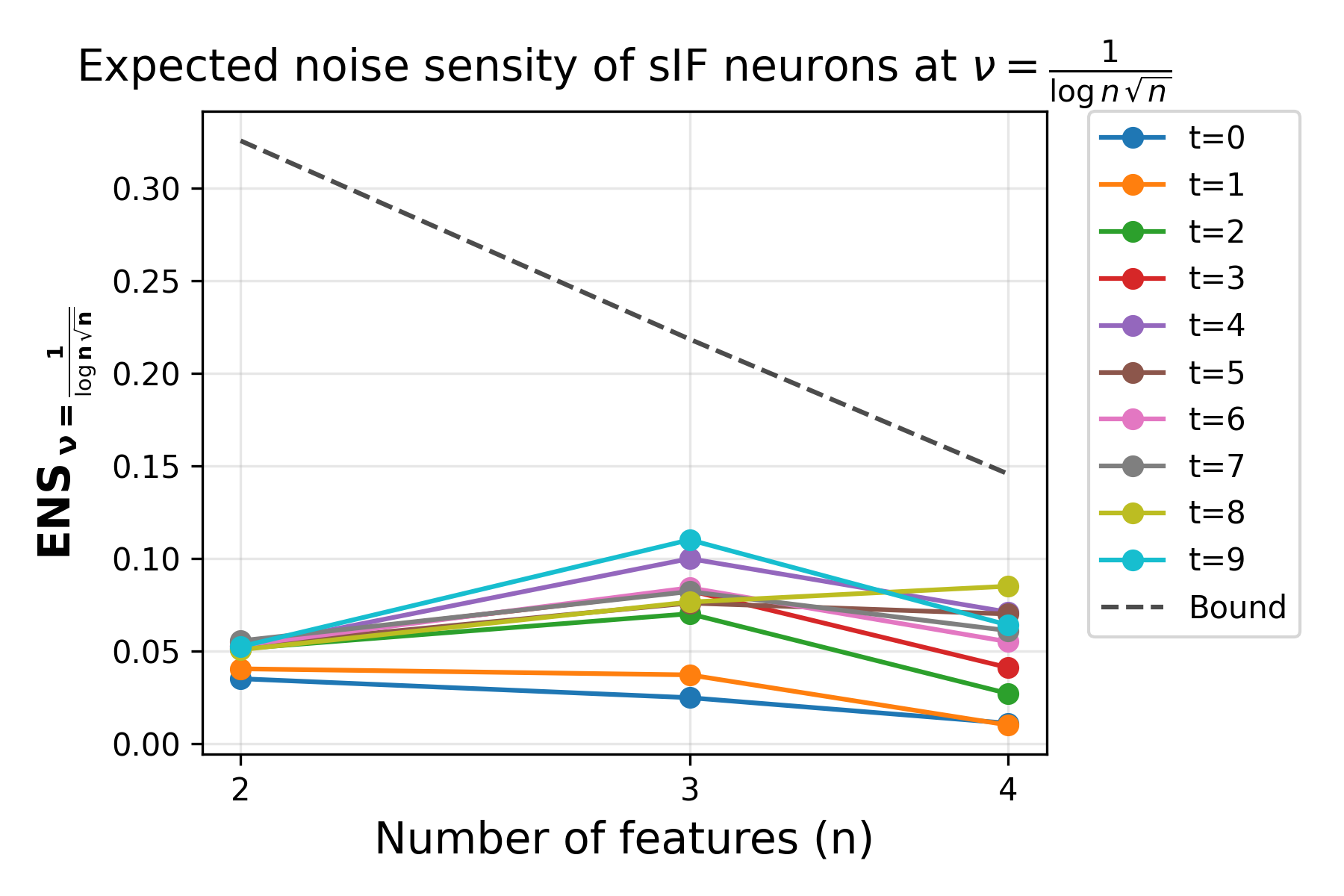}
        \caption{IF single neuron ($\sqrt{n}\log n$)}
        \label{fig:sIFneu2nlogn}
    \end{subfigure}
    \hfill
    \begin{subfigure}[b]{0.45\textwidth}
        \centering
        \includegraphics[width=\textwidth]{images/plot_single_neuron/plot_IF_singleneuronsqrtn.png}
        \caption{sIF single neuron ($\sqrt{n}$)}
        \label{fig:IFneu2nlogn}
    \end{subfigure}

    \caption{Noise sensitivity $\operatorname{\bf{ENS}}_{2/n}$and $\operatorname{\bf{ENS}}_{1/(\sqrt{n}\log n)}$ for different input dimensions $n$ for sIF and IF neurons with $\theta = 0.5$ and $T=10$. \textbf{(a)} $\operatorname{\bf{ENS}}_{2/n}$ for sIF neuron. \textbf{(b)} $\operatorname{\bf{ENS}}_{2/n}$ for IF neuron; \textbf{(c)} $\operatorname{\bf{ENS}}_{1/(\sqrt{n}\log n)}$ for sIF neuron. \textbf{(d)} $\operatorname{\bf{ENS}}_{1/(\sqrt{n}\log n)}$ for IF neuron.}
    \label{fig:single-neuron-sensitivity}
\end{figure}

\begin{figure}[htbp]

    \centering

    \begin{subfigure}[b]{0.45\textwidth}
        \centering
        \includegraphics[width=\textwidth]{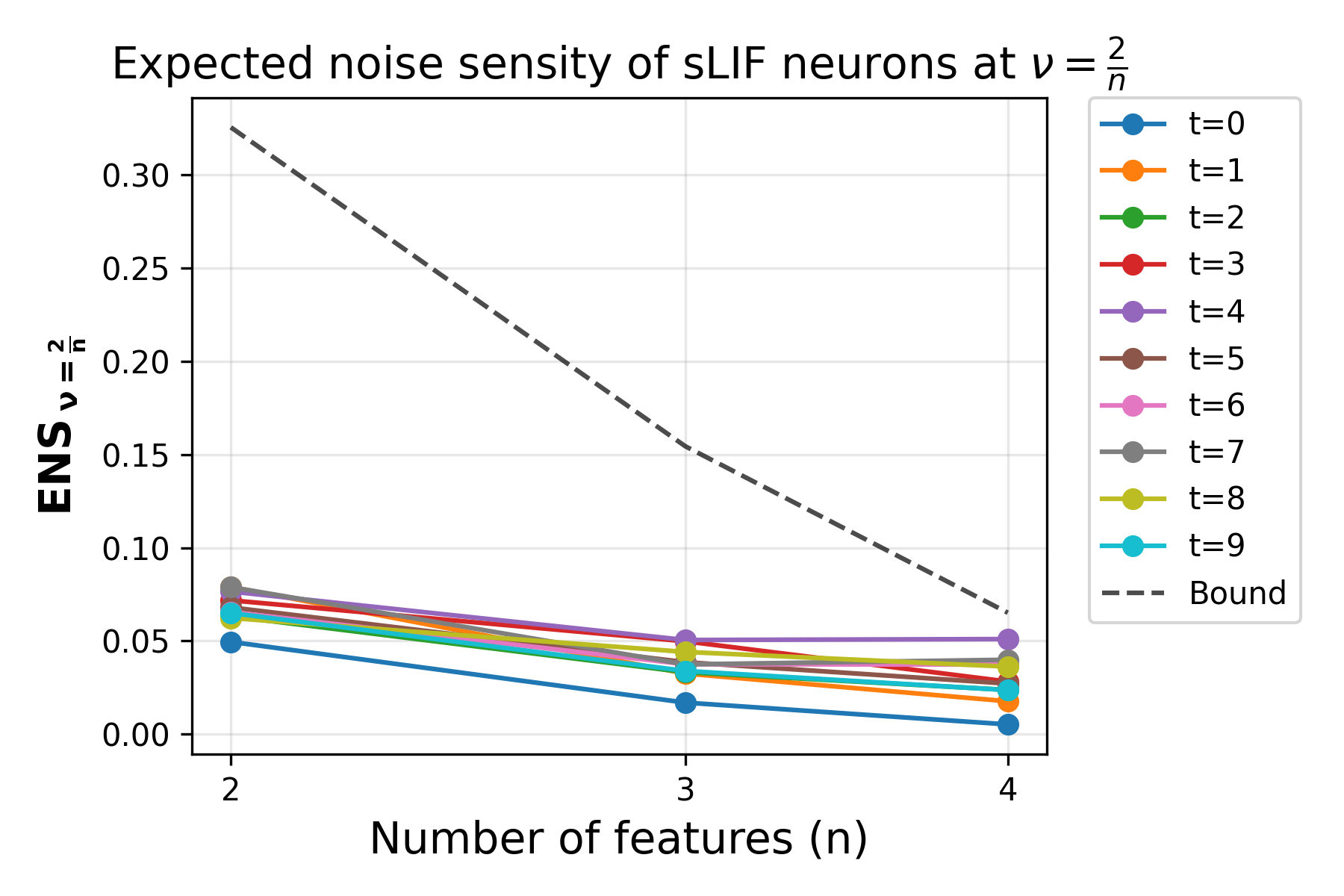}
        \caption{}
        \label{fig:sLIFsens2}
    \end{subfigure}
    \hfill
    \begin{subfigure}[b]{0.45\textwidth}
        \centering
        \includegraphics[width=\textwidth]{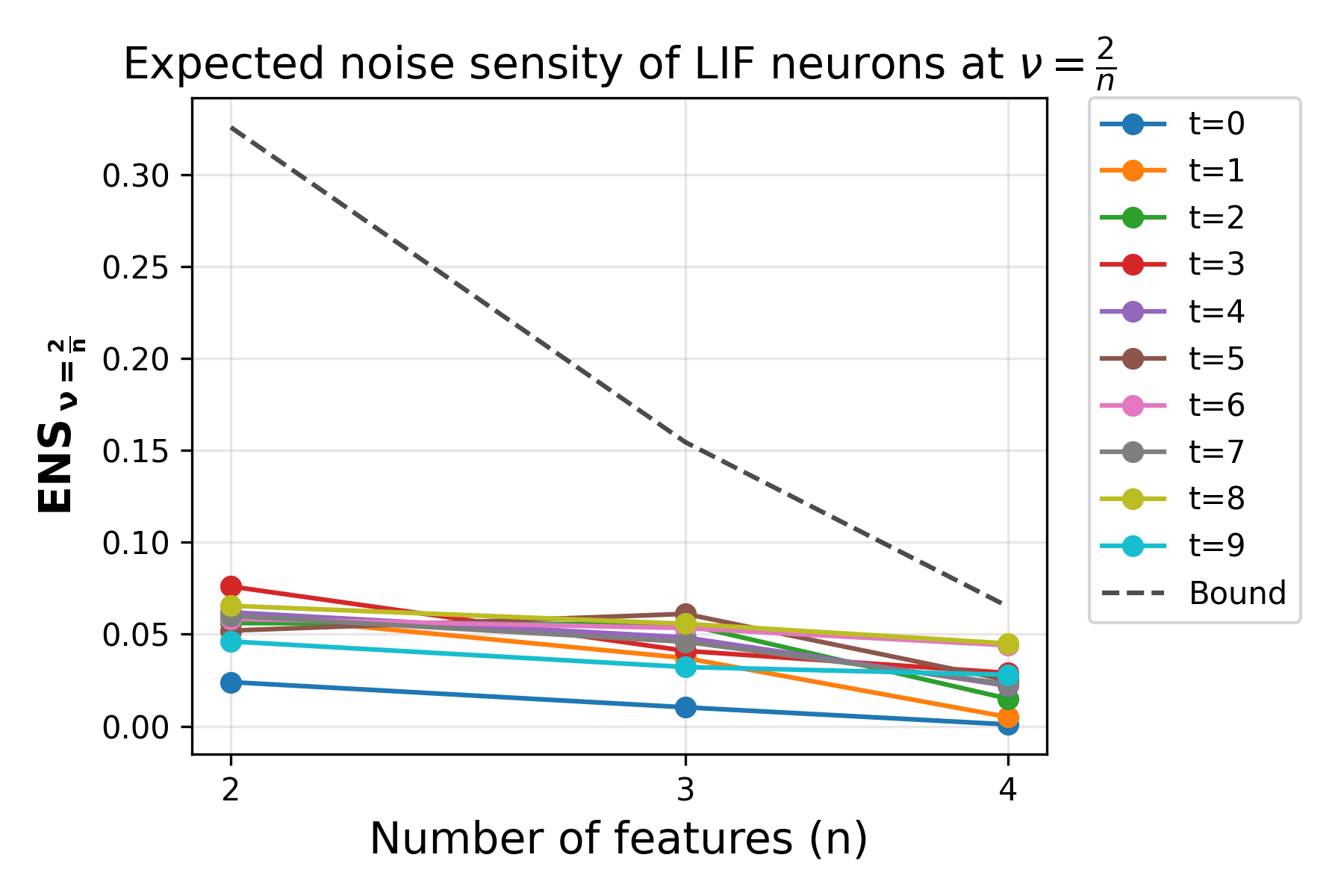}
        \caption{}
        \label{fig:LIFsens2}
    \end{subfigure}

    \label{fig:ESNsingle_neuron}
        \begin{subfigure}[b]{0.45\textwidth}
        \centering
        \includegraphics[width=\textwidth]{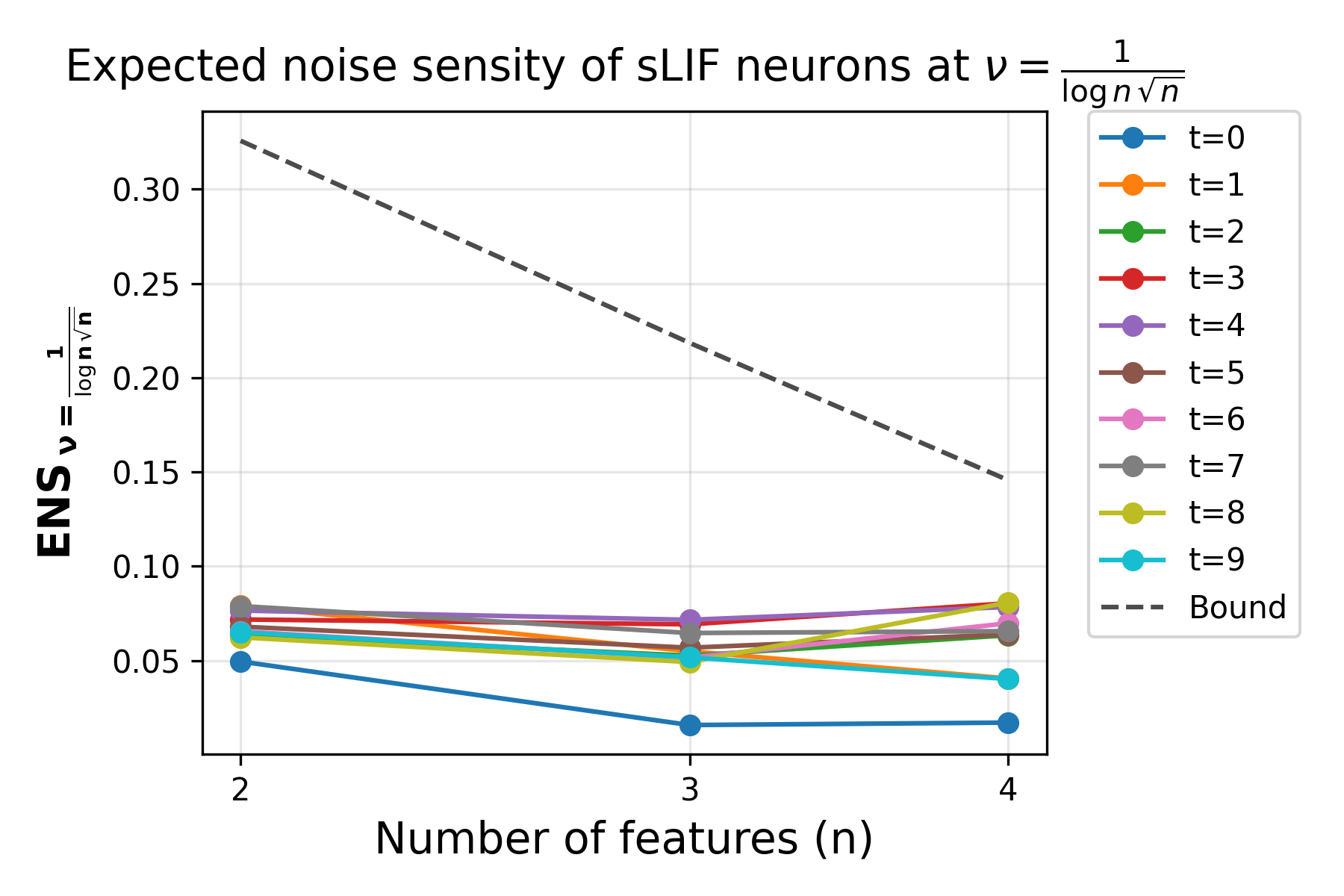}
        \caption{}
        \label{fig:sLIFsenssqrtnlogn}
    \end{subfigure}
    \hfill
    \begin{subfigure}[b]{0.45\textwidth}
        \centering
        \includegraphics[width=\textwidth]{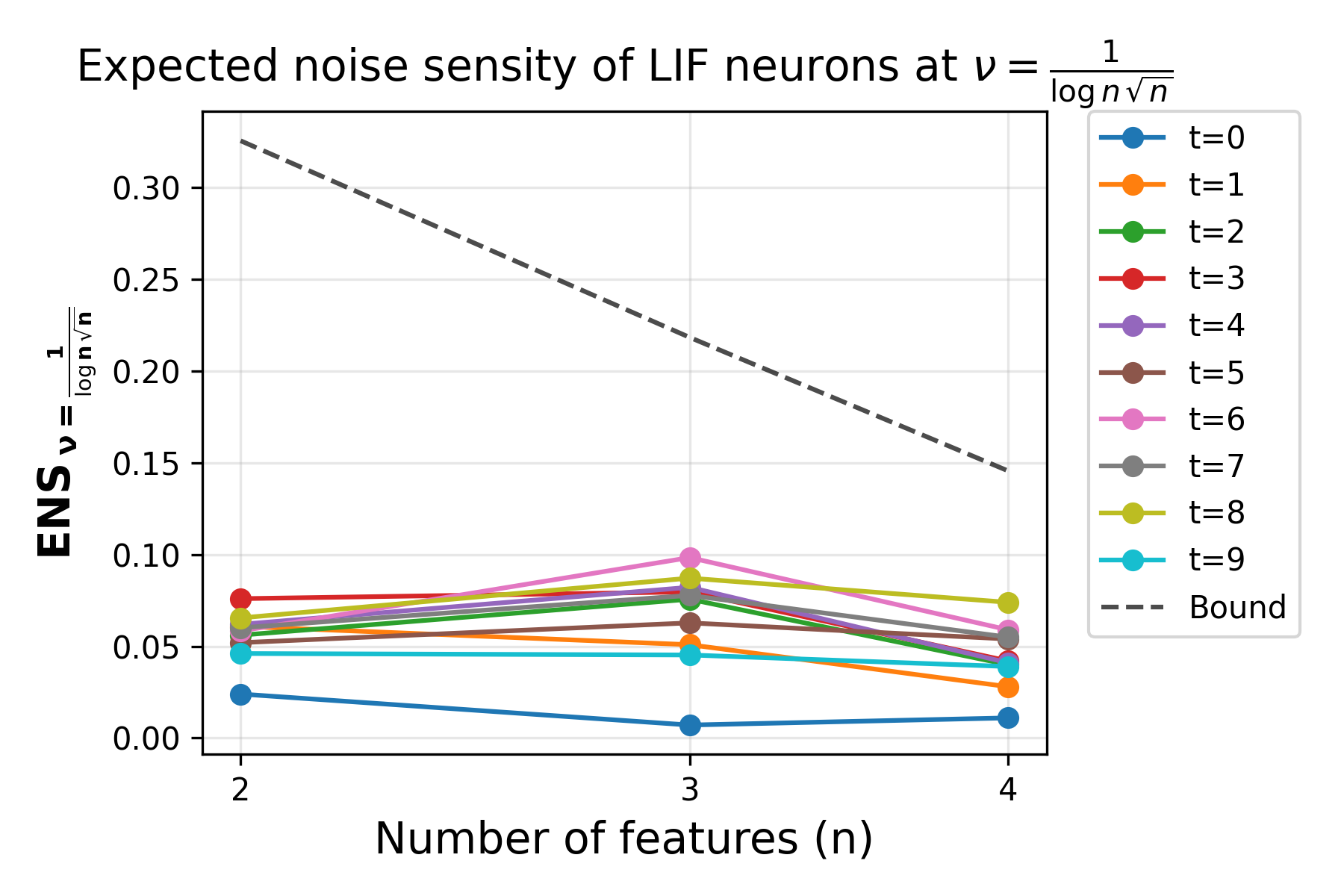}
        \caption{}
        \label{fig:LIFsenssqrtnlogn}
    \end{subfigure}

    \label{fig:ESNsingle_neuron}
        \begin{subfigure}[b]{0.45\textwidth}
        \centering
        \includegraphics[width=\textwidth]{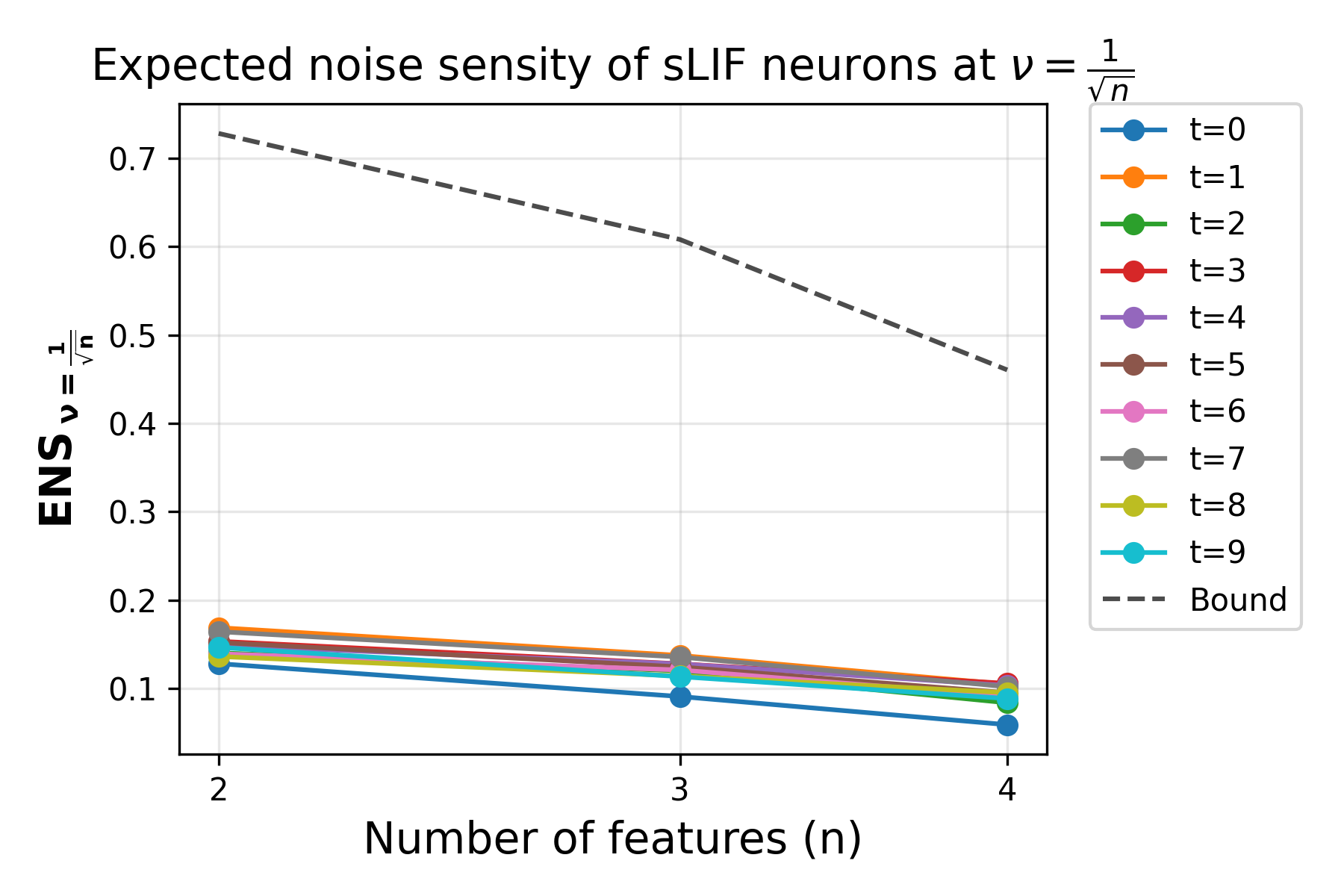}
        \caption{}
        \label{fig:sLIFsenssqrtn}
    \end{subfigure}
    \hfill
    \begin{subfigure}[b]{0.45\textwidth}
        \centering
        \includegraphics[width=\textwidth]{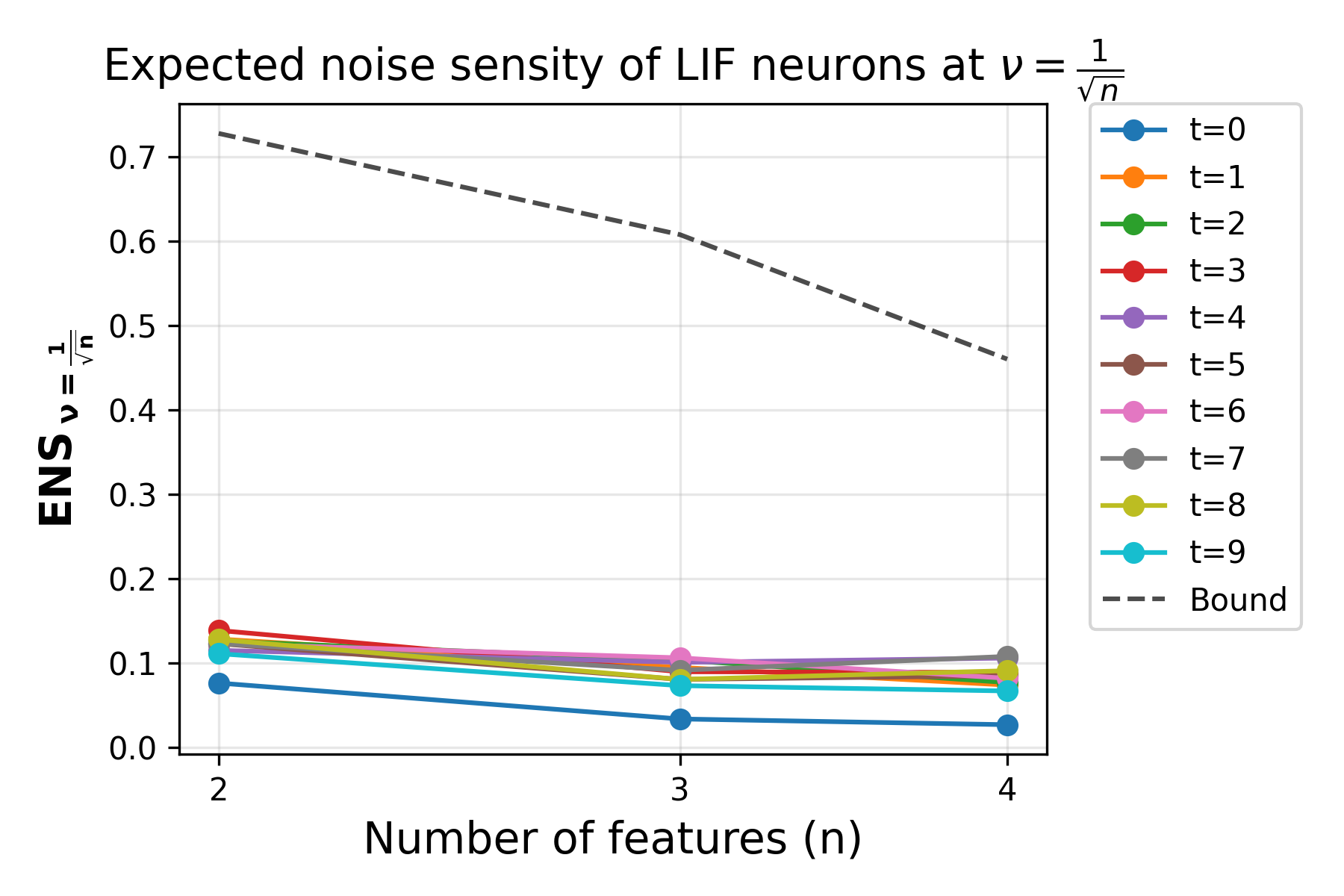}
        \caption{}
        \label{fig:LIFsenssqrtn}
    \end{subfigure}

    \caption{Noise sensitivity $\operatorname{\bf{ENS}}_{2/n}$and $\operatorname{\bf{ENS}}_{1/(\sqrt{n}\log n)}$ for different input dimensions $n$ for sLIF and LIF neurons with $\theta = 0.5, T=10$ and $\beta=0.5$. \textbf{(a)} $\operatorname{\bf{ENS}}_{2/n}$ for sLIF neuron. \textbf{(b)} $\operatorname{\bf{ENS}}_{2/n}$ for LIF neuron; \textbf{(c)} $\operatorname{\bf{ENS}}_{1/(\sqrt{n}\log n)}$ for sLIF neuron. \textbf{(d)} $\operatorname{\bf{ENS}}_{1/(\sqrt{n}\log n)}$ for LIF neuron.}
    \label{fig:isakjdoikljfciowds}
\end{figure}

\begin{figure}[htbp]

    \centering

    \begin{subfigure}[b]{0.45\textwidth}
        \centering
        \includegraphics[width=\textwidth]{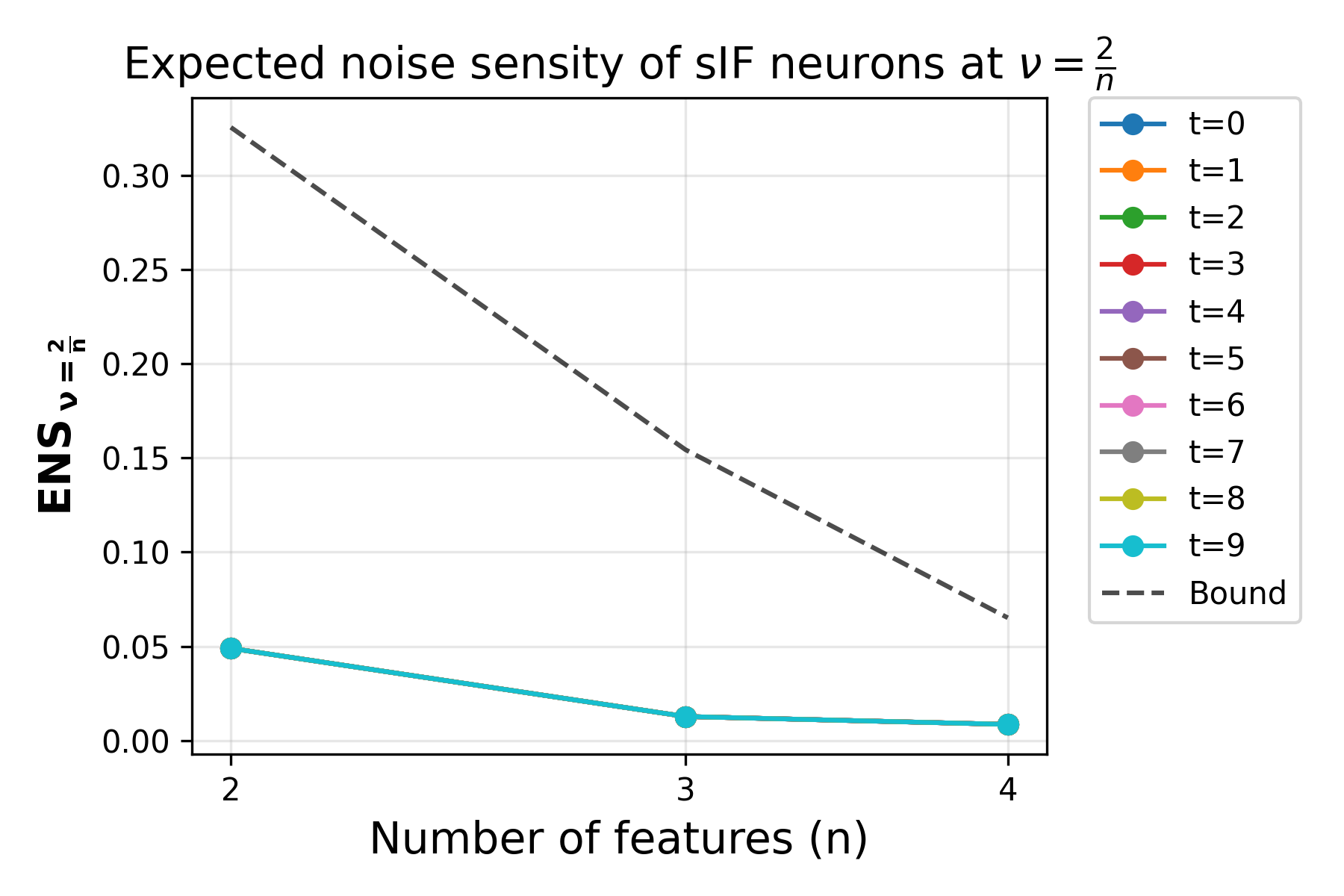}
        \caption{}
        \label{fig:fig1}
    \end{subfigure}
    \hfill
    \begin{subfigure}[b]{0.45\textwidth}
        \centering
        \includegraphics[width=\textwidth]{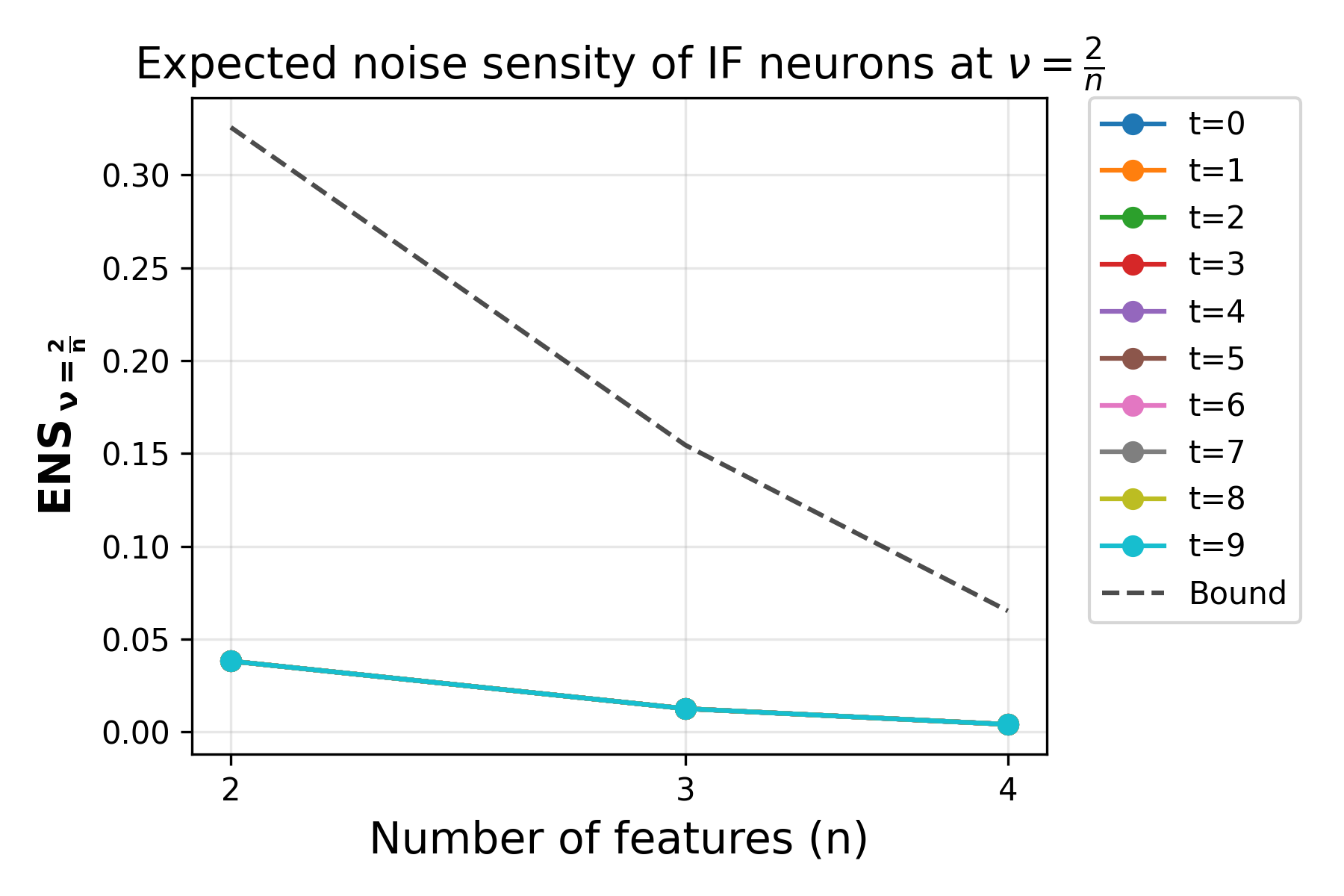}
        \caption{}
        \label{fig:fig2}
    \end{subfigure}

    \label{fig:ESNsingle_neuron}
        \begin{subfigure}[b]{0.45\textwidth}
        \centering
        \includegraphics[width=\textwidth]{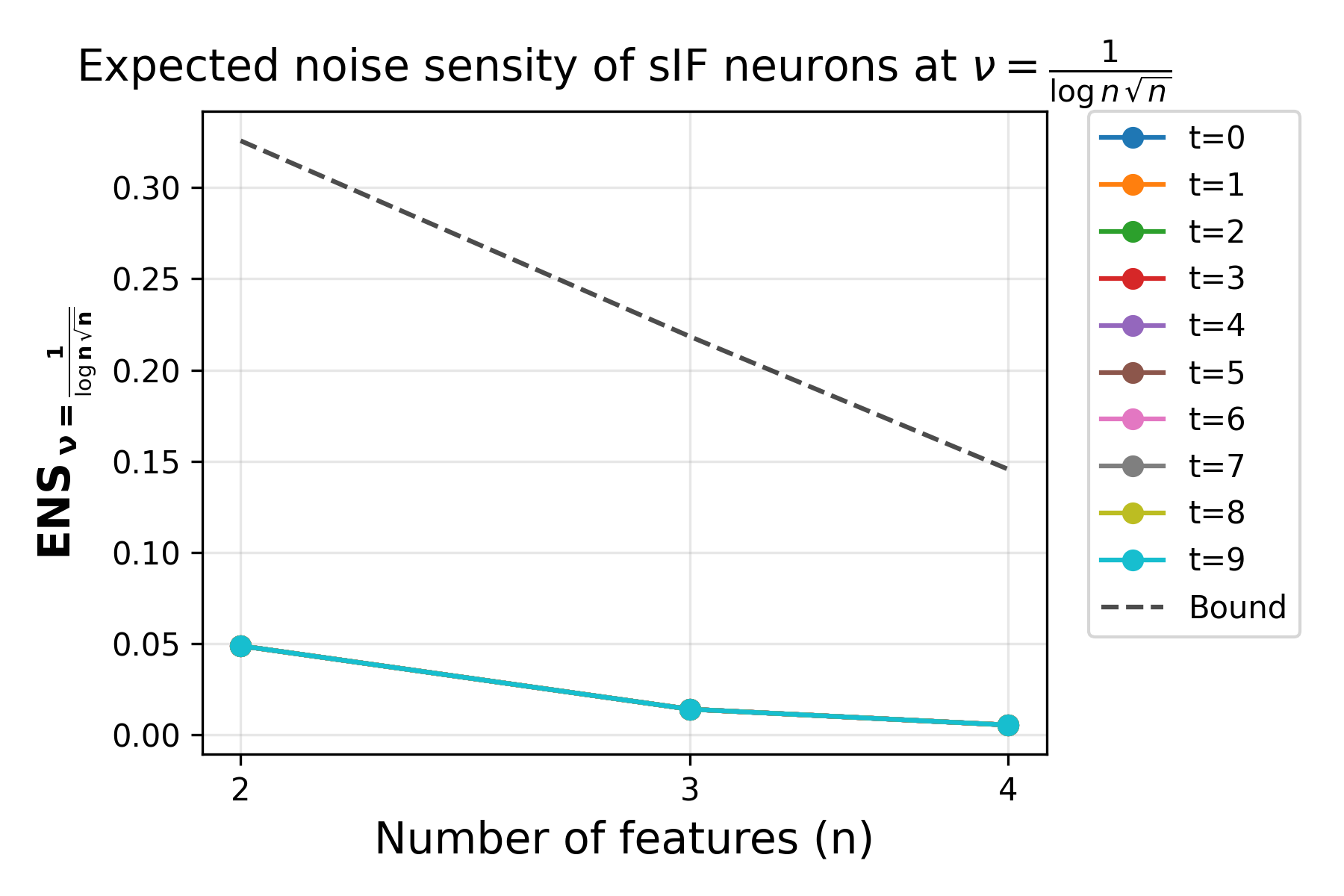}
        \caption{}
        \label{fig:fig1}
    \end{subfigure}
    \hfill
    \begin{subfigure}[b]{0.45\textwidth}
        \centering
        \includegraphics[width=\textwidth]{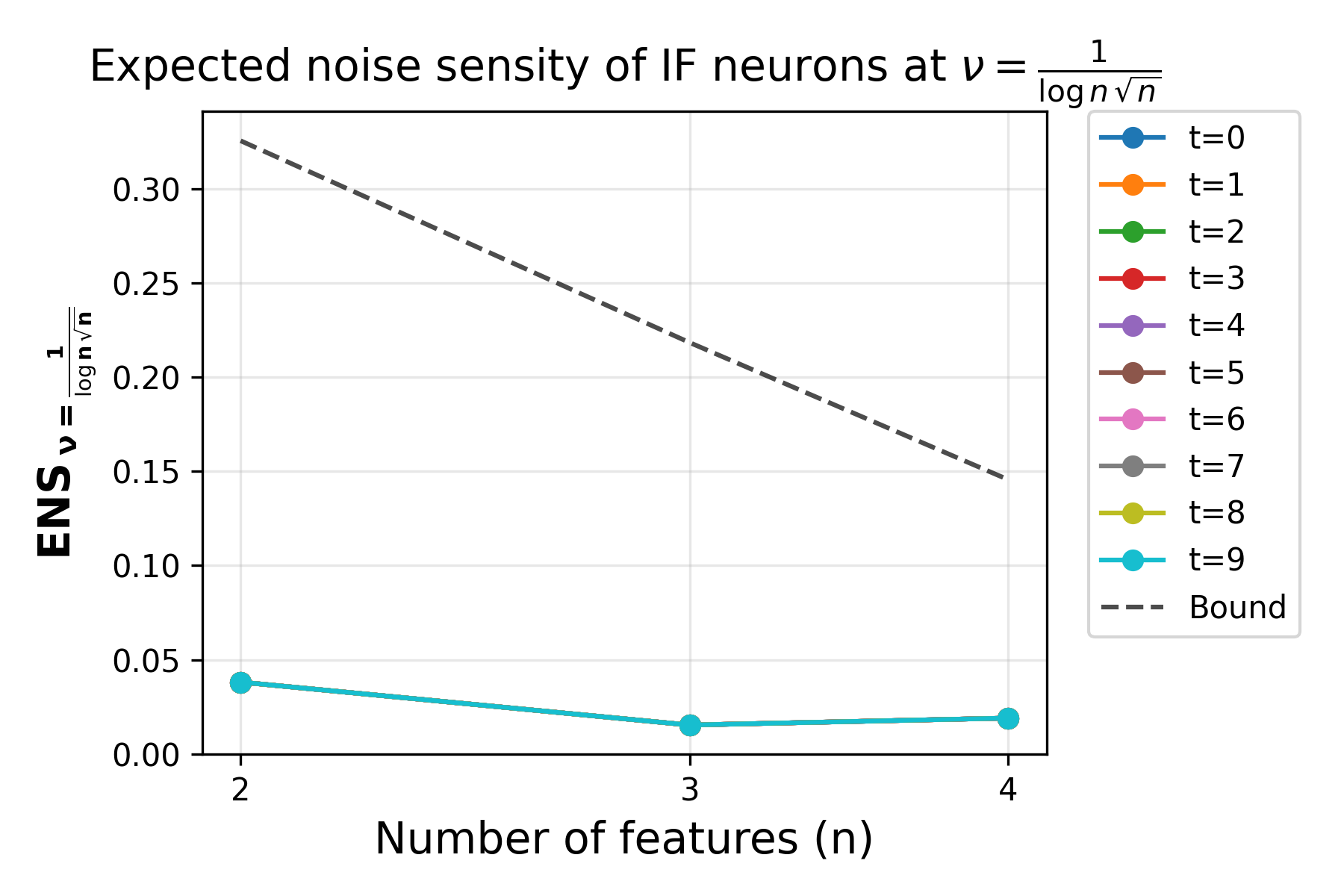}
        \caption{}
        \label{fig:fig2}
    \end{subfigure}

    \label{fig:ESNsingle_neuron}
        \begin{subfigure}[b]{0.45\textwidth}
        \centering
        \includegraphics[width=\textwidth]{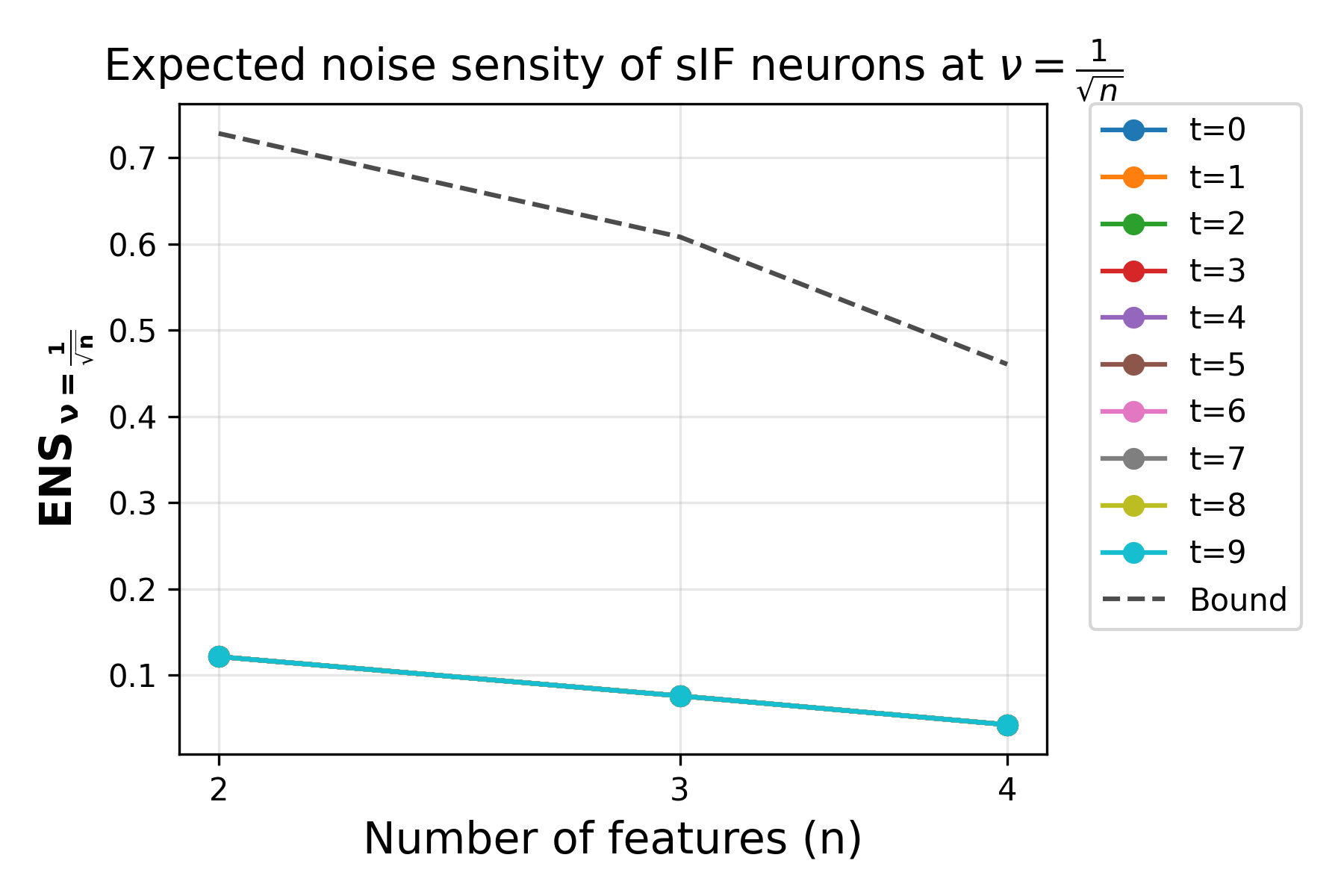}
        \caption{}
        \label{fig:fig1}
    \end{subfigure}
    \hfill
    \begin{subfigure}[b]{0.45\textwidth}
        \centering
        \includegraphics[width=\textwidth]{images/plot_single_neuron_theta0/plot_IF_singleneuronsqrtnlogn.png}
        \caption{}
        \label{fig:fig2}
    \end{subfigure}

    \caption{Noise sensitivity $\operatorname{\bf{ENS}}_{2/n}$and $\operatorname{\bf{ENS}}_{1/(\sqrt{n}\log n)}$ for different input dimensions $n$ for sIF and IF neurons with $\theta = 0$ and $T=10$. \textbf{(a)} $\operatorname{\bf{ENS}}_{2/n}$ for sIF neuron. \textbf{(b)} $\operatorname{\bf{ENS}}_{2/n}$ for IF neuron; \textbf{(c)} $\operatorname{\bf{ENS}}_{1/(\sqrt{n}\log n)}$ for sIF neuron. \textbf{(d)} $\operatorname{\bf{ENS}}_{1/(\sqrt{n}\log n)}$ for IF neuron; \textbf{(e)} $\operatorname{\bf{ENS}}_{1/(\sqrt{n})}$ for sIF neuron. \textbf{(d)} $\operatorname{\bf{ENS}}_{1/(\sqrt{n})}$ for IF neuron.}
    \label{fig:ENS_app_3}
\end{figure}

\paragraph{Deep (s)IF SNN.} We present additional experiments on deep (s)IF spiking neural networks.  These results complement the main text by extending the analysis of noise sensitivity to multi-layer architectures.\\

\begin{figure}[htbp]
    \centering

    \begin{subfigure}[b]{0.45\textwidth}
        \centering
        \includegraphics[width=\textwidth]{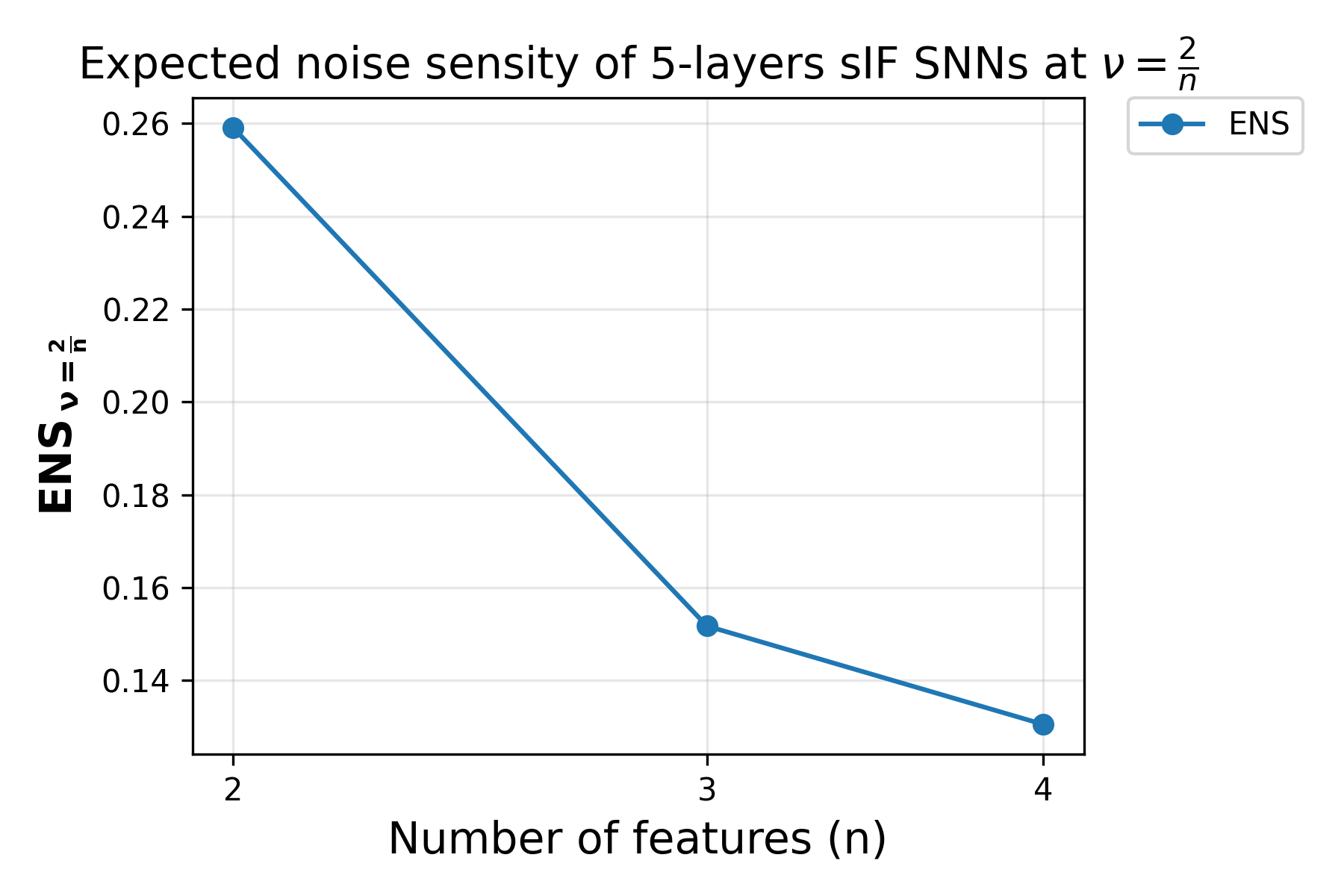}
        \caption{sIF single neuron (fixed2)}
        \label{fig:sif-fixed2}
    \end{subfigure}
    \hfill
    \begin{subfigure}[b]{0.45\textwidth}
        \centering
        \includegraphics[width=\textwidth]{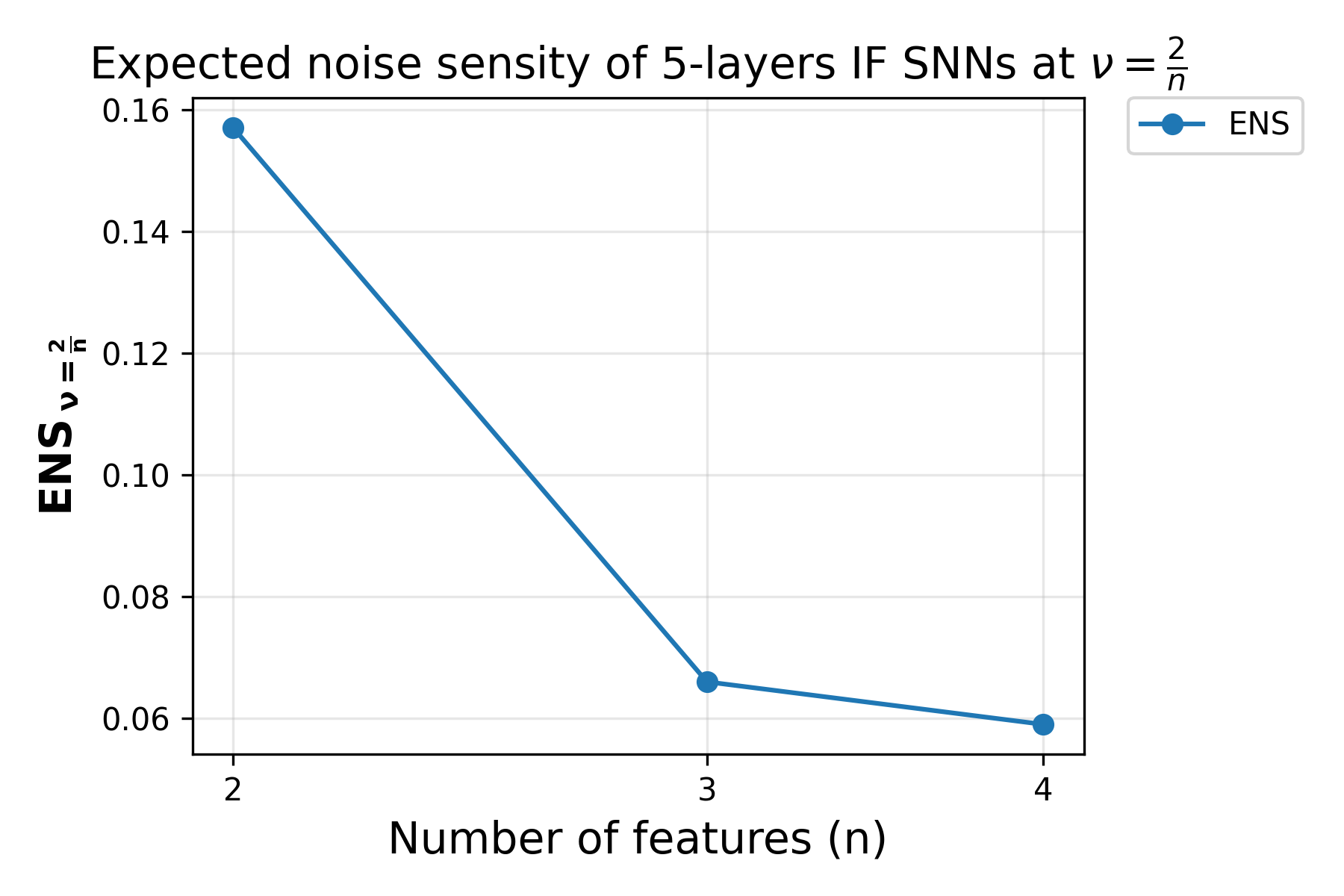}
        \caption{IF single neuron (fixed2)}
        \label{fig:if-fixed2}
    \end{subfigure}

    \vspace{0.5em} % Optional vertical space between rows

    \begin{subfigure}[b]{0.45\textwidth}
        \centering
        \includegraphics[width=\textwidth]{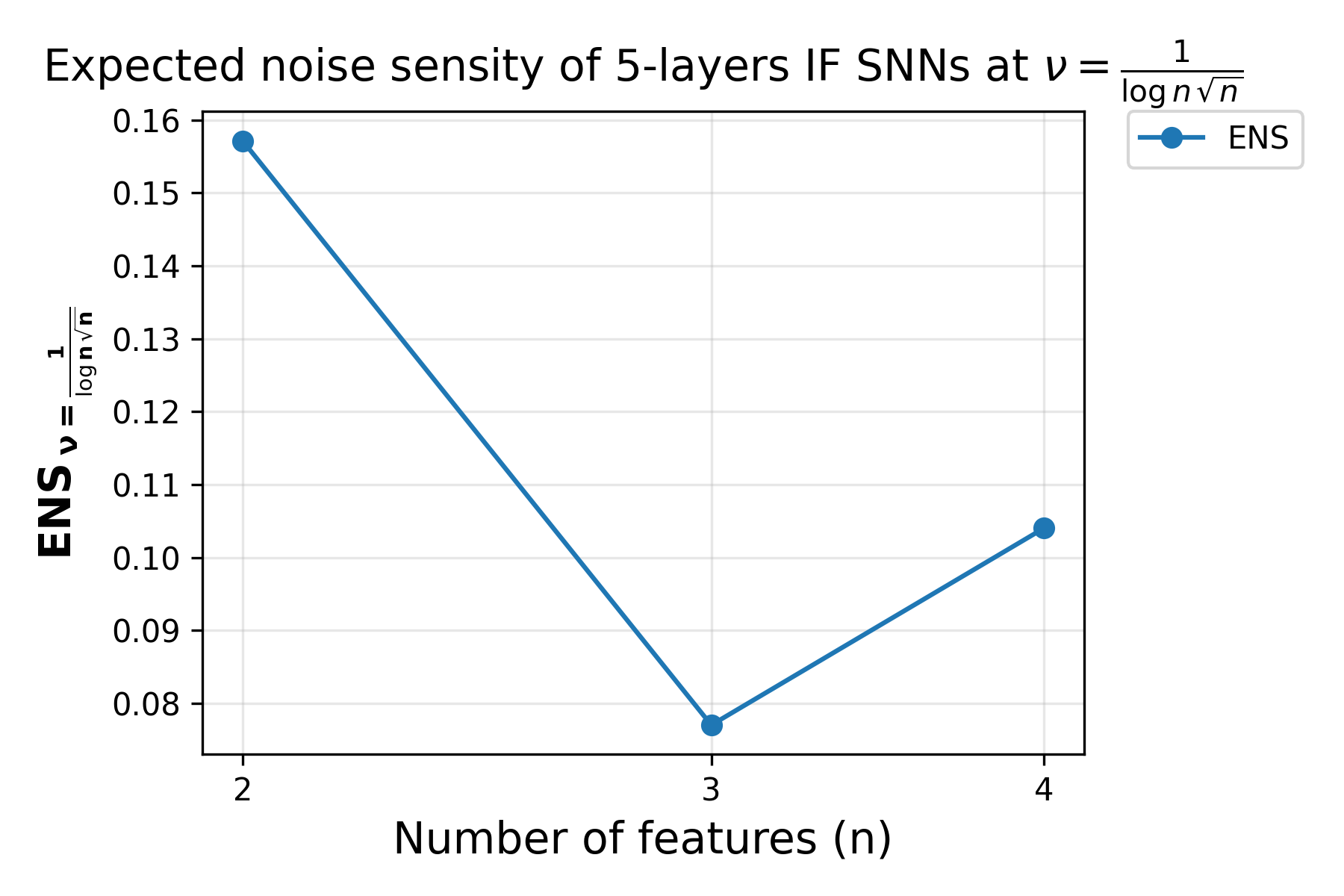}
        \caption{IF single neuron ($\sqrt{n}\log n$)}
        \label{fig:if-sqrtnlogn}
    \end{subfigure}
    \hfill
    \begin{subfigure}[b]{0.45\textwidth}
        \centering
        \includegraphics[width=\textwidth]{images/plots_deep_l10/plot_sIF_singleneuronsqrtn.png}
        \caption{sIF single neuron ($\sqrt{n}$)}
        \label{fig:sif-sqrtn}
    \end{subfigure}

    \caption{Noise sensitivity $\operatorname{\bf{ENS}}_{1/\sqrt{n}}$ for different input dimensions $n$ for 5-layers sIF and IF neural networks with $\theta = 0.5$ and $T=10$. \textbf{(a)} $\operatorname{\bf{ENS}}_{2/n}$ for 5-layers sIF neural network (log-scale x-axis); \textbf{(b)} $\operatorname{\bf{ENS}}_{1/(\sqrt{n}\log n)}$ for 5-layers sIF neural network (log-scale x-axis).}
\end{figure}

\end{document}